\def\1{\bm{1}}
\DeclareMathAlphabet{\mathsfit}{\encodingdefault}{\sfdefault}{m}{sl}
\SetMathAlphabet{\mathsfit}{bold}{\encodingdefault}{\sfdefault}{bx}{n}
\definecolor{darkorange}{RGB}{255, 140, 0}
\definecolor{blueviolet}{RGB}{138, 43, 226}
\definecolor{cornflowerblue}{RGB}{100, 149, 237}
\definecolor{dimgray}{RGB}{105, 105, 105}
\begin{document}

\pagestyle{fancy}
\thispagestyle{empty}
\rhead{ \textit{ }} 

\fancyhead[LO]{}

\title{Sample and Map from a Single Convex Potential:\\ Generation using Conjugate Moment Measures}

\author{
  Nina Vesseron \\
  CREST-ENSAE, IP Paris \\
  \texttt{nina.vesseron@ensae.fr} \\
   \And
  Louis Béthune \\
  Apple \\
  \texttt{l\_bethune@apple.com} \\
   \AND
   Marco Cuturi \\
   CREST-ENSAE, Apple \\
 \texttt{cuturi@apple.com} \\
}
\maketitle

\begin{abstract}
The canonical approach in generative modeling is to split model fitting into two blocks: define first \textit{how} to sample noise (e.g. Gaussian) and choose next \textit{what} to do with it (e.g. using a single map or flows). We explore in this work an alternative route that \textit{ties} sampling and mapping. We find inspiration in \textit{moment measures}~\citep{cordero2015moment}, a result that states that for any measure $\rho$, there exists a unique convex potential $u$ such that $\rho=\nabla u\,\sharp\,e^{-u}$. While this does seem to tie effectively sampling (from log-concave distribution $e^{-u}$) and action (pushing particles through $\nabla u$), we observe on simple examples (e.g., Gaussians or 1D distributions) that this choice is ill-suited for practical tasks. We study an alternative factorization, where $\rho$ is factorized as $\nabla w^*\,\sharp\,e^{-w}$, where $w^*$ is the convex conjugate of a convex potential $w$. We call this approach \textit{conjugate} moment measures, and show far more intuitive results on these examples. Because $\nabla w^*$ is the \citeauthor{monge1781memoire} map between the log-concave distribution $e^{-w}$ and $\rho$, we rely on optimal transport solvers to propose an algorithm to recover $w$ from samples of $\rho$, and parameterize $w$ as an input-convex neural network. We also address the common sampling scenario in which the density of $\rho$ is known only up to a normalizing constant, and propose an algorithm to learn $w$ in this setting.
\end{abstract}

\section{Introduction}
A decade after the introduction of GANs~\citep{goodfellow2014generative} and VAEs~\citep{Kingma2014}, the field of generative modeling has grown into one of the most important areas of research in machine learning.
Both canonical approaches follow the template of learning a transformation that can map random codes to meaningful data.
These transformations can be learned in supervised manner, as in the dimensionality reduction pipeline advocated in VAEs, or distributionally as advocated in the purely generative literature.
In the latter, the variety of such transforms has gained remarkably in both complexity, using increasingly creative inductive biases, from Neural-ODEs\citep{chen2018neural,grathwohl2018ffjord}, diffusions~\citep{song2020score,NEURIPS2020_4c5bcfec}, optimal transport~\citep{korotin2020wasserstein2} to flow-matching~\citep{lipman2023flow,tong2023improving,pooladian2023multisample}. 

\textbf{Sample first, move next.} 
All of these approaches are grounded, however, on choosing a standard Gaussian multivariate distribution to sample noise/codes. For both GANs and VAEs, that choice is usually made because of its simplicity, but also, in the case of diffusion models, because Gaussian distributions can be recovered quickly with suitable stochastic processes (Ornstein–Uhlenbeck). Other works have considered optimizing prior noise distributions~\citep{tomczak2018vae, lee2021metagmvae, liang2022gmmseg} but still do so in a two step approach where mappings are estimated independently.
We explore in this work a new generative paradigm where sampling and transforms are not treated separately, but seen as two facets of the same single convex potential.

\textbf{Tying sampling and action.} Our work looks to \textit{moment measures}~\citet{cordero2015moment, SANTAMBROGIO2016418} to find inspiration for a factorization that ties both sampling and action. ~\citep{cordero2015moment} proved that for any sufficiently regular probability measure $\rho$, one can find an essentially-continuous, convex potential $u$ such that $\rho = \nabla u\,\sharp\,e^{-u}$: essentially, \textit{any} probability distribution can be recovered by first sampling points from a log-concave distribution $e^{-u}$, and moving them with $\nabla u$ where, remarkably, the \textit{same} single convex potential $u$ is used twice.

\textbf{Our Contributions.} We build on the work of ~\citet{cordero2015moment} to propose an alternative path to build generative models, that we call \textit{conjugate moment measures}.
\begin{itemize}[leftmargin=.3cm,itemsep=.05cm,topsep=0cm,parsep=2pt]
\item After recalling in detail the contribution of ~\citet{cordero2015moment} in \S\ref{section:ot_background}, we argue in \S\ref{subsec:limitations} that the moment measure factorization may not be suitable for practical tasks. For instance, in the case of Gaussian distributions, we show that if $\rho$ is Gaussian with variance $\Sigma$, the corresponding log-concave distribution $e^{-u}$ is Gaussian with variance $\Sigma^{-1}$, amplifying beyond necessary any minor degeneracy in $\Sigma$. Similarly, for univariate $\rho$, we found that the spread of $e^{-u}$ is inversely proportional to that of $\rho$. While these results may not be surprising from a theoretical perspective, they call for a different strategy to estimate a tied sample/action factorization. 
\item We propose instead in \S\ref{subsec:cmm} a new factorization that is conjugate to that of \citet{cordero2015moment}: We show that any absolutely continuous probability distribution $\rho$ supported on a compact convex set can be written as $\rho = \nabla w^*\,\sharp\,e^{-w}$, where $w^*$ is the convex conjugate of $w$. Importantly, in our factorization the convex potential $w$ is used to map $\rho$ to $e^{-w}$, since one has equivalently that $\nabla w\,\sharp\,\rho = e^{-w}$. Note that evaluating $w^*$, the convex conjugate of $w$, only requires solving a convex optimization problem, since $w$ is assumed to be convex.
\item We explore how to infer the potential $w$ from the conjugate factorization using either samples from $\rho$ or its associated energy function when the density of $\rho$ is only known up to a normalizing constant, leveraging optimal transport (OT) theory~\citep{San15a}. Indeed, since the condition $\rho=\nabla w^*\,\sharp\,e^{-w}$ is equivalent to stating that $w^*$ is the~\citeauthor{Brenier1991PolarFA} potential linking $e^{-w}$ to $\rho$, we can parameterize $w$ as an input-convex neural network $w_\theta$ whose gradient is an OT map. We provide illustrations on simple generative examples.
\end{itemize}

\section{Background}
\paragraph{Optimal Transport.}
\label{section:ot_background}
Let $\mathcal{P}(\mathbb{R}^d)$ denote the set of Borel probability measures on $\mathbb{R}^d$. For $\mu, \nu \in \mathcal{P}(\mathbb{R}^d)$, let $\Pi(\mu, \nu)$ denote the set of couplings between $\mu$ and $\nu$. We consider the primal OT problem,
\begin{equation}\label{eq:kanto}
\mathcal{W}_2^2(\mu,\nu) := \inf_{\pi \in \Pi(\mu, \nu)} \int_{\mathbb{R}^d \times \mathbb{R}^d} \tfrac{1}{2}\|x-y\|^2 \ \mathrm{d}\pi(x, y)\,,
\end{equation}
with squared-Euclidean cost. This problem admits a dual formulation:
\begin{align} 
\label{eq:dual}
\begin{split}
& \sup\limits_{(f, g) \in \textrm{L}^1(\mu) \times \textrm{L}^1(\nu)}  \int_{\mathbb{R}^d} f \mathrm{d}\mu + \int_{\mathbb{R}^d} g \mathrm{d}\nu,\; \textrm{s.t.} \;  f(x) + g(y) \leq \tfrac{1}{2}\|x - y\|^2, \forall (x, y) \in \mathbb{R}^d \times \mathbb{R}^d\,. 
\end{split}
\end{align}
The functions $f$ and $g$ that maximize the expression in~\eqref{eq:dual} are referred to as the Kantorovich potentials.
Equation~\eqref{eq:kanto} can itself be seen as a relaxation of the \citeauthor{monge1781memoire} formulation of the OT problem, 
\begin{equation}\label{eq:monge}
\inf_{\substack{T:\mathbb{R}^d\rightarrow \mathbb{R}^d,\\ T\,\sharp\,\mu=\nu}} \int_{\mathbb{R}^d} \tfrac12\|x-T(x)\|^2 \mu(\mathrm{d}x)
\end{equation}
where $\,\sharp\,$ is the pushforward operator. An important simplification of~\eqref{eq:dual} comes from the fact that solutions $f^\star, g^\star$ for equation~\eqref{eq:dual} must be such that $\tfrac12\|\cdot\|^2-f^\star$ is convex. Using this parameterization, let $\mathcal{C}(\Reals^d)$ be the space of convex functions in $\Reals^d$.
\citeauthor{Brenier1991PolarFA} proved that in that case, a $T^\star$ solving problem~\eqref{eq:monge} exists, and is the gradient of the convex function $\tfrac12\|\cdot\|^2-f^\star$, namely $\textrm{Id}-\nabla f^\star$. 
Making a change of variables, $u=\tfrac12\|\cdot\|^2-f$ and using the machinery of cost-concavity~\citep[\S1.4]{San15a}, one can show that solving~\eqref{eq:dual} is equivalent to computing the minimizer of:
\begin{equation}\label{eq:max_corr}
\min_{\substack{u \in\mathcal{C}(\Reals^d) \\ u(0)=0}} \, \int_{\Reals^d} \!u(x)\,\mathrm{d}\mu(x)  + \!\int_{\Reals^d} \!u^*(x) \,\mathrm{d}\nu(x)
\end{equation}
where $u^*(y):=\max_{x\in \Reals^d}\langle x,y \rangle-u(x)$ is the convex-conjugate of $u$ and noting that we lift any ambiguity on $u$ by selecting the unique potential such that $u(0)=0$. By denoting $\Brenier(\mu,\nu)$ the \citeauthor{Brenier1991PolarFA} potential solving problem~\eqref{eq:max_corr} above, this results in $\nu = \nabla \Brenier(\mu,\nu) \,\sharp\, \mu$. 

\textbf{Neural OT solvers.} The goal of neural OT solvers is to estimate $\Brenier(\mu,\nu)$ using samples drawn from the source $\mu$ and the target distribution $\nu$. \citet{pmlr-v119-makkuva20a,korotin2020wasserstein2} proposed methods that leverage input convex neural networks (ICNN), originally introduced by~\citet{pmlr-v70-amos17b}, to parameterize the potential $u$ as an ICNN. A key challenge in these approaches is handling the Legendre transform $u^*$. To overcome this, a surrogate network can approximate $u^*$ with recent advances by~\citet{amos2023amortizing} improving these implementations through amortized optimization. 

\paragraph{Moment Measures.}
For a given convex function $u$, the moment measure of $u$ is defined as the pushforward measure $\nabla u\,\sharp\, \Gibbs_u$
where $\Gibbs_u$ is the (Gibbs) log-concave probability measure with density $
\Gibbs_u(x) := \frac{e^{-u(x)}}{\int_{\Reals^d} e^{-u(z)} \mathrm{d} z}\,.$ The moment measure of $u$ is well defined when the quantity $\int_{\Reals^d} e^{-u(z)} \mathrm{d} z$ is positive and finite. While any measure $\rho$ supported on a compact set $K$ is guaranteed to be the moment measure of some convex potential $u$~\citep{SANTAMBROGIO2016418}, \citeauthor{cordero2015moment} showed that $u$ is often discontinuous at the boundary $\partial K$. By studying the variational problem 
\begin{equation} \label{var_function}
    \min_{u \in\mathcal{C}(\Reals^d)}  \int u^* \mathrm{d}\rho - \ln \left(\int e^{-u}\right)\,,
\end{equation}
\citet{cordero2015moment}, proved that if $\rho$ is a measure with a finite first moment, a barycenter at $0$ and is not supported on a hyperplane, then it is the moment measure of an essentially continuous potential (in the sense of~\citet{cordero2015moment}). 
\section{From Moment Measures to \textit{Conjugate} Moment Measures}
For a measure $\rho$, we call the convex function $u$ that satisfies $\nabla u\sharp \Gibbs_{u}=\rho$ the moment \textit{potential} of $\rho$. Additionally, we refer to the Gibbs distribution associated with such potential, $\Gibbs_u$, as the Gibbs \textit{factor} of $\rho$. We show in this section that moment measures are not well suited for generative modeling purposes, as they often produce a Gibbs factor that differs significantly from the target distribution $\rho$. This limitation is particularly evident in the case of Gaussian distributions, which motivates the introduction of an alternative representation: the \textit{conjugate} moment measure.

\subsection{Limitations of Moment Measures}
\label{subsec:limitations}
By writing the KL divergence between $\rho$ and $\Gibbs_{u^*}$ as $\textrm{KL}(\rho \| \Gibbs_{u^*})
    = \int \ln(\rho) \mathrm{d}\rho + \int u^* \mathrm{d}\rho + \ln \left(\int e^{-u^*}\right)$ and applying this in the variational problem studied by~\citeauthor{cordero2015moment} as referenced in~\eqref{var_function}, we obtain that the moment potentials of $\rho$ minimize the following problem: 
\begin{equation}
    \label{eq:moment_char}
    \min_{\substack{u \in\mathcal{C}(\Reals^d)}} \textrm{KL}(\rho \| \Gibbs_{u^*}) - \ln \left(\int e^{-u}\int e^{-u^*}\right)\,.
\end{equation}
Equation~\eqref{eq:moment_char} reveals that the Gibbs distributions associated with the Legendre transform of $\rho$'s moment potentials, $\Gibbs_{u^*}$, are the log-concave distributions closest to $\rho$, up to the regularization term $-\ln \left(\int e^{-u}\int e^{-u^*}\right)$. This quantity has been studied by~\citet{thesisball} and~\citet{artstein2004santalo}, who showed that for convex functions $u$ whose corresponding Gibbs factor $\Gibbs_u$ has its barycenter at the origin, the term $- \ln \left( \int e^{-u} \int e^{-u^*} \right)$ attains its minimum value of $-d \ln(2\pi)$ when $\Gibbs_u$ is Gaussian. A direct consequence of~\eqref{eq:moment_char} is that the Gibbs factor of $\rho$ can differ significantly from $\rho$, particularly when $\rho$ is Gaussian, as formalized in the following proposition:
\begin{proposition}\label{prop_gaussian}
Let $\rho = \mathcal{N}(0_{\mathbb{R}^d}, \Sigma)$. If $\Sigma$ is non degenerate, the moment potentials of $\rho$ are $u_m(x) = \frac{1}{2} (x-m)^T\Sigma(x-m),$ with $m \in \mathbb{R}^d$. The associated Gibbs factor of $\rho$ is $\Gibbs_{u_m} = \mathcal{N}(m, \Sigma^{-1})$.
\end{proposition}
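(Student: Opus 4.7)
The plan is to establish the proposition in two steps. First, I would exhibit by direct calculation that each $u_m$ in the proposed family is a moment potential of $\rho$ with Gibbs factor $\mathcal{N}(m,\Sigma^{-1})$. Second, I would invoke the essential uniqueness of the moment potential from \citet{cordero2015moment} to conclude that no other moment potentials exist.

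For the first step, $u_m$ is a quadratic form with Hessian $\Sigma$ centered at $m$, so $e^{-u_m(x)}$ is the unnormalized density of $\mathcal{N}(m,\Sigma^{-1})$; since $\Sigma$ is non-degenerate, $\int e^{-u_m}$ is finite and the Gibbs factor equals $\Gibbs_{u_m}=\mathcal{N}(m,\Sigma^{-1})$. The gradient $\nabla u_m(x)=\Sigma(x-m)$ is affine, and the pushforward of a Gaussian through an affine map is again Gaussian: if $X\sim\mathcal{N}(m,\Sigma^{-1})$ then $\Sigma(X-m)$ has mean $0$ and covariance $\Sigma\,\Sigma^{-1}\,\Sigma^\top=\Sigma$, so $\nabla u_m\sharp\Gibbs_{u_m}=\mathcal{N}(0,\Sigma)=\rho$. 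This simultaneously verifies both identities in the statement.

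For the second step, I would check that $\rho$ satisfies the hypotheses of \citet{cordero2015moment}: it has finite first moment, is centered at $0$, and since $\Sigma$ is non-degenerate, its support is not contained in a hyperplane. Their theorem then yields an essentially continuous moment potential, unique up to translation of the argument. Since $u_m(x)=u_0(x-m)$ with $u_0(x)=\tfrac12 x^\top\Sigma x$, the translation orbit of $u_0$ coincides exactly with $\{u_m\}_{m\in\mathbb{R}^d}$, so uniqueness forces every moment potential of $\rho$ into this family.

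I expect the main obstacle to be bookkeeping rather than technical depth: one must confirm that the translation symmetry of the moment-measure construction (valid precisely because $\rho$ is centered) is captured by the single parameter $m$, and that distinct values of $m$ yield distinct potentials. Once these alignments are in place, the argument reduces to a pair of elementary Gaussian identities together with the uniqueness statement of \citet{cordero2015moment}.
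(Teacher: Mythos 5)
Your proposal is correct and follows essentially the same route as the paper: exhibit the quadratic family $u_m$ with Gibbs factor $\mathcal{N}(m,\Sigma^{-1})$, check that it pushes forward to $\rho$, and then invoke the translation-uniqueness of moment potentials from \citet{cordero2015moment} to rule out any other candidates. The only cosmetic difference is that you verify $\nabla u_m\,\sharp\,\Gibbs_{u_m}=\rho$ directly via the affine image of a Gaussian, whereas the paper derives the covariance $\Sigma^{-1}$ by matching $\nabla u$ against the closed-form optimal transport map between Gaussians; both computations are equivalent.
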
 \begin{proof}
The optimization problem in equation~\eqref{eq:moment_char} is translation invariant: replacing $u$ with $x \mapsto u(x + a)$ for any $a \in \mathbb{R}^d$ leaves the objective unchanged. We can thus restrict to convex functions $u$ such that $\Gibbs_u$ has barycenter at zero. In this setting, choosing $u(x) = \frac{1}{2} x^\top \Sigma x$ yields $\Gibbs_u = \mathcal{N}(0, \Sigma^{-1})$ and $\Gibbs_{u^*} = \mathcal{N}(0, \Sigma)$, which minimize the first term in~\eqref{eq:moment_char}, while the second term reaches its minimum value of $-d \ln(2\pi)$~\citep{artstein2004santalo}. Hence, this choice minimizes the full objective, and translations of $u$ span the set of moment potentials of $\rho$.
\end{proof}
\begin{figure}
    \centering
    \includegraphics[scale=0.27, trim=0 29 0 21, clip]{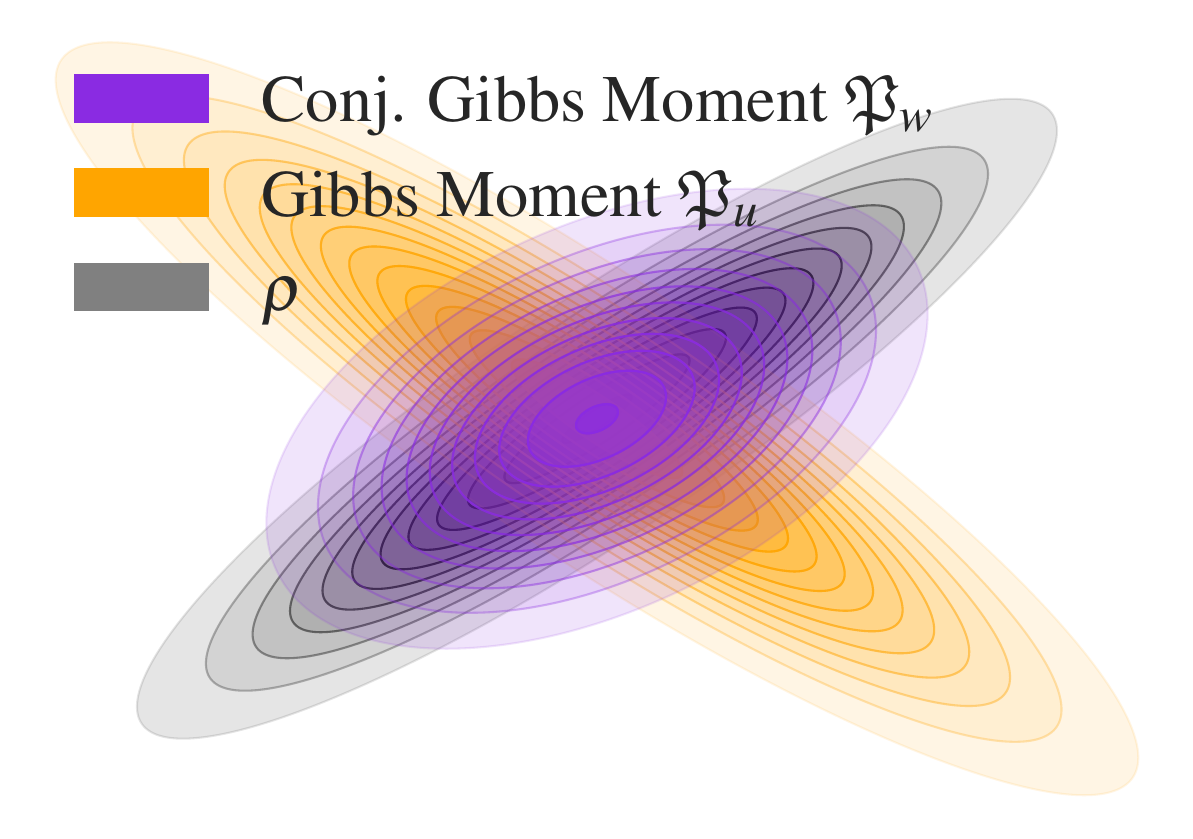} 
    \caption{\GibbsMoment\ and \ConjGibbsMoment $\,$ of $\rho = \mathcal{N}\left(0, \begin{bmatrix} 2 & 1.8 \\ 1.8 & 2 \end{bmatrix}\right)$.}
    \label{fig:gaussian}
    \vspace{-0.2cm}
\end{figure}
Intuitively this results can be interpreted as follows. When a peaked distribution $\rho$ is centered around the origin (e.g. $\Sigma\approx 0$), its corresponding moment potential is such that the \textit{image} set $\nabla u$ on the support of $\Gibbs_u$ is necessarily tightly concentrated around $0$. This has the implication that $u$ is a \textit{slowly} (almost constant) varying potential on the entire support of $\Gibbs_u$. As a result, one has the (perhaps) counter-intuitive result that the more peaky $\rho$, the more spread-out $\Gibbs_u$ must be. From this simple observation--validated experimentally in Figure~\ref{fig:one_d}--we draw the intuition that a change is needed to reverse this relationship, while still retaining the interest of a measure factorization result. 
\subsection{The \textit{Conjugate} Moment Measure factorization}\label{subsec:cmm}
Our main result, which establishes the conjugate moment measure factorization, is stated below:
\begin{theorem}\label{thm:mainresult}
    Let $\rho \in \mathcal{P}(\mathbb{R}^d)$ be an absolutely continuous probability measure supported on a compact, convex set. Then, there exists a convex function $w$ such that $\rho = \nabla w^*\,\sharp\, \Gibbs_{w}.$
\end{theorem}
Accordingly, we now refer to $\rho$ as the \textit{conjugate moment measure} of $w$, $w$ as the \textit{conjugate} moment potential of $\rho$, and $\Gibbs_w$ as the \textit{conjugate} Gibbs factor of $\rho$. Our proof strategy differs significantly from that used in~\citep{cordero2015moment} and~\citep{SANTAMBROGIO2016418} since we use Schauder’s fixed point theorem to show that the following map admits a fixed point: 
\begin{equation} \label{eq:g_rho_fixed_point}
   G_\rho : \mathcal{L}(\mathbb{R}^d) \rightarrow \mathcal{L}(\mathbb{R}^d)\,, \quad G_\rho(w) := \Brenier(\rho, \Gibbs_w) 
\end{equation}
where $\mathcal{L}(\mathbb{R}^d)$ is the set of functions mapping $\mathbb{R}^d$ to $\mathbb{R} \cup \{+\infty \}$, and $G_\rho$ assigns to a potential $w$ the \citeauthor{Brenier1991PolarFA} potential transporting $\rho$ to $\Gibbs_w$. Fixed points of $G_\rho$ correspond precisely to the conjugate moment potentials of $\rho$, and their existence guarantees a solution to the moment measure factorization. A sketch of the proof is provided in Appendix~\ref{subsec:appendix_sketch_proof_th}, with full details in Appendix~\ref{subsec:appendix_proof_th}. When $\rho$ is Gaussian, the conjugate moment measure factorization admits an explicit solution, as given in Proposition~\ref{prop_gaussian_star}. Recall that we saw in \S\ref{subsec:limitations} that Gibbs factors of $\mathcal{N}(0, \Sigma)$ were multivariate normal distributions with covariance matrix $\Sigma^{-1}$.
\begin{proposition}\label{prop_gaussian_star}
    When $\Sigma$ is non-degenerate, $\rho = \mathcal{N}(m, \Sigma)$ is the conjugate moment measure of 
    $w(x) = \frac{1}{2} (x-r)^T\Sigma^{-1/3} (x-r),$ whose Gibbs distribution is $\Gibbs_{w} = \mathcal{N}(r, \Sigma^{1/3})$ where $r = (I_d + \Sigma^{1/3})^{-1} m$. The function $w$ is the unique conjugate moment potential of $\rho$ whose Gibbs distribution remains Gaussian.
\end{proposition}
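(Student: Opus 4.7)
I plan to reduce the problem to a simple algebraic matching, exploiting the constraint that $\Gibbs_w$ is Gaussian. Because $\Gibbs_w \propto e^{-w}$ being Gaussian forces $w$ to be a strictly convex quadratic form up to an additive constant, I parameterize
\begin{align*}
w(x) = \tfrac12 (x-c)^T A (x-c) + k, \qquad A \succ 0,
\end{align*}
which gives $\Gibbs_w = \mathcal{N}(c, A^{-1})$ regardless of $k$. I will then compute $\nabla w^*$, push $\Gibbs_w$ through it, and match the resulting Gaussian against $\rho = \mathcal{N}(m,\Sigma)$ to pin down $A$ and $c$.

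\textbf{Computation.} A direct calculation yields $w^*(y) = \tfrac12 y^T A^{-1} y + \langle c, y\rangle - k$, whose gradient $\nabla w^*(y) = c + A^{-1} y$ is affine (and independent of $k$). Pushing $Y \sim \mathcal{N}(c, A^{-1})$ through this map gives
\begin{align*}
c + A^{-1} Y \ \sim\ \mathcal{N}\bigl((I_d + A^{-1})c,\ A^{-3}\bigr).
\end{align*}
Setting this equal to $\mathcal{N}(m,\Sigma)$ produces the two matching equations $A^{-3} = \Sigma$ and $(I_d + A^{-1}) c = m$. Since a symmetric positive definite matrix admits a unique symmetric positive definite cube root (spectral theorem), the first equation forces $A^{-1} = \Sigma^{1/3}$, i.e.\ $A = \Sigma^{-1/3}$. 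As $\Sigma^{1/3} \succ 0$, the matrix $I_d + \Sigma^{1/3}$ is invertible and the second equation yields $c = (I_d + \Sigma^{1/3})^{-1} m = r$. This recovers exactly $w(x) = \tfrac12(x-r)^T \Sigma^{-1/3}(x-r)$ and $\Gibbs_w = \mathcal{N}(r, \Sigma^{1/3})$, as claimed.

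\textbf{Uniqueness and main obstacle.} Any convex potential with Gaussian Gibbs factor falls into the quadratic parameterization above (with $A \succ 0$), so the analysis is exhaustive; the additive constant $k$ influences neither $\Gibbs_w$ nor $\nabla w^*$, and the matching equations admit a single solution $(A, c)$, giving the uniqueness claim. There is no serious analytic difficulty here: the argument reduces to the affine pushforward of a Gaussian together with uniqueness of the symmetric positive-definite cube root. The only point that requires care is keeping track of what the additive constant $k$ does and does not affect, so that ``unique potential'' is understood modulo this cosmetic degree of freedom (which can be fixed, e.g., by normalizing $w(r)=0$).
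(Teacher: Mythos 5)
Your proof is correct, and its overall structure matches the paper's: restrict to quadratic potentials (the only convex potentials whose Gibbs factor is Gaussian), compute $\nabla w^*$, impose the pushforward condition, and solve for the parameters. The one place you genuinely diverge is the middle step. The paper observes that $\nabla w^*$ must be the optimal transport map from $\Gibbs_w=\mathcal{N}(r,\Delta)$ to $\rho=\mathcal{N}(m,\Sigma)$ and invokes the closed-form Gaussian OT map $x\mapsto m+M(x-r)$ with $M=\Delta^{-1/2}(\Delta^{1/2}\Sigma\Delta^{1/2})^{1/2}\Delta^{-1/2}$, then solves $\Delta=M$ (which indeed yields $\Delta^3=\Sigma$). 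You instead push $\mathcal{N}(c,A^{-1})$ directly through the affine map $y\mapsto c+A^{-1}y$ and match moments, obtaining $A^{-3}=\Sigma$ immediately. Your route is more elementary — it needs only the affine image of a Gaussian and the uniqueness of the symmetric positive-definite cube root, and it bypasses the Gelbrich formula entirely; it also does not need to presuppose that $\nabla w^*$ is the OT map (that is automatic here since $w^*$ is convex, but your argument never uses it). The paper's route has the advantage of mirroring its proof of Proposition~\ref{prop_gaussian}, where the OT-map characterization is genuinely needed. Your explicit handling of the additive constant $k$, and the remark that uniqueness holds modulo this normalization, is a point the paper glosses over; both readings of "unique potential" are consistent with the statement since the paper fixes potentials by $w(0)=0$ elsewhere.
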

The proof is provided in Appendix~\ref{subsec:appendix_prop_gaussian_conj} and relies on the fact that the OT map between two Gaussians is known in closed form~\cite{peyré2020computational}. While this potential $w$ may not be the only conjugate moment potential, the Gibbs distribution associated with it, $\mathcal{N}(r, \Sigma^{1/3})$, appears to be better suited to the target distribution $\rho$. This is particularly evident in the example shown in Figure~\ref{fig:gaussian}. Additionally, it is worth noting that for the Gaussian case, our factorization still works when $\rho$ is not centered, whereas the approach of \citet{cordero2015moment} \textit{requires} $\rho$ to be mean 0.
\subsection{Monge-Ampère Equation for Conjugate Moment Measures}
\label{sec:mongeampere}
From the moment measure factorization, $\rho = \nabla u \sharp \Gibbs_u,$ which holds at the level of probability distributions, one can derive the corresponding equality between probability density functions and obtain the following Monge–Ampère equation~\citep{cordero2015moment}:
\begin{equation}
\rho(x) = e^{-\mathcal{E}_u(x)} \quad \text{with}\quad \mathcal{E}_u(x) = u(\nabla u^*(x)) - \ln( \det H_{u^*}(x)) + \ln(C_u), \label{eq:monge_ampere_mm}
\end{equation}
where $u^*$ is the convex conjugate of $u$, $H_{u^*}(x)$ denotes the Hessian of $u^*$ at $x$, and $C_u = \int e^{-u}$ is the normalizing constant ensuring that $\Gibbs_u = e^{-u}/C_u$ is a probability distribution. Similarly, Theorem~\ref{thm:mainresult} guarantees the existence of a convex function $w$ that satisfies the following Monge-Ampère equation for any absolutely continuous probability measure $\rho$ supported on a compact, convex set:
\begin{equation}
    \rho(x) = 
    e^{-\mathcal{E}_w(x)} \quad \text{where}\quad \mathcal{E}_w(x) = w(\nabla w(x)) - \ln( \det H_{w}(x)) + \ln(C_w),
    \label{eq:monge_ampere_conj}
\end{equation}
with $H_{w}(x)$ denoting the Hessian of $w$ at $x$ and $C = \int e^{-w}$ being the normalizing constant of $\Gibbs_w$. The potentials $w$ that solve~\eqref{eq:monge_ampere_conj} are precisely the conjugate moment potentials of $\rho$.

\section{Estimating Conjugate Moment Potentials in Practice} \label{section:implem}
We first explain how to sample from $\rho$ when one of its conjugate moment potential is known, using the Langevin Monte Carlo (LMC) algorithm and a conjugate solver. We then describe a method to estimate a conjugate moment factorization of $\rho$ using i.i.d samples $(x_1, \dots, x_n)\sim \rho$. In this approach, the conjugate moment potential of $\rho$ is parameterized using an input convex neural network (ICNN) $w_{\theta}$ following the architecture proposed in~\citep{pmlr-v235-vesseron24a}. The conjugate moment potential $w_\theta$ is then estimated using an algorithm inspired by the fixed-point method associated to Theorem~\ref{theo:fixpoint}. Finally, we address the case where the density of $\rho$ is known up to a normalizing constant, a common scenario in sampling; we use an ICNN to parameterize the potential and estimate it via regression using the Monge–Ampère equation~\eqref{eq:monge_ampere_conj}.

\subsection{Sampling from $\rho$ using its Conjugate Moment Factorization} \label{section:sampling_lmc}
In this paragraph, we suppose that we know a conjugate moment potential $w$ of $\rho$, i.e. $\nabla w^* \,\sharp\,\Gibbs_w = \rho$. Knowing $w$, drawing samples from $\rho$ can be done by first sampling $x \sim \Gibbs_w$ and then applying $\nabla w^*$ to those points as $\nabla w^* (x) \sim \nabla w^* \,\sharp\, \Gibbs_w = \rho$. The LMC algorithm is a widely used method for generating samples from a smooth, log-concave density like $\Gibbs_w$ \citep{bj/1178291835, cheng2018convergence, DALALYAN20195278}. Starting from an initial point $x^{(0)}$, the LMC algorithm iterates according to the following update rule:
$$x^{(k+1)} = x^{(k)} - \tau \nabla w(x^{(k)}) + \sqrt{2 \tau} z^{(k)}, \; \; z^{(k)} \sim \mathcal{N}(0,I_d),$$
where $\tau$ is the step size. After a warm-up period, the LMC iterates are distributed according to the log-concave distribution $\Gibbs_w$. As for the gradient of the convex conjugate $\nabla w^*$, it can be efficiently estimated from $w$. By applying \citeauthor{doi:10.1137/0114053}'s envelope theorem \citeyearpar{doi:10.1137/0114053}, it follows that $\nabla w^*(y)$ is the solution to the following concave maximization problem:%
$
 \nabla w^*(y) = \arg\sup_x \, \langle x , y \rangle - w(x) \,.
$
This optimization problem can be solved using algorithms such as gradient ascent, (L)BFGS~\citep{liu1989limited}, or ADAM~\citep{kingma2014adam}. Thus, having access to a conjugate moment potential $w$ of $\rho$ enables to efficiently draw samples from it. Note that the gradient steps of the conjugate solver can be interpreted as a denoising procedure applied to samples drawn from $\Gibbs_w$.

\subsection{Learning the Conjugate Moment Factorization from Samples: CMFGen} \label{section:fixedpoint}
We consider the case where $\rho \in \mathcal{P}(\mathbb{R}^d)$ can only be accessed through samples, as in an empirical distribution approximation $\rho_n := \frac{1}{n} \sum_{i=1}^n \delta_{x_i}$. The fixed points of the map $G_\rho$, as defined in~\eqref{eq:g_rho_fixed_point}, correspond exactly to the conjugate moment potentials of $\rho$. This observation motivates the following fixed-point iteration scheme to compute a conjugate moment potential of $\rho$:
\begin{equation}
    w_{0} := \tfrac12\| \cdot\|^2;\quad \forall t\geq 1,\quad w_{t+1} := G_\rho(w_t),
    \label{eq:fixed-point_conj}
\end{equation}
Starting from $w_0=\tfrac12\|\cdot\|^2$, which corresponds to $\Gibbs_{w_0} = \mathcal{N}(0, I_d)$, we iteratively compute the Brenier potential $w_{t+1}$ between the distribution $\rho$ and $\Gibbs_{w_t}$ at iteration $t+1$. During the next iteration $t+2$, the updated distribution $\Gibbs_{w_{t+1}}$ becomes the target distribution to compute the next Brenier potential starting from the source $\rho$. This process is repeated until the algorithm converges.
\setlength{\tabcolsep}{0pt}
\renewcommand{\arraystretch}{0.0}
\begin{figure}[t]
    \centering
    \begin{tabular}{@{}c@{}c@{}c@{}c}
         \includegraphics[scale=0.1367, trim=7 0 0 0]{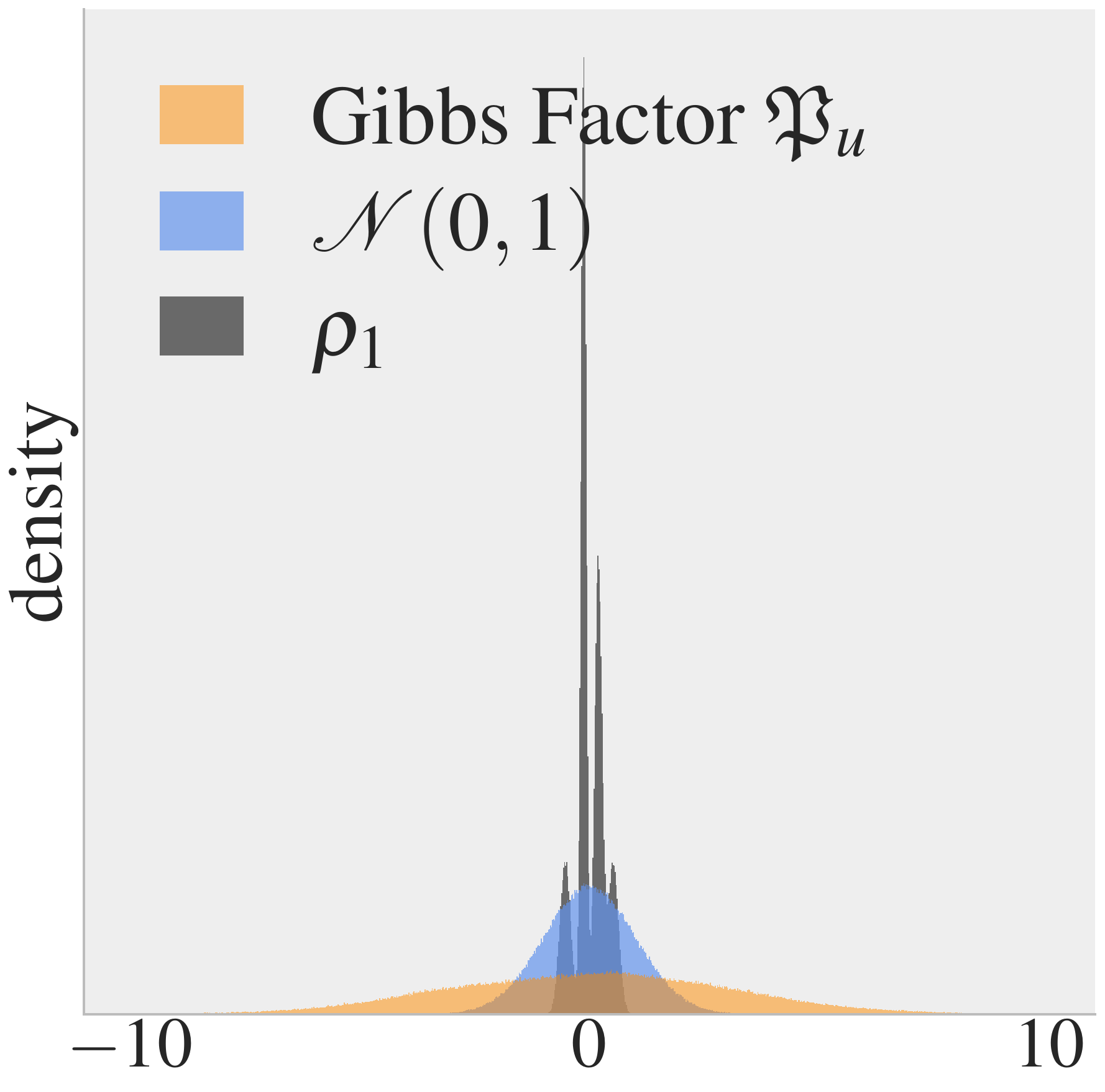} &
         \includegraphics[scale=0.1367, trim=7 0 15 0, clip]{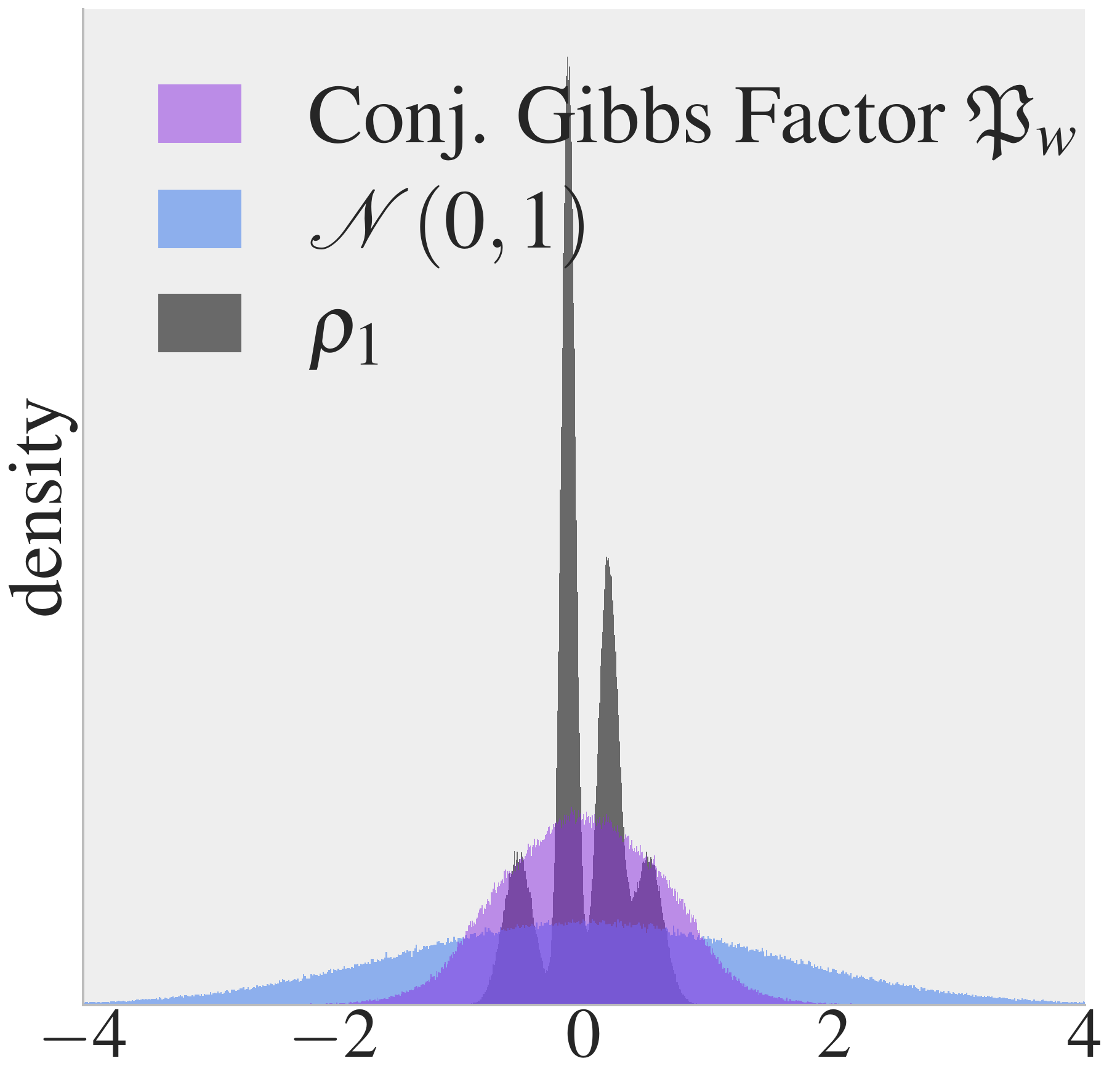} &
         \includegraphics[scale=0.1367, trim=7 0 0 0, clip]{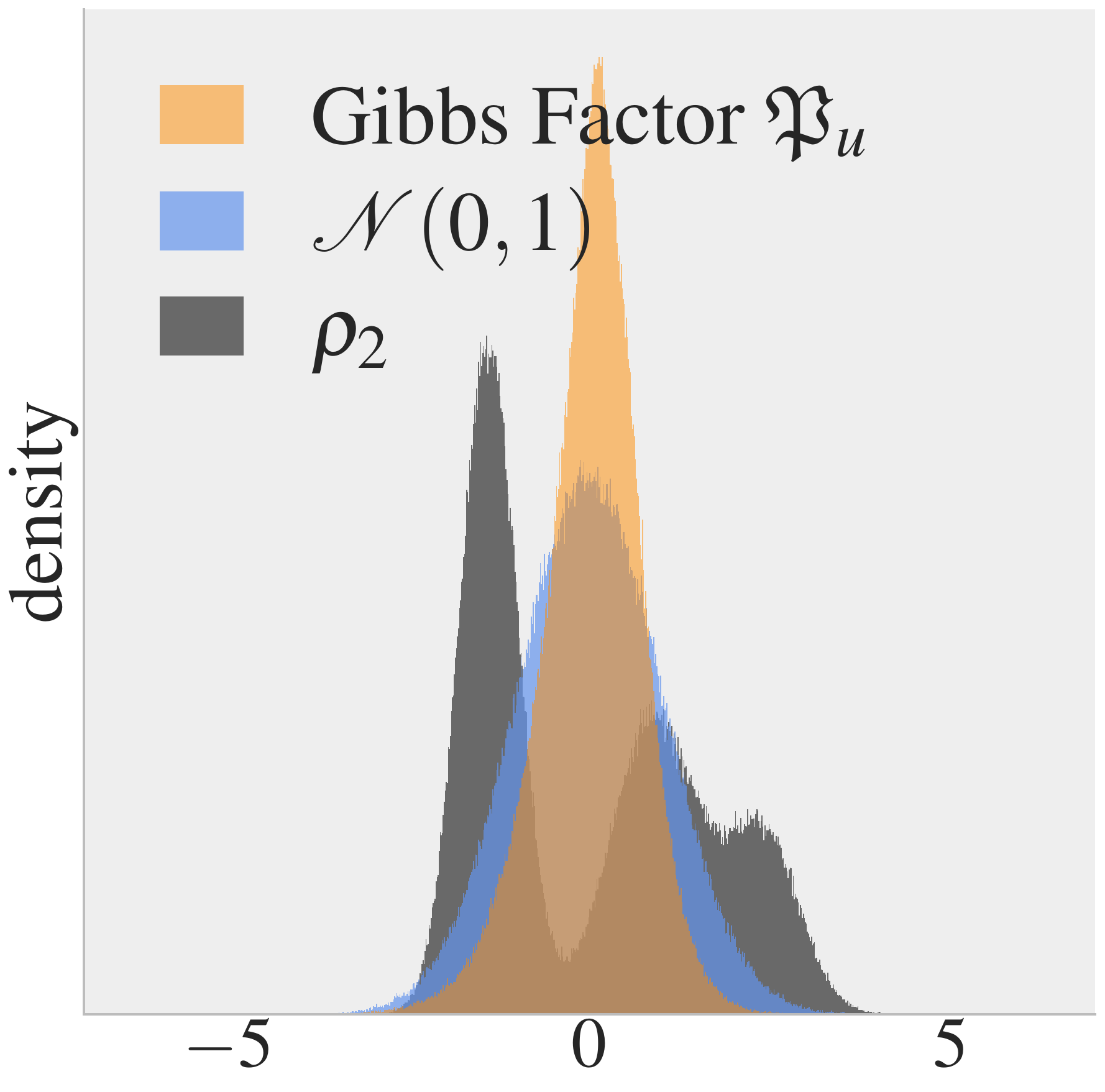} &
         \includegraphics[scale=0.1367, trim=7 0 15 0, clip]{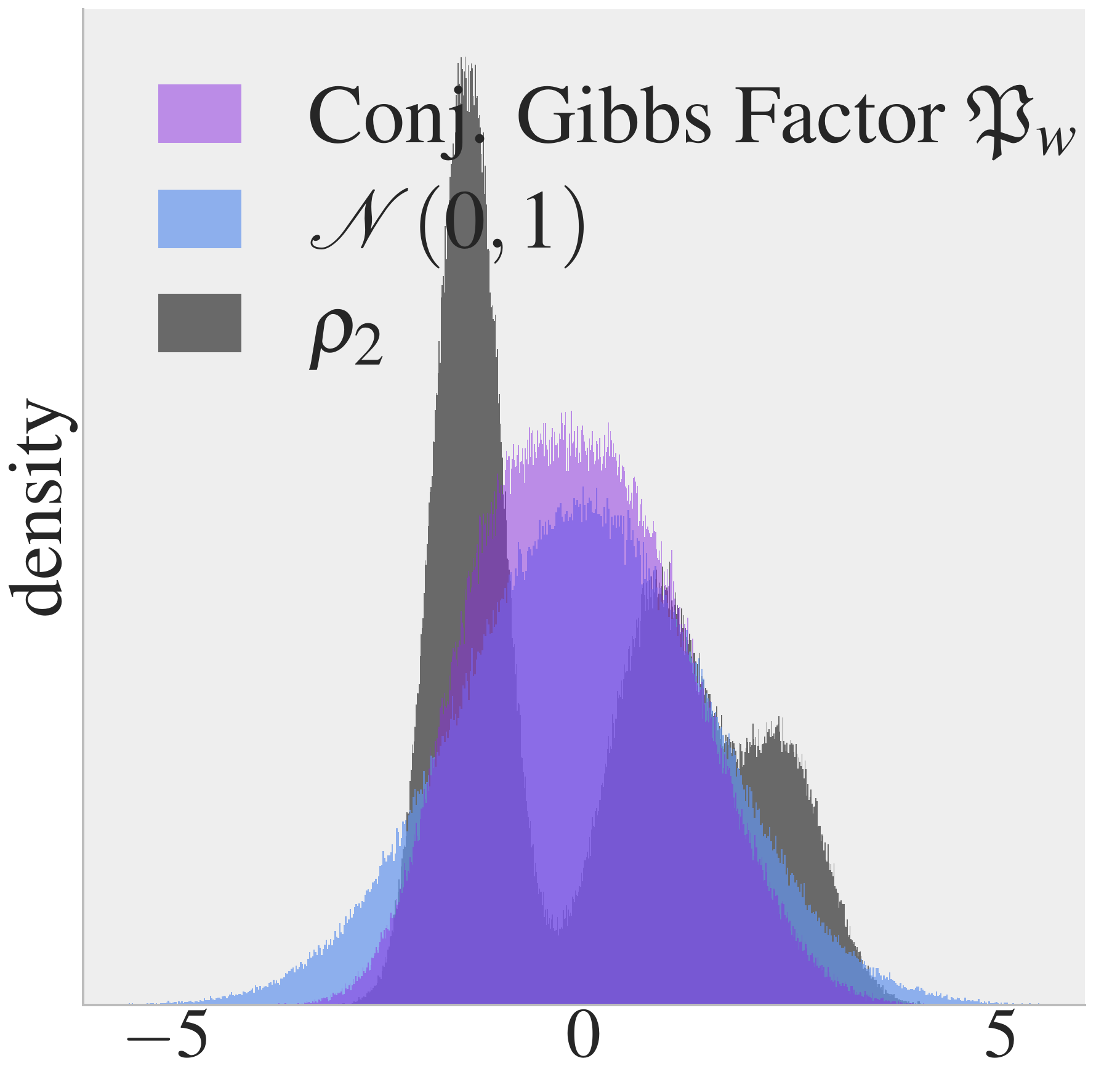}\\
         (a) & (b) & (c) &(d)
    \end{tabular} \caption{Comparison between the \GibbsMoment\ $\Gibbs_u$ and the \ConjGibbsMoment\ $\Gibbs_w$ for two mixtures of 1D Gaussian distributions, \textcolor{dimgray}{$\bm \rho_1$} and \textcolor{dimgray}{$\bm \rho_2$}. The density plots overlay the (conjugate) Gibbs factor with \textcolor{dimgray}{$\bm \rho$} and a standard Gaussian \textcolor{cornflowerblue}{$\bm{\mathcal{N}(0,1)}$} for reference. Gibbs factors spread inversely to $\rho$ (\textbf{(a)}, \textbf{(c)}) while conjugate Gibbs factors show more suitable alignment (\textbf{(b)}, \textbf{(d)}).} \label{fig:one_d} \vspace{-7pt}\end{figure}
\paragraph{Univariate distributions}
In the 1D case, the OT map between probability density functions $\mu$ and $\nu$ has a closed-form expression given by~\citep{peyré2020computational}:
$$\nabla \Brenier(\mu, \nu) = C_{\nu}^{-1} \circ C_{\mu}$$
where $C_{\mu} : \Reals \rightarrow [0,1]$ is the cumulative distribution function associated to $\mu$, defined as: 
$C_{\mu}(x):= \int_{-\infty}^x \mathrm{d}\mu$. The quantile function
$C_{\mu}^{-1}: [0,1] \rightarrow \Reals \cup \{-\infty\}$ is the pseudoinverse  $C_{\mu}^{-1}(r):= \min \{x \in \Reals \cup \{-\infty\} : C_{\mu}(x) \geq r\}$. For a given distribution $\rho$ and an initial distribution $\Gibbs_{w_0}$, the analytical form of the \citeauthor{Brenier1991PolarFA} map enables the execution of the algorithm in~\eqref{eq:fixed-point_conj}. To compare the conjugate moment measure factorization with the moment measure factorization in 1D, we also propose a similar 1D algorithm for estimating a moment measure potential of a distribution $\rho$, described in Appendix~\ref{subsec:algo_1D}, as no method currently exists for solving the moment measure factorization. The code for both algorithms is also provided in Appendix~\ref{subsec:algo_1D}.
\paragraph{Higher dimensional case}
In higher dimensions, where the OT map is not available in closed form, we estimate it using ICNNs with the architecture used in~\citep{pmlr-v235-vesseron24a} and the neural OT solver proposed in~\citep{amos2023amortizing}. We found that performing a single optimization step to estimate the OT map at each iteration was sufficient. As a result, our final methodology relies on a single ICNN, $w_\theta$, which serves two purposes at each iteration: generating samples from $\Gibbs_{w_\theta}$ and being optimized by a single step of the neural OT solver to approximate the OT map between $\rho$ and $\Gibbs_{w_\theta}$. Algorithm~\ref{alg:moment_fp} details the steps of the procedure, where $\tilde{x}(y_i)$ is the estimation of $\nabla w_\theta^*(y_i)$, computed as described in \citep{amos2023amortizing}. The LMC sampling step in step 4 of Algorithm~\ref{alg:moment_fp} requires selecting the step size hyperparameter $\tau$, which depends on $w_\theta$. \begin{figure}
\begin{algorithm}[H]
\caption{CMFGen algorithm}
\label{alg:moment_fp}
\begin{algorithmic}[1]
\STATE Initialize $w_\theta$ such that $w_\theta \approx \tfrac12\|\cdot\|^2$
\WHILE{not converged}
        \STATE Draw $n$ i.i.d samples $x_i \sim \rho$
        \STATE Draw $y_1, \dots, y_n \sim \Gibbs_{w_{\theta}}$ using LMC 
       \STATE $\mathcal{L}_\theta \leftarrow \frac{1}{n} \sum_{i=1}^n w_{\theta}(x_i) - \frac{1}{n} \sum_{i=1}^n w_{\theta}(\tilde{x}(y_i))$
       \STATE Update $w_\theta$ with $\nabla \mathcal{L}_\theta$
\ENDWHILE
\end{algorithmic}
\end{algorithm}
\end{figure} Since $w_\theta$ evolves during the algorithm, the step size must be dynamically adjusted. Following Proposition 1 in~\cite{dalalyan2016}, we set $\tau = \frac{1}{M_\theta}$, where $M_\theta$ is the largest eigenvalue of the Hessian of $w_\theta$, estimated from the current minibatch $\{x_1, \dots, x_n \}$. To accelerate convergence, the LMC algorithm is initialized using particles from the previous iteration. 

\subsection{Learning the Conjugate Moment Factorization from an Energy: CMFMA} \label{section:learn_ma}
We now consider the case where the density of $\rho$ is known only up to a normalizing constant, which is the typical setting in sampling-based frameworks. Specifically, we assume access to an energy function $\mathcal{E}$ such that $\rho \propto e^{-\mathcal{E}}$. To learn a conjugate moment potential of $\rho$, we propose to leverage the Monge–Ampère formulation defined in~\eqref{eq:monge_ampere_conj}, and parameterize the potential using an ICNN $w_\theta$ trained via regression on the following objective:
$$\mathbb{E}_{x \sim \mathbb{P}} \left[ \|\mathcal{E}(x) - w_\theta(\nabla w_\theta(x)) + \ln( \det H_{w_\theta}(x)) \|^2 \right].$$
In this objective, $\mathbb{P}$ can be taken as the uniform distribution over the sampling space when the dimension is small and the domain is bounded. In higher-dimensional settings, $\mathbb{P}$ can instead be chosen as $\rho$, with samples obtained via the Langevin Monte Carlo (LMC) algorithm. At the end of training, the learned potential $w_\theta$ satisfies $ w_\theta(\nabla w_\theta(x)) + \ln( \det H_{w_\theta}(x)) \approx \mathcal{E}(x)$ and the pushforward $\nabla w_\theta^* \sharp \Gibbs_{w_\theta}$ provides a close approximation of the target $\rho$. \S\ref{section:sampling_lmc} details how to sample from $\nabla w_\theta^* \sharp \Gibbs_{w_\theta}$ using only the learned potential $w_\theta$. Note that, in contrast, learning the moment potential $u$ via regression seems particularly challenging due to the presence of the Legendre transform $u^*$ in multiple terms of the Monge–Ampère equation~\eqref{eq:monge_ampere_mm}.
\section{Experiments}
We begin with preliminary experiments using the CMFGen and CMFMA algorithms introduced in Section~\ref{section:fixedpoint} and~\ref{section:learn_ma}, starting with univariate distributions $\rho$ for which the \citeauthor{monge1781memoire} map is available in closed form. We then estimate the conjugate moment potential for several 2D distributions, either from samples (using CMFGen) or from an energy function (using CMFMA). Finally, we demonstrate the scalability of CMFGen by applying it to higher-dimensional datasets such as MNIST~\citep{lecun2010mnist} and the Cartoon dataset from~\citet{royer2018}.

\subsection{Univariate distributions}
We compute the \textit{conjugate} moment potential for two univariate Gaussian mixtures, $\rho_1$ and $\rho_2$, shown in Figure~\ref{fig:one_d}, using both CMFGen and CMFMA. CMFGen uses i.i.d. samples from the mixture, while CMFMA leverages the log-density. The distribution $\rho_1$ is sharply concentrated around zero, whereas $\rho_2$ exhibits heavier tails. For comparison, we also compute the standard moment potentials using the fixed-point method described in Appendix~\ref{subsec:algo_1D}. The pushforward densities $\nabla w^* \sharp \Gibbs_w$ and $\nabla u \sharp \Gibbs_u$ (Figures~\ref{fig:sampling_1D} and~\ref{fig:one_d_test}) closely match the target distributions, confirming that the three algorithms successfully recover the (conjugate) moment potentials for $\rho_1$ and $\rho_2$. The recovered conjugate potentials are shown in Figures~\ref{fig:sampling_1D_rho_1} and~\ref{fig:sampling_1D_rho_2}. As illustrated in Figure~\ref{fig:one_d} (a), the Gibbs factor associated with $\rho_1$ has heavier tails than the standard Gaussian, consistent with the theoretical insight that concentrated distributions yield broader Gibbs measures. Conversely, the broader $\rho_2$ induces a more concentrated Gibbs factor (panel (c)). In contrast, the \textit{conjugate} Gibbs factors (panels (b) and (d)) more closely match the target distributions. 
\begin{figure}[t]
    \centering
    \begin{tabular}{c}
         \includegraphics[width=1.0\linewidth, trim=250 60 150 30, clip]{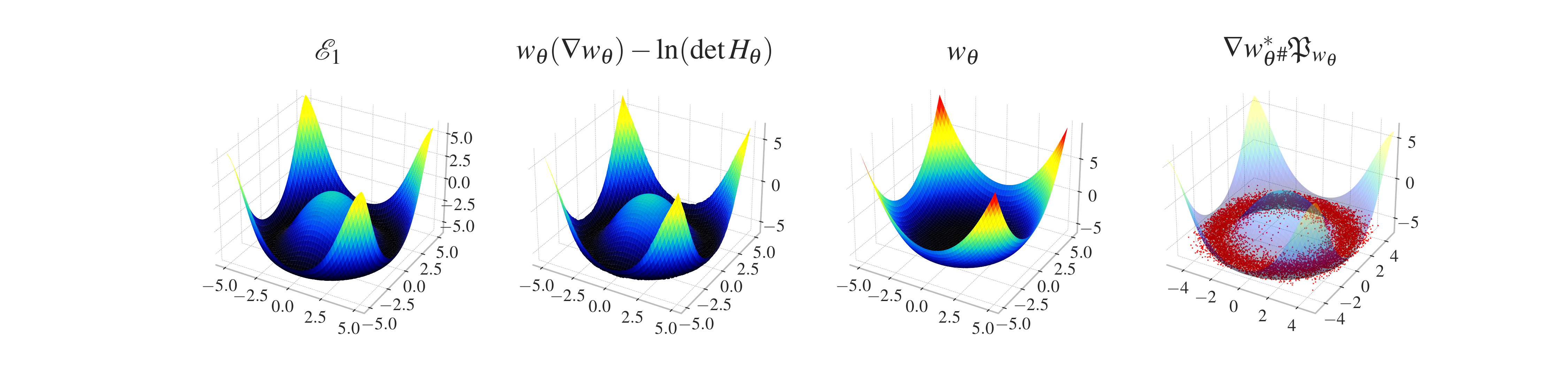} \\
         \includegraphics[width=1.0\linewidth, trim=250 60 150 30, clip]{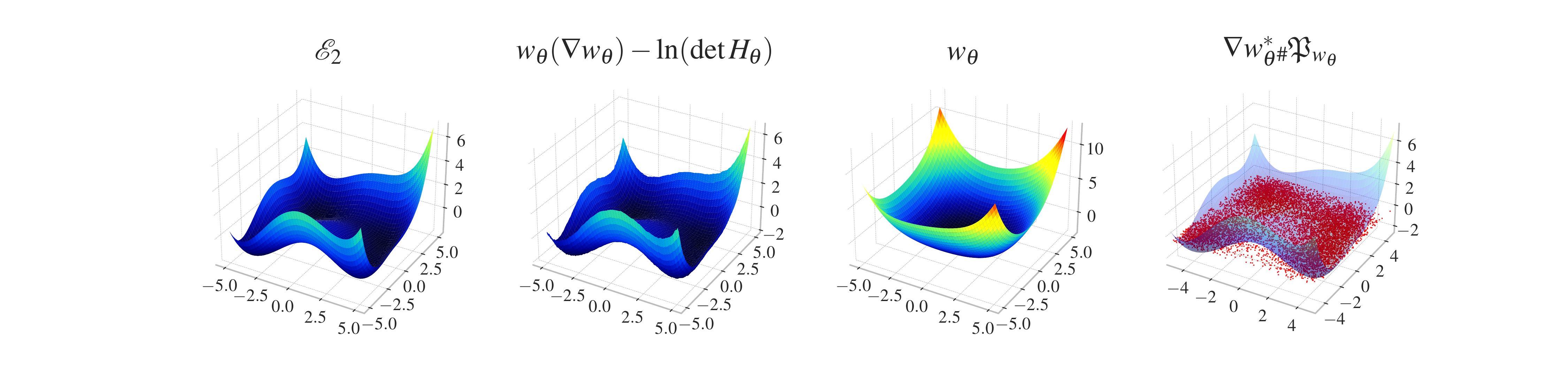}
    \end{tabular}
    \caption{\textbf{Learning the conjugate moment potential from an energy}. $\mathcal{E}_1$ and $\mathcal{E}_2$ are learned by regression with CMFMA. The second column shows the learned energy; the third displays the corresponding conjugate moment potential; the fourth shows samples (in red) drawn from $\nabla w_\theta^* \sharp \Gibbs_{w_\theta}$.}
    \label{fig:sampling_paper}
\end{figure}
\subsection{2D Experiments} 
We consider several 2D distributions defined either through samples—(a) Circles, (b) S-curve, (c) Checkerboard, (d) Scaled-Rotated S-curve, and (e) Diag-Checkerboard (Figure~\ref{fig:2d})—or through known energy functions $\mathcal{E}_1$, $\mathcal{E}_2$, and $\mathcal{E}_3$ (Figures~\ref{fig:sampling_paper}and~\ref{fig:sampling}). Our first step is to estimate a conjugate moment potential $w_\theta$ for these distributions using either the CMFGen or the CMFMA algorithm. Following this, we generate new samples based on the learned conjugate potential $w_\theta$ using the methodology detailed in \S\ref{section:sampling_lmc}. The potential $w_\theta$ is implemented as an ICNN with five hidden layers of size $128$ and quadratic input connections, based on the architecture of~\citet{pmlr-v235-vesseron24a}. Detailed hyperparameters for both 2D and high-dimensional experiments are provided in Appendix~\ref{subsec:appendix_hyperparams}.

\textbf{CMFMA.$\;\;$} The energy functions $\mathcal{E}_1$, $\mathcal{E}_2$, and $\mathcal{E}_3$ are standard 2D benchmarks for evaluating optimization algorithms; their analytical forms are provided in Appendix~\ref{subsec:appendix_energy_functions}. For all three, the ICNNs $w_\theta$ successfully learn the corresponding energy landscape and permit to draw new samples from the target distributions, as demonstrated in Figure~\ref{fig:sampling_paper} and~\ref{fig:sampling}.
\begin{figure}[h]
    \centering
    \scalebox{1.0}{
    \begin{tikzpicture}
        \node[rotate=90, anchor=center] at (-5., 2.5) {\parbox{2cm}{\centering Measure $\rho$}};
        \node[rotate=90, anchor=center] at (-5., -0.0) {potential $w_\theta$};
        \node[rotate=90, anchor=center] at (-5., -2.4) {$\nabla w_\theta^*\,\sharp\, \Gibbs_{w_\theta}$};
        \node at (-3.1, 0)  {\includegraphics[width=0.20\textwidth]{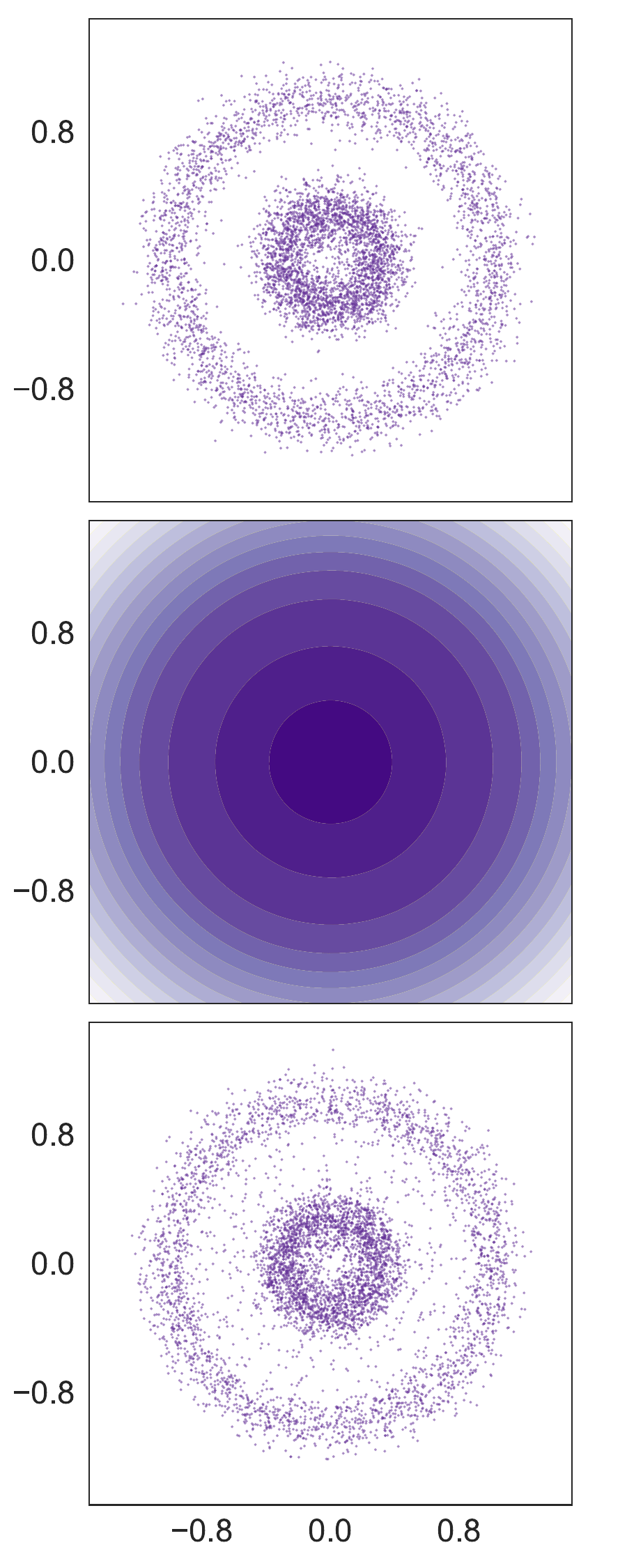}};
        \node at (0.1, 0) {\includegraphics[width=0.20\textwidth]{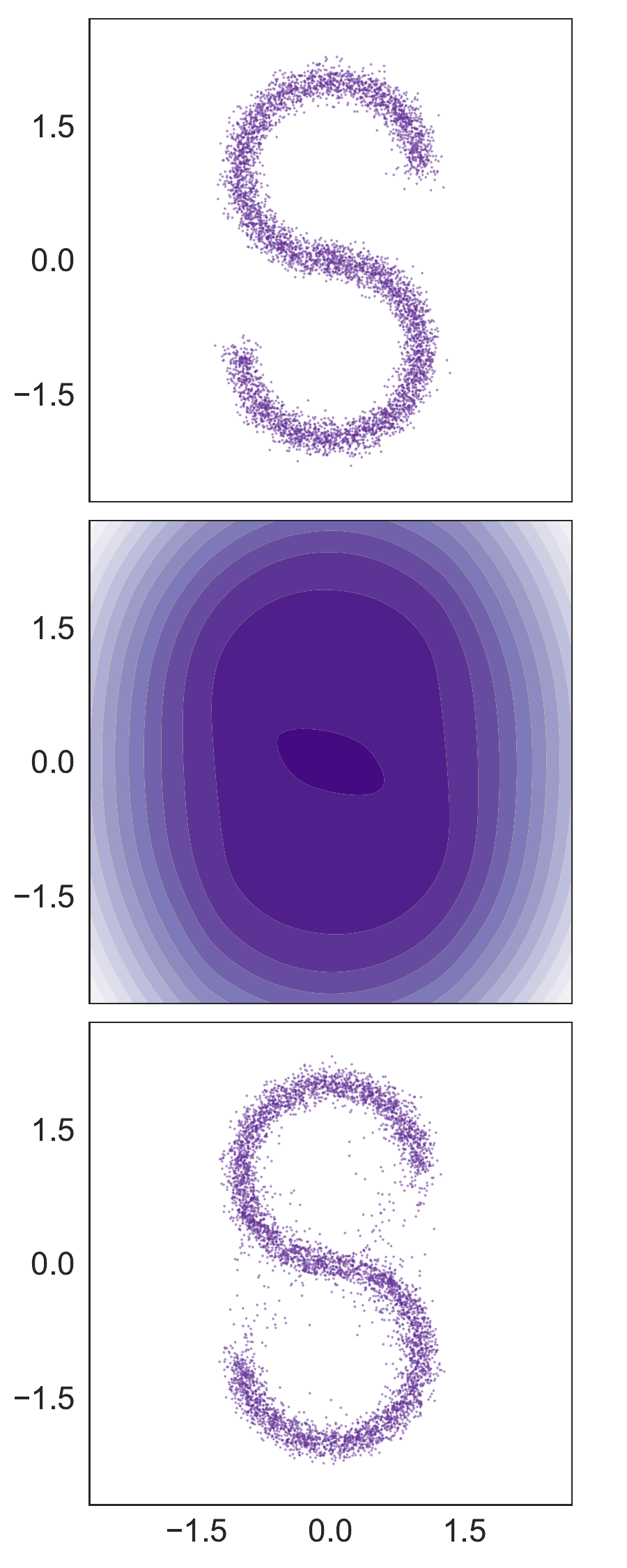}};
        \node at (3.3, 0){\includegraphics[width=0.20\textwidth]{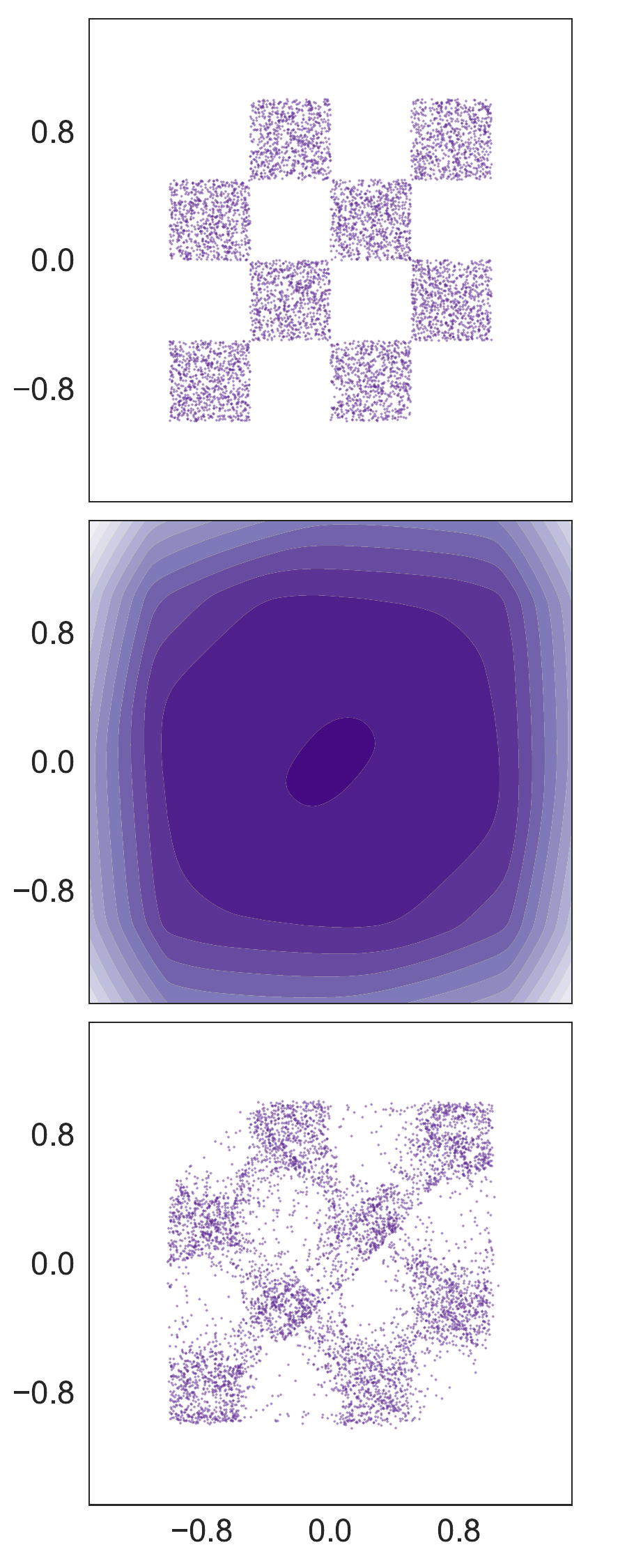}};
        \node at (6.5, 0){\includegraphics[width=0.20\textwidth]{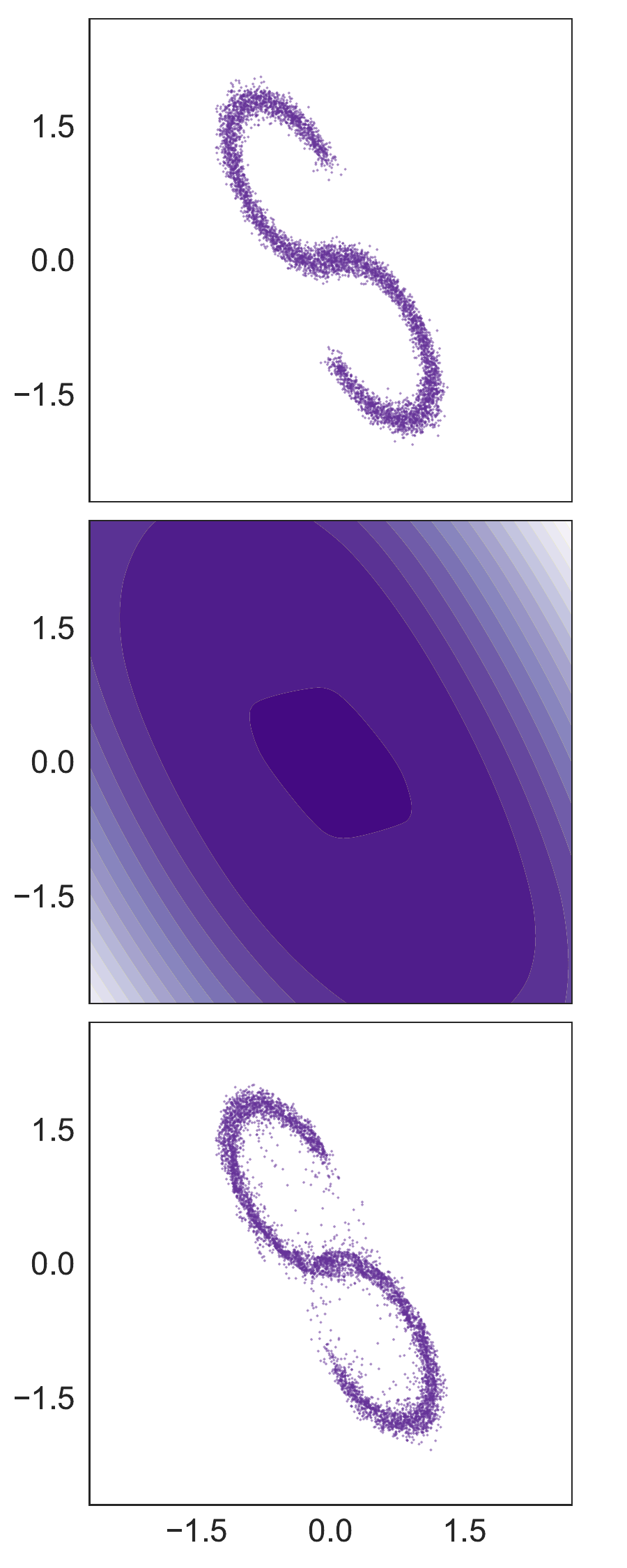}};
        \node at (9.7, 0){\includegraphics[width=0.20\textwidth]{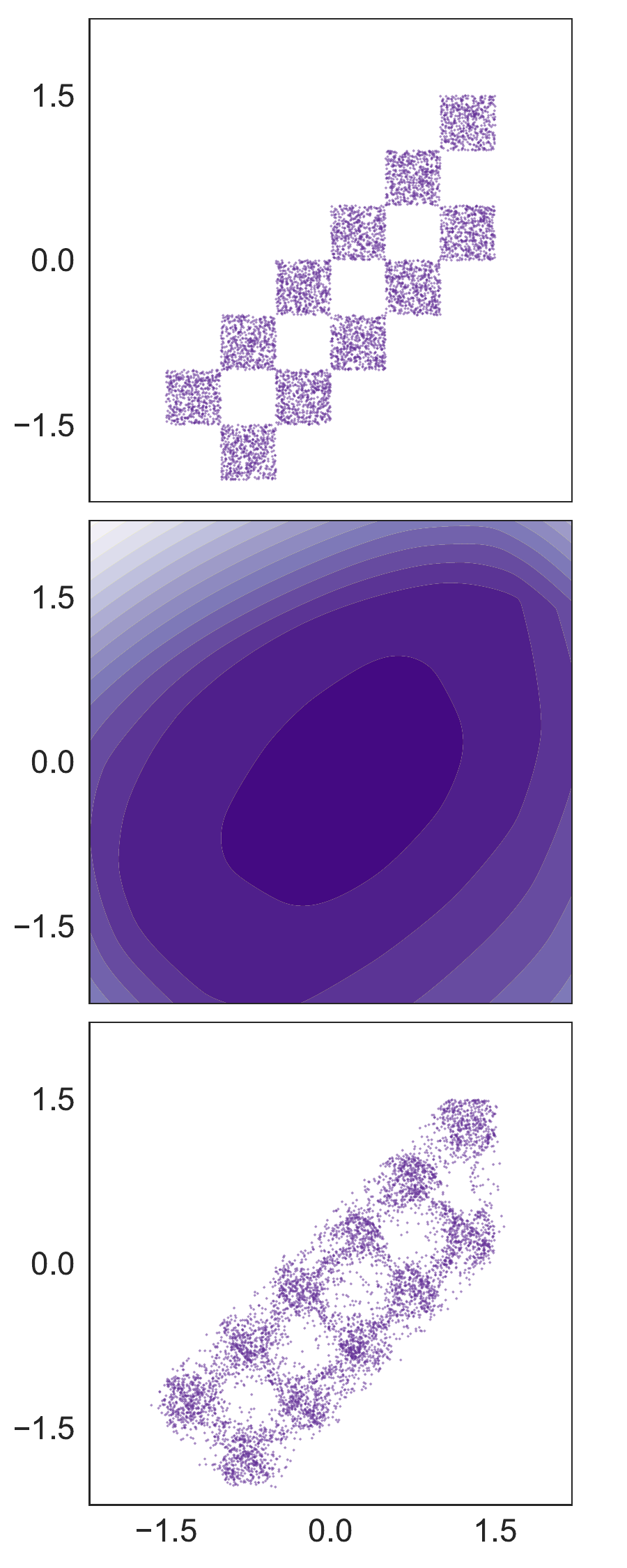}};
        \node at (-3.1, -4.4) {(a)};
        \node at (0.1, -4.4) {(b)};
        \node at (3.3, -4.4) {(c)};
        \node at (6.5, -4.4) {(d)};
        \node at (9.7, -4.4) {(e)};
    \end{tikzpicture}}
    \vspace{-0.4cm} 
    \caption{Samples from $\rho$ (\textbf{top}), level sets of $w_\theta$  (\textbf{middle}) and samples from $\nabla w_\theta^*\,\sharp\, e^{-w_\theta}$ (\textbf{bottom}). }\label{fig:2d}\vspace{-0.2cm} 
    \end{figure} 
    
\textbf{CMFGen.$\;\;$} Our method accurately estimates the conjugate moment potentials, as the associated measures $\nabla w_\theta^*\,\sharp\, \Gibbs_{w_\theta}$ closely align with the target distributions in Figure~\ref{fig:2d}. The second row further illustrates that the conjugate Gibbs factor follows the shape of the distribution $\rho$ in each case. For comparison, we train an ICNN to map a Gaussian directly to the target distribution using the solver of~\citet{amos2023amortizing}. The boxplots in Figure~\ref{fig:box_plots_all}, which show the Sinkhorn divergence~\citep{ramdas2017wasserstein} between generated and target data, demonstrate that CMFGen consistently produces samples of equal or superior quality. Note that CMFGen introduces no additional hyperparameters compared to the generative ICNN, aside from the number of LMC steps used when sampling from $\Gibbs_{w_\theta}$.
\subsection{High-dimensional experiments.} 
\paragraph{Image generation.}
\begin{figure}[b!]
    \centering
    \begin{tabular}{ccc}
         \includegraphics[scale=0.287]{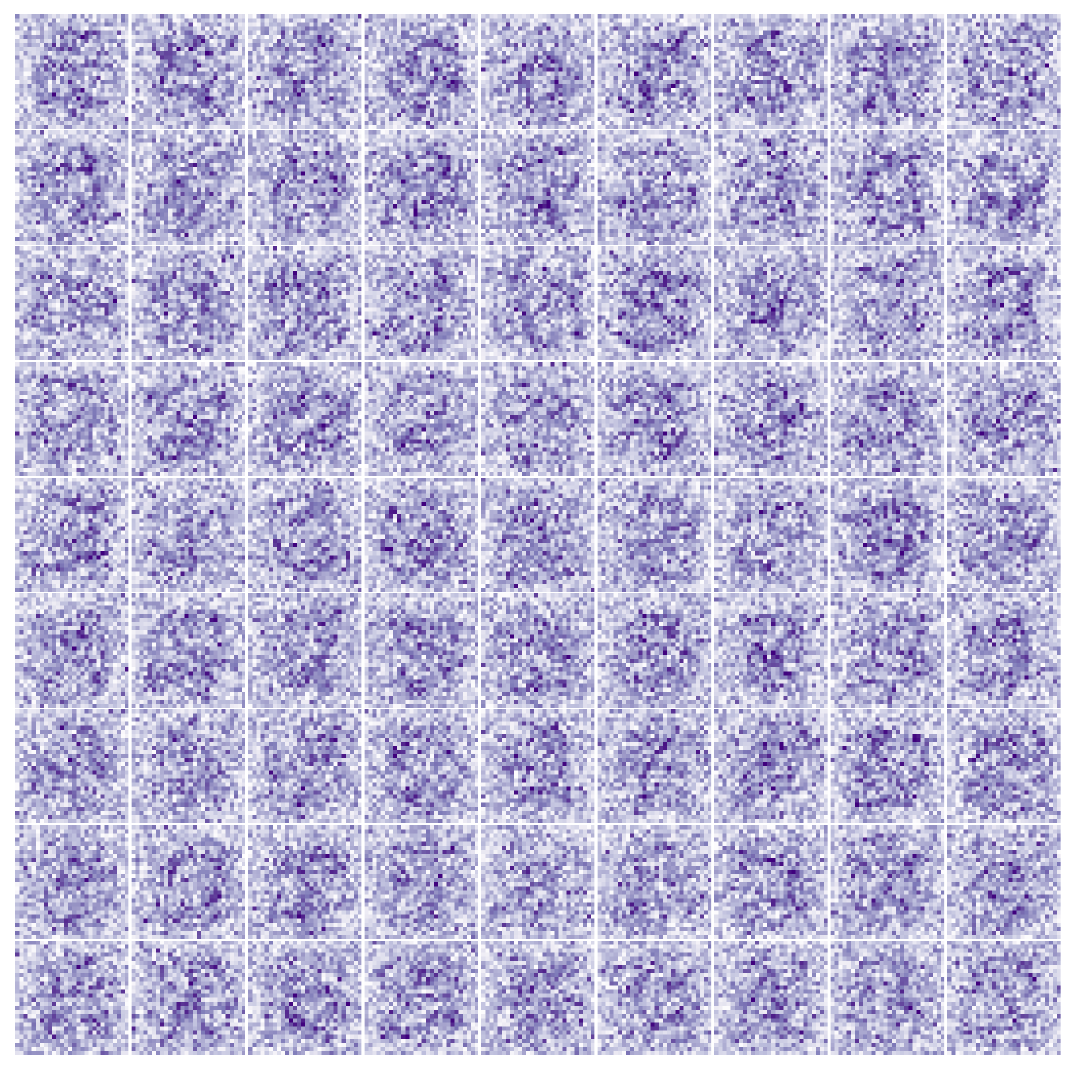} & \hspace{0.35cm} 
         \includegraphics[scale=0.287]{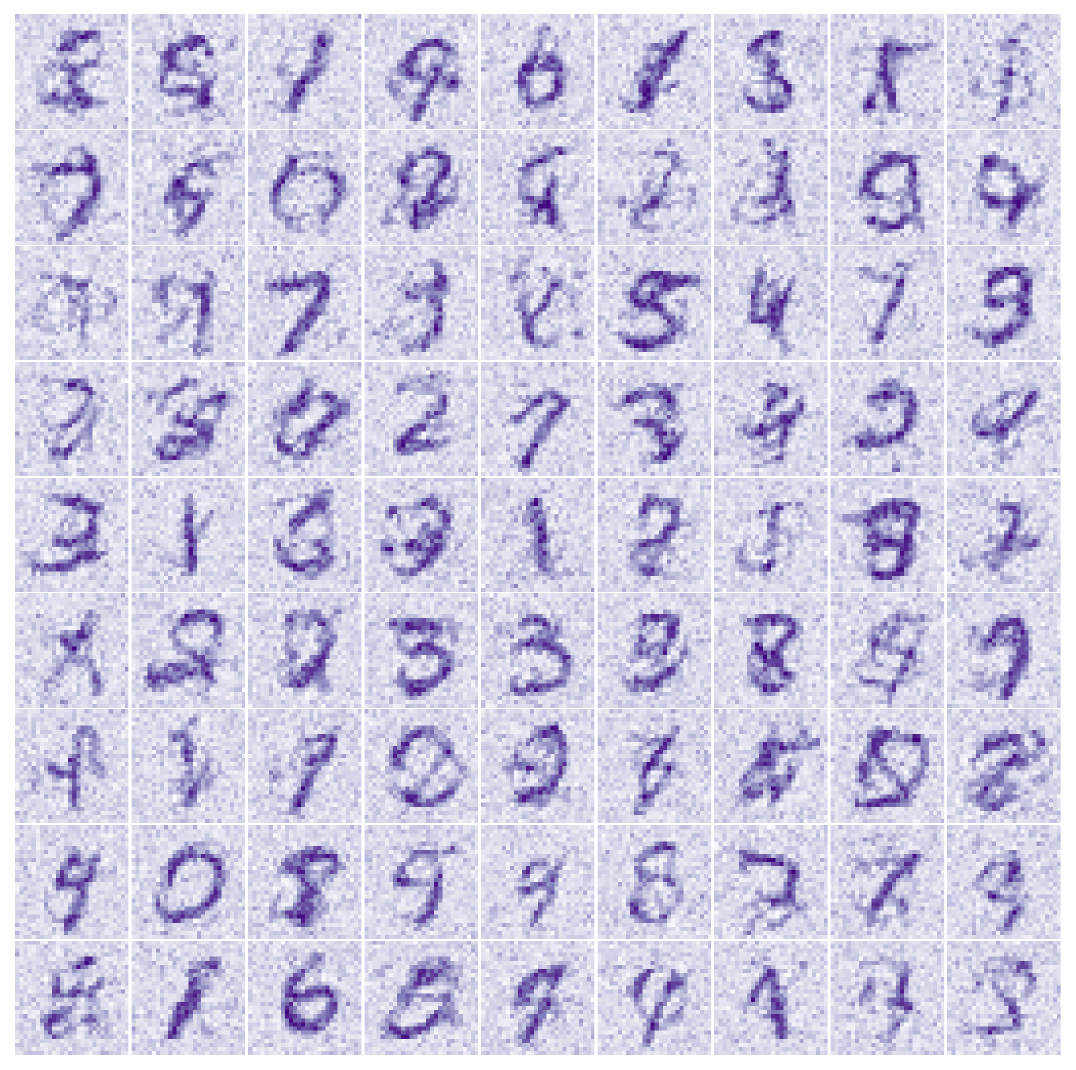} & \hspace{0.35cm}
         \includegraphics[scale=0.287]{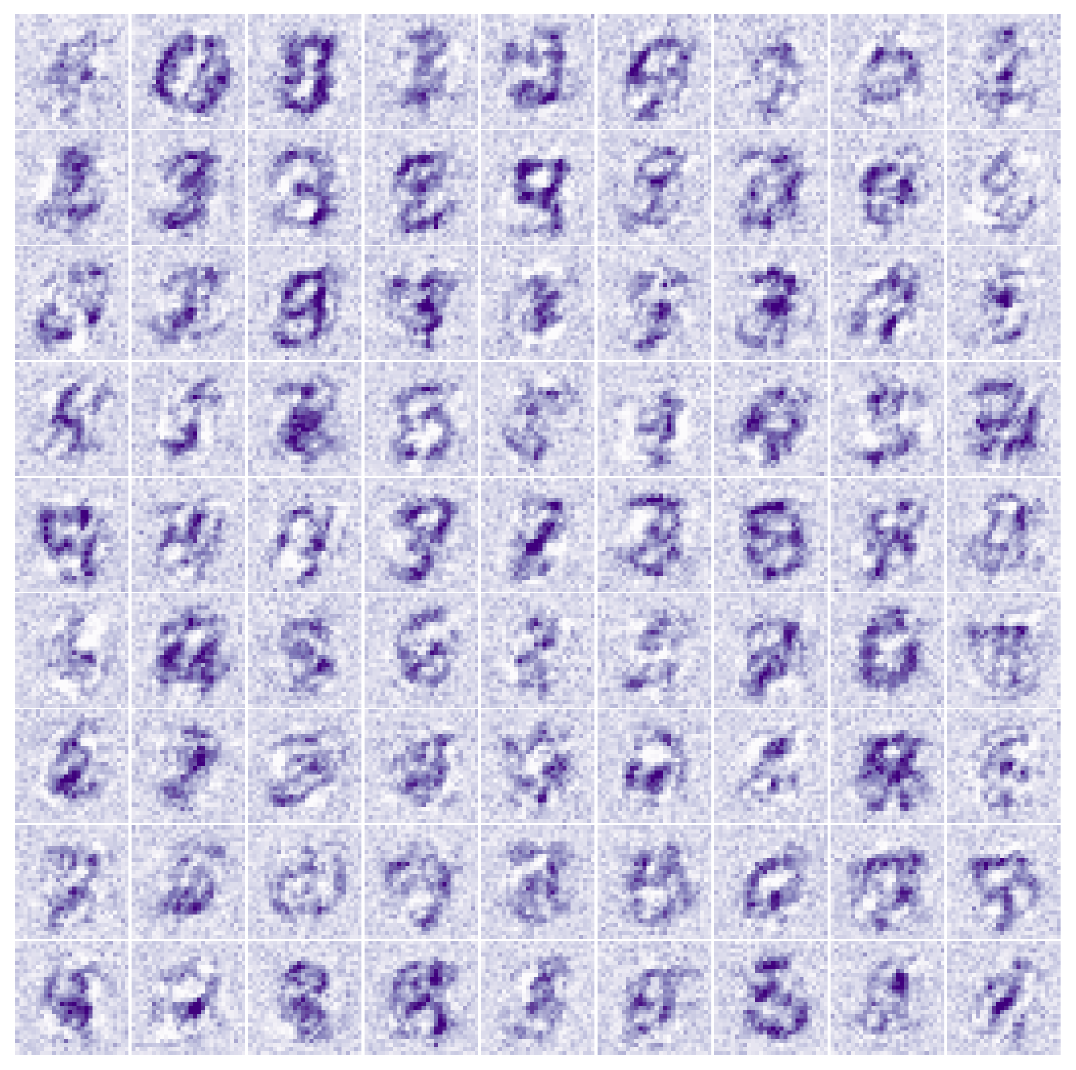} 
    \end{tabular}
    \caption{\textbf{MNIST Generation using CMFGen.} Samples from the Gibbs noise distribution $\Gibbs_{w_\theta}$ (\textbf{left}); digits generated from $\nabla w_{\theta}^* \sharp \Gibbs_{w_\theta}$ (\textbf{middle}); digits generated by an ICNN trained to directly transport Gaussian noise to MNIST (\textbf{right}).}
    \label{fig:generate_mnist_vs_icnn}
\end{figure}
We evaluate CMFGen on MNIST~\citep{lecun2010mnist} and the Cartoon dataset~\citep{royer2018}. As illustrated in Figures~\ref{fig:generate_mnist_vs_icnn}, and~\ref{fig:cartoon}, (see also~\ref{fig:fp_cartoon_bis} for additional generated cartoons), CMFGen successfully generates visually coherent images of digits and cartoon faces. Interestingly, the noise sampled from the log-concave distribution $\Gibbs_{w_\theta}$ already exhibits features of the generated distribution: digits and faces emerge directly in the noise. To the best of our knowledge, this is the first instance where the MNIST distribution has been successfully generated using an ICNN. For comparison, Figures~\ref{fig:generate_mnist_vs_icnn} and~\ref{fig:icnn_cartoon} show samples generated by ICNNs trained to transport a Gaussian distribution to the MNIST and Cartoon data. For both datasets, CMFGen generates samples of higher quality. 
\definecolor{darklavender}{rgb}{0.45, 0.31, 0.59}
\begin{figure}[h!]
    \centering
    \scalebox{1.1}{
    \begin{tikzpicture}
        \node (img1) at (0, 4.3) {\includegraphics[width=7.1cm]{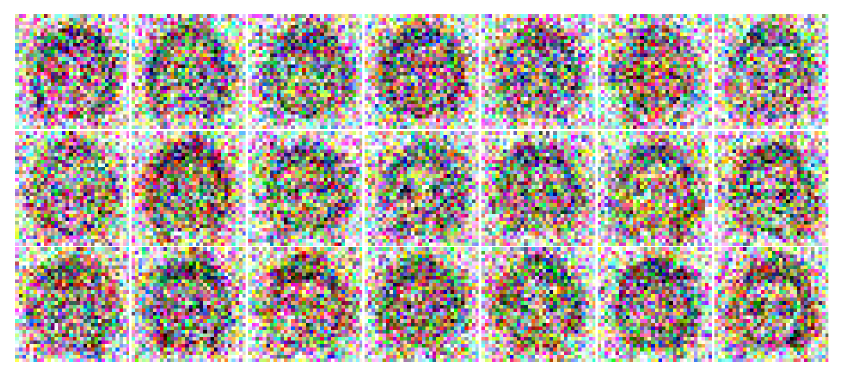}};
        \node (img2) at (7.5, 4.3) {\includegraphics[width=7.1cm]{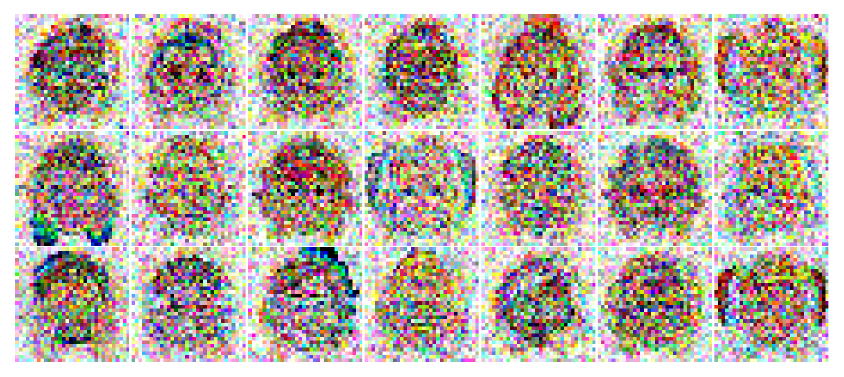}};
        \node (img3) at (0, 0) {\includegraphics[width=7.1cm]{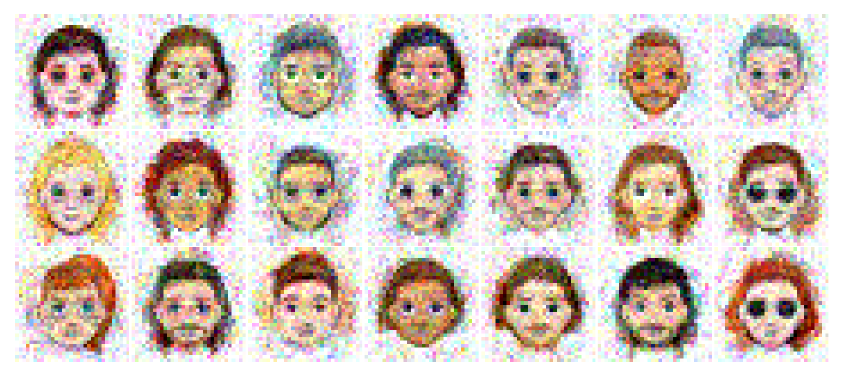}};
        \node (img4) at (7.5, 0) {\includegraphics[width=7.1cm]{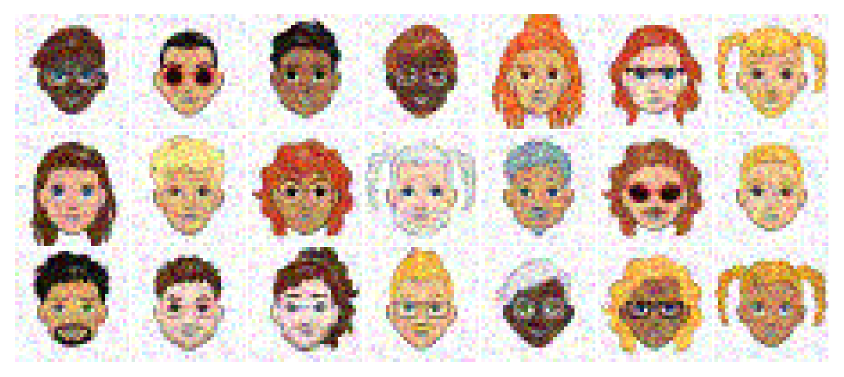}};

        \node (i) at (3.75, 6.3) {\large{$\approx$}};
        \node (i) at (3.75, -2.1) {\large{$\approx$}};
        \node[above=1.7cm] at (img1) {Gibbs $\Gibbs \, {}_{w_\theta}$ obtained from Langevin dynamics};
        \node[above=1.7cm] at (img2) {Gibbs $\Gibbs \, {}_{w_\theta}$ obtained from $\nabla w_\theta \sharp \rho$};
        \node[below=1.8cm] at (img1) {\textcolor{gray} {Conjugate $\nabla w_\theta^*(y) = \arg\sup\limits_{x} \langle x, y \rangle - w(x)$}};

        \node[below=1.8cm] at (img2) {\textcolor{gray} {Transportation map $\nabla w_\theta = (\nabla w_\theta^*)^{-1}$}};

        \node[below=1.8cm] at (img3) {Generated images $\nabla w_\theta^* \sharp \,\Gibbs \, {}_{w_\theta}$};
        \node[below=1.8cm] at (img4) {Data distribution $\rho$};

        \draw[line width=1.0mm, color=gray, -{latex[width=1mm, length=0.8mm]}] (10.85, 1.7) to (10.85, 2.6);

        \draw[line width=1.0mm, color=gray, -{latex[width=1mm, length=0.8mm]}] (4.15, 1.7) to (4.15, 2.6);

        \draw[line width=1.1mm, color=gray, -{latex[width=1mm, length=1mm]}] (-3.35, 2.6) to (-3.35, 1.7);
        \draw[line width=1.1mm, color=gray, -{latex[width=1mm, length=1mm]}] (3.35, 2.6) to (3.35, 1.7);
    \end{tikzpicture}
    }
    \vspace{-0.3cm}
    \caption{\textbf{Cartoon Generation using CMFGen.} The conjugate potential $w_\theta$ is parameterized as an ICNN following~\citet{pmlr-v235-vesseron24a}, with five hidden layers of size 512 and four quadratic input connections, each with two additional layers of size 512. After training with CMFGen, the generative map $\nabla w_\theta$ transforms structured data $\rho$ into the log-concave distribution $\Gibbs_{w_\theta}$. To sample from $\rho$, we first draw a sample from $\Gibbs_{w_\theta}$ using the LMC algorithm, and then apply a conjugate solver to iteratively invert the map $\nabla w_\theta$.
 The strict convexity of $w_\theta$ ensures both \textbf{(i)} invertibility of $\nabla w_\theta$, and \textbf{(ii)} correctness of Langevin dynamics for sampling from $\Gibbs_{w_\theta}$.}
    \label{fig:cartoon}
    \vspace{-0.1cm}
\end{figure}

\paragraph{Image reconstruction.} 
\setlength{\tabcolsep}{0pt}
\renewcommand{\arraystretch}{0.0}
\begin{figure}[b]
    \centering
\includegraphics[width=7.9cm, trim=15 0 15 85, clip]{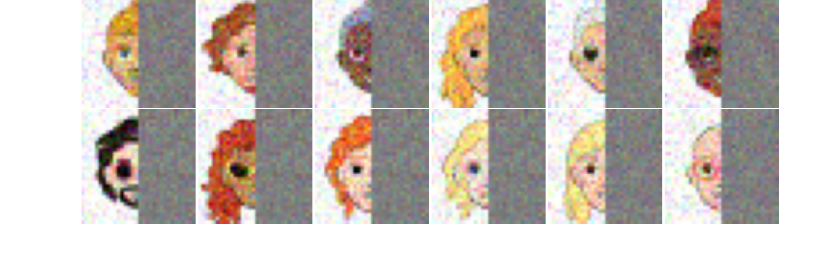}    \includegraphics[width=7.9cm, trim=15 0 15 85, clip]{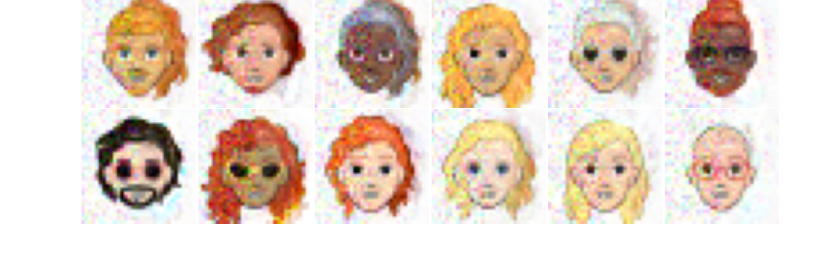}  
    \caption{Image inpainting on Cartoon.}
    \label{fig:recon}
\end{figure}
Similar to normalizing flows~\citep{rezende2015variational}, we have access to the (unnormalized) probability density of the distribution $\nabla w_{\theta}^* \sharp \Gibbs_{w_\theta}$ generated by CMFGen and CMFMA: that density is proportional to $e^{-\mathcal{E}_{w_\theta}(x)}$ where $\quad \mathcal{E}_{w_\theta}(x) = {w_\theta}(\nabla {w_\theta}(x)) - \ln( \det H_{{w_\theta}}(x))$ (see \S\ref{sec:mongeampere}). After training $w_\theta$, this enables downstream tasks such as image in-painting. To evaluate this, we mask half of the pixels in test samples from MNIST and the Cartoon dataset, and perform gradient ascent on masked pixels to maximize the log-probability of the full image. As seen in Figures~\ref{fig:recon}, ~\ref{fig:recon_mnist} and~\ref{fig:recon_cartoon_all}, our method effectively reconstructs missing regions.
\section{Conclusion}
We borrowed inspiration from ~\citep{cordero2015moment} to define a new representation for probability measures. For a given $\rho$, we prove the existence of a convex potential $w$ such that $\nabla w^*\,\sharp\,\Gibbs_w = \rho$. We show that this representation is better suited to generative modeling, because the Gibbs factor $\Gibbs_w$ follows more closely the original measure $\rho$, in contrast to~\citeauthor{cordero2015moment}'s approach, $\nabla u\,\sharp\,\Gibbs_u=\rho$, which results in a Gibbs factor $\Gibbs_u$ whose spread is inversely proportional to that of $\rho$. Our conjugate measure factorization uses $w$ to sample noises (using LMC, guaranteed to work with log-concave distributions) and transforms these codes using $\nabla w^*$. We propose to parameterize the conjugate potential $w$ as an ICNN $w_\theta$, and estimate it using the OT toolbox in two settings: when the target distribution is accessible via samples, and when it is known up to a normalizing constant. We validate both approaches on generative modeling tasks. In the future, we wish to explore the suitability of replacing $\mathcal{N}(0,I)$ with our pre-trained Gibbs factor $\Gibbs_w$ in generative modeling pipelines, and retrain maps on top of it.
\looseness=-1
\FloatBarrier

\subsubsection*{Acknowledgements}
This work was performed using HPC resources from GENCI–IDRIS (Grant 2023-103245). This work was partially supported by Hi! Paris through the PhD funding of Nina Vesseron.

\bibliography{references}
\bibliographystyle{plainnat}

\newpage
\appendix

\onecolumn

\appendix

\section{Appendix / Supplementary Material}

\subsection{Sketch of Proof for Theorem~\ref{theo:fixpoint}} \label{subsec:appendix_sketch_proof_th}
\textit{Proof Sketch.} The complete proof, along with the theorems used in the proof, can be found in Appendix~\ref{subsec:appendix_proof_th}. We rely on the identity $\nabla w^* \circ \nabla w = \mathrm{id}$ that holds for strictly convex functions $w$ and show the existence of a strictly convex potential $w$ that verifies $\nabla w\,\sharp\, \rho = \Gibbs_{w}$. To proceed, we first introduce some notations. Let $\mathcal{L}(\mathbb{R}^d)$ be the set of functions mapping $\mathbb{R}^d$ to $\mathbb{R} \cup \{+\infty \}$. Given a non-negligible set $\Omega \subset \mathbb{R}^d$ and a continuous function $v: \Omega \rightarrow \mathbb{R}$ such that $\int_{\Reals^d} e^{-v(z)} \mathrm{d} z$ is finite, we denote by $\Gibbs_v^\Omega$ the probability measure with density
$$
 \frac{e^{-v(x)}1_{x \in \Omega}}{\int_{\Omega} e^{-v(z)} \mathrm{d} z} \,.$$
We then define, for a probability $\rho$ supported on a non-negligible set $\Omega$, the map $G_\rho^\Omega : \mathcal{L}(\mathbb{R}^d) \rightarrow \mathcal{L}(\mathbb{R}^d)$ which to a potential $v$ associates the Brenier potential, defined in~\eqref{eq:max_corr}, from $\rho$ to $\Gibbs_{v}^\Omega$
$$
G_\rho^\Omega(v) := \Brenier(\rho, \Gibbs_{v}^\Omega)
$$
The fixed points of $G_\rho^\Omega$ are precisely the convex functions $v$ such that $\nabla v\,\sharp\, \rho = \Gibbs_{v}^\Omega$ from which we can construct $w$ defined as $w(x) = v(x)$ for $x \in \Omega$ and $w(x) = +\infty$ elsewhere. This function $w$ is convex and verifies $\nabla w\,\sharp\, \rho = \Gibbs_{w}$ (since $\rho$ is supported on $\Omega$). We rely on Schauder's fixed point theorem to show that $G_\rho^\Omega$ admits a fixed-point. Given a Banach space $(X, \|.\|)$ and a compact, convex and nonempty set $\mathcal{M} \subset X$, Schauder's theorem states that any continuous operator $A : \mathcal{M} \rightarrow \mathcal{M}$ has at least one fixed point. In this proof, we consider the Banach space of continuous functions over a compact set $\Omega$, denoted $\mathscr{C}(\Omega)$, equipped with the supremum norm, and establish the following theorem that directly implies Theorem~\ref{thm:mainresult}.
\begin{lemma} \label{theo:fixpoint}
    Let $\rho$ be an absolutely continuous probability measure supported on a compact, convex set $\Omega \subset \Reals^d$. Then $G_\rho^\Omega$ admits a fixed point in 
    \begin{align*}
        \mathcal{M} &= \{ f \in \mathscr{C}(\Omega) \; \text{such that}\;\; \forall x,y\in \Omega, \;|f(x) - f(y)| \leq R \|x-y\|_2 \; \text{and}\; f(0_{\Reals^d})=0\}
    \end{align*}
    where $R$ is the radius of an euclidean ball that contains $\Omega$ i.e., $\Omega \subset \mathcal{B}(0,R)= \{x \in \Reals^d, \|x\|_2 \leq R\}$.
\end{lemma}
We first show that the set $\mathcal{M}$ is a non-empty, compact, convex set of $X = (\mathscr{C}(\Omega), \|\;\|_\infty)$ by relying Arzela-Ascoli's theorem for the compactness. We then use Brenier's theorem that, given the absolute continuity of $\rho$, garantees the existence of the Brenier potential $\Brenier(\rho, \Gibbs_v^\Omega)$ and the fact that $G_{\rho}^\Omega$ is therefore well-defined on the set $\mathcal{M} \subset \mathscr{C}(\Omega)$. Moreover, the gradient of the obtained potential $G_{\rho}^\Omega(v)$ transports $\rho$ on $\Gibbs_v^\Omega$ which is compactly supported on $\Omega \subset B(0, R)$. For this reason, $\nabla G_{\rho}^\Omega(v)$ is bounded by $R$ on $\Omega$ and the obtained potential is Lipschitz continuous with constant $R$. This permits to show that $G_{\rho}^\Omega(\mathcal{M}) \subseteq \mathcal{M}$. To prove the continuity of $G_{\rho}^\Omega$ on $\mathcal{M}$, one can remark that $G_{\rho}^\Omega = H_{\rho} \circ F^\Omega$, with $F^\Omega(v) = \Gibbs_v^\Omega$ and $H_{\rho}(\mu) = \Brenier(\rho, \mu)$, both viewed as functions from $\mathscr{C}(\Omega)$ to $\mathscr{C}(\Omega)$. To show that $F^\Omega$ is continuous, we use the definition of continuity in $X= (\mathscr{C}(\Omega), \| \; \|_\infty)$ while we rely on Theorem 1.52 from \cite{San15a} to prove that $H_{\rho}$ is continuous. We conclude by applying Schauder's theorem.

\subsection{Proof Lemma~\ref{theo:fixpoint}}
\label{subsec:appendix_proof_th}
Let $\rho$ be an absolutely continuous probability measure supported on a compact, convex set $\Omega \subset \Reals^d$. $\Omega$ being compact, it is bounded and there exists a real number $R$ such that $\Omega$ be included in the ball $\mathcal{B}(0,R)$. We begin by recalling Schauder's fixed point theorem, which is central to our proof for showing that  $G_\rho^\Omega$, defined as: $$
G_\rho^\Omega(v) := \Brenier(\rho, \Gibbs_{v}^\Omega)
$$
where $\Gibbs_v^\Omega$ the probability measure with density
$$
 \frac{e^{-v(x)}1_{x \in \Omega}}{\int_{\Omega} e^{-v(z)} \mathrm{d} z} \,,$$
admits a fixed-point.

\begin{theorem*}[Schauder's fixed point Theorem]
    Let $(X, \|.\|)$ be a Banach space and $\mathcal{M} \subset X$ is compact, convex, and nonempty. Any continuous operator $A : \mathcal{M} \rightarrow \mathcal{M}$ has at least one fixed point.
\end{theorem*}

We consider the Banach space $X= (\mathscr{C}(\Omega), \| \; \|_\infty)$ to study the continuity of $G_{\rho}^\Omega$ on the set $$\mathcal{M} = \{ f \in \mathscr{C}(\Omega) \; \text{such that}\; \forall x,y\in \Omega, |f(x) - f(y)| \leq R \|x-y\|_2 \; \text{and}\; f(0_{\Reals^d})=0\}.$$

We prove the two following lemmas in order to use Schauder's theorem.
\begin{lemma}
\label{lemma_compacity}
    The set $\mathcal{M}$ is a non-empty, compact, convex set of $X$.
\end{lemma}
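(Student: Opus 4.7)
The plan is to verify the three properties of $\mathcal{M}$ separately, with compactness being the substantive step, handled via the Arzelà–Ascoli theorem.

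First, for non-emptiness, I will simply exhibit the zero function $f \equiv 0$: it is continuous, satisfies $|f(x)-f(y)| = 0 \leq R\|x-y\|_2$, and vanishes at $0_{\Reals^d}$ (assuming, as is implicit, that $0_{\Reals^d} \in \Omega$; otherwise one replaces the anchoring point by any fixed element of $\Omega$ without affecting the argument). Next, for convexity, I will take $f,g \in \mathcal{M}$ and $t\in[0,1]$, set $h = tf + (1-t)g$, and verify both defining conditions by linearity: continuity is preserved, $h(0_{\Reals^d}) = 0$, and the triangle inequality gives $|h(x)-h(y)| \leq t|f(x)-f(y)| + (1-t)|g(x)-g(y)| \leq R\|x-y\|_2$.

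The main work lies in compactness. I will first establish closedness of $\mathcal{M}$ in $(\mathscr{C}(\Omega), \|\cdot\|_\infty)$: if $(f_n) \subset \mathcal{M}$ converges uniformly to $f$, then $f$ is continuous, uniform convergence forces $f(0_{\Reals^d}) = \lim f_n(0_{\Reals^d}) = 0$, and the $R$-Lipschitz bound passes to the limit by taking $n\to\infty$ in the pointwise inequality $|f_n(x) - f_n(y)| \leq R\|x-y\|_2$. Then I will apply the Arzelà–Ascoli theorem on the compact metric space $\Omega$: I need uniform boundedness and equicontinuity. Uniform boundedness follows from the Lipschitz property combined with the anchoring at $0$: for any $f \in \mathcal{M}$ and $x \in \Omega \subset \mathcal{B}(0,R)$,
\[
|f(x)| = |f(x) - f(0_{\Reals^d})| \leq R\|x\|_2 \leq R^2,
\]
so $\|f\|_\infty \leq R^2$. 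Equicontinuity is immediate from the common Lipschitz constant $R$: given $\varepsilon>0$, choosing $\delta = \varepsilon/R$ works uniformly in $f \in \mathcal{M}$. Arzelà–Ascoli then yields that $\mathcal{M}$ is relatively compact, and combined with closedness, compact.

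I do not expect any serious obstacle here, since $\mathcal{M}$ is by construction a standard Lipschitz ball pinned at a point. The only subtlety worth flagging is the implicit assumption that $0_{\Reals^d} \in \Omega$ so that the anchoring condition is well-posed; if $\rho$ is such that the barycenter is shifted, one would replace $0_{\Reals^d}$ by any chosen $x_0 \in \Omega$, and the bound $|f(x)| \leq R\cdot\mathrm{diam}(\Omega) \leq 2R^2$ takes over, leaving the argument otherwise unchanged.
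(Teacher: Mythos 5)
Your proof is correct and follows essentially the same route as the paper's: non-emptiness via the zero function, convexity by the triangle inequality, and compactness by combining closedness under uniform limits with the Arzelà--Ascoli theorem (equiboundedness from the $R$-Lipschitz bound anchored at $0$, equicontinuity from the common Lipschitz constant). Your remark about the implicit assumption $0_{\Reals^d}\in\Omega$ is a fair observation that the paper does not address, but it does not change the argument.
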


\begin{lemma}
\label{lemma_continuity}
    $G_{\rho}^\Omega$ is well defined and continuous on $\mathcal{M}$ and $G_{\rho}^\Omega(\mathcal{M}) \subseteq \mathcal{M}$. 
\end{lemma}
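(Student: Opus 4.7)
The plan is to invoke Schauder's fixed point theorem on the operator $G_\rho^\Omega$ restricted to $\mathcal{M}$, which reduces the problem to verifying the two lemmas stated: (i) $\mathcal{M}$ is a non-empty, compact, convex subset of $X=(\mathscr{C}(\Omega),\|\cdot\|_\infty)$, and (ii) $G_\rho^\Omega$ is a continuous self-map of $\mathcal{M}$. Once these are in hand, Schauder immediately supplies a fixed point $w\in\mathcal{M}$, which by construction of $G_\rho^\Omega$ satisfies $\nabla w\,\sharp\,\rho = \Gibbs_w^\Omega$, and extending $w$ by $+\infty$ off $\Omega$ yields the conjugate moment potential required for Theorem~\ref{thm:mainresult__}.

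For Lemma~\ref{lemma_compacity}, I would check non-emptiness by exhibiting the zero function (or any small linear functional vanishing at the origin) in $\mathcal{M}$. Convexity is immediate: both the Lipschitz bound $|f(x)-f(y)|\leq R\|x-y\|_2$ and the normalization $f(0)=0$ are preserved under convex combinations. Compactness is the one substantive point: I would apply Arzel\`a--Ascoli, noting that every $f\in\mathcal{M}$ is $R$-Lipschitz (hence equicontinuous), and that $|f(x)|=|f(x)-f(0)|\leq R\|x\|_2\leq R^2$ on $\Omega$ (hence equibounded). Closedness of $\mathcal{M}$ in the sup norm follows because pointwise Lipschitz inequalities and the value $f(0)=0$ pass to uniform limits.

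For Lemma~\ref{lemma_continuity}, the well-definedness of $G_\rho^\Omega(v)=\Brenier(\rho,\Gibbs_v^\Omega)$ uses Brenier's theorem: $\rho$ is absolutely continuous and $\Gibbs_v^\Omega$ is a well-defined probability measure (since $v$ is continuous on the compact $\Omega$, so $e^{-v}$ is bounded above and below), so a unique convex Brenier potential exists, which I can normalize by $G_\rho^\Omega(v)(0)=0$. To show $G_\rho^\Omega(\mathcal{M})\subseteq\mathcal{M}$, I use that $\nabla G_\rho^\Omega(v)\,\sharp\,\rho=\Gibbs_v^\Omega$ is supported on $\Omega\subset\mathcal{B}(0,R)$, so $\|\nabla G_\rho^\Omega(v)(x)\|_2\leq R$ $\rho$-a.e.; convexity of the Brenier potential then upgrades this to the pointwise $R$-Lipschitz bound on $\Omega$. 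For continuity, I would factor $G_\rho^\Omega = H_\rho\circ F^\Omega$, with $F^\Omega(v)=\Gibbs_v^\Omega$ and $H_\rho(\mu)=\Brenier(\rho,\mu)$. The map $F^\Omega$ is continuous because if $v_n\to v$ uniformly on $\Omega$, then $e^{-v_n}\to e^{-v}$ uniformly on $\Omega$, so both the normalizing constants and the densities converge, yielding convergence of $\Gibbs_{v_n}^\Omega$ in total variation (and hence weakly). Continuity of $H_\rho$ is the deepest input and I would invoke Theorem~1.52 of \citet{San15a}, which gives stability of Kantorovich/Brenier potentials in uniform convergence on compact sets when the target measures converge weakly (using that $\rho$ is absolutely continuous).

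The main obstacle is the continuity of $H_\rho$: stability of the Brenier potential with respect to perturbations of the target is not automatic and requires the Santambrogio stability result cited above. Once that is invoked, the two lemmas combine with Schauder to produce the fixed point, completing the proof.
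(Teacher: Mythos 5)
Your proposal follows essentially the same route as the paper's proof of Lemma~\ref{lemma_continuity}: well-definedness via Brenier's theorem using absolute continuity of $\rho$ and boundedness of $e^{-v}$ on the compact $\Omega$; the self-map property from the fact that $\nabla G_\rho^\Omega(v)$ pushes $\rho$ onto a measure supported in $\mathcal{B}(0,R)$, giving the $R$-Lipschitz bound; and continuity via the factorization $G_\rho^\Omega=H_\rho\circ F^\Omega$ with uniform convergence of $e^{-v_n}$ handling $F^\Omega$ and Theorem~1.52 of \citet{San15a} handling $H_\rho$. The argument is correct and matches the paper's.
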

By applying these two lemmas along with Schauder's theorem, we conclude that $G_{\rho}^\Omega$ has a fixed point $v_{\text{opt}}$ in $M \subset \mathscr{C}(\Omega)$. Moreover, the absolute continuity of $\Gibbs_{v_{\text{opt}}}^\Omega$ ensures that the optimal solution $v_{\text{opt}}$ whose gradient transports $\rho$ onto $\Gibbs_{v_{\text{opt}}}^\Omega$ is strictly convex on the support of $\rho$, $\Omega$. Additionally, the gradient of its Legendre transform $\nabla v^*_{\text{opt}}$ is the OT map from $\Gibbs_{v_{\text{opt}}}^\Omega$ to $\rho$.

\subsubsection{Proof of Lemma \ref{lemma_compacity}.}
\paragraph{$\mathcal{M}$ is non-empty}
$0$ is included in $\mathcal{M}$ which is therefore not empty. 
\paragraph{$\mathcal{M}$ is convex} Let $f$ and $g$ functions of $\mathcal{M}$ and $\lambda \in [0,1]$. We have that $\lambda f + (1-\lambda)g \in \mathscr{C}(\Omega)$ and that $\lambda f(0) + (1-\lambda)g(0) = 0$. Let $x \in \Omega$ and $y \in \Omega$, then : 
\begin{align*}
    |(\lambda f(x) + (1-\lambda)g(x)) - (\lambda f(y) + (1-\lambda)g(y))| & \leq \lambda |f(x) - f(y)| + (1-\lambda) |g(x) - g(y)| \\
    &\leq \lambda R \|x-y\|_2 + (1-\lambda) R \|x-y\|_2 = R \|x-y\|_2
\end{align*}
where, for the first inequality, we rely on the fact that the absolute value verifies the triangular inequality and for the second inequality, we use that $f$ and $g$ belong to $\mathcal{M}$ and are therefore $R$-lipschtiz. 
\paragraph{$\mathcal{M}$ is compact}
We rely on Arzela-Ascoli theorem which is recalled below to show that the set $\mathcal{M}$ is compact. We also recall the definitions of equicontinuity and equiboundness. 

\begin{definition}
    A family of functions $\mathcal{F}\subset \mathscr{C}(\Omega)$ is equibounded if there exists a constant $C_0$ such that for all $x \in \Omega$ and $f \in \mathcal{F}$, we have $|f(x)| \leq C_0$.
\end{definition}

\begin{definition}
    A family of functions $\mathcal{F}\subset \mathscr{C}(\Omega)$ is equicontinous if for all $\varepsilon >0$, there exists $\delta$ such that for $x,y \in \Omega$, 
    $$\|x-y\| < \delta \implies \forall f \in \mathcal{F} \; |f(x)-f(y)| < \varepsilon$$
\end{definition}
  
\begin{theorem*}[Arzela-Ascoli]
    Let $(K,d)$ a compact space, a family of functions $\mathcal{F} \subset \mathscr{C}(K)$ is relatively compact if and only if $\mathcal{F}$ is equibounded and equicontinuous. 
\end{theorem*}
The family of functions $\mathcal{M}$ is equibounded. In fact, for $f \in M$ and $x \in \Omega$, we have that: 
$$|f(x)| = |f(x) - f(0)| \leq R\|x-0\|_2 \leq R^2$$
The first equality comes from the fact that $f \in M$, therefore $f(0)=0$, the first inequality uses the fact that $f$ is Lipschitz continuous with constant $R$ while the last one is due to the fact that $x \in \Omega \subset B(0,R)$.

The family of functions $\mathcal{M}$ is equicontinuous. For $\varepsilon >0$, we define $\delta = \frac{\varepsilon}{R}$. Let $f \in \mathcal{M}$ and $x, y \in \Omega$ such that $\|x-y\|_2 < \delta$ then because $f$ is Lipschitz on $\Omega$, we have that $$|f(x)-f(y)| \leq R \|x-y\|_2 < R \delta = \varepsilon \,.$$

Then, according to Arzela-Ascoli theorem, $\mathcal{M}$ is relatively compact i.e. its closure is compact.

\paragraph{$\mathcal{M}$ is closed in $X=(\mathscr{C}(\Omega), \| \; \|_\infty)$}
Let $(f_n)$ be a sequence of functions in $\mathcal{M}$ that converges uniformly to a limiting function $f$. We have that $f \in \mathscr{C}(\Omega)$ which is closed. Moreover, because $(f_n)$ converges uniformly to $f$, it also converges pointwise and in particular $f_n(0) = 0 \rightarrow 0 = f(0)$. Finally, because $\forall n$, $f_n$ is Lipschitz continuous, we have 
$$|f_n(x)-f_n(y)| \leq R \|x-y\|_2 \; \forall n$$
By taking the limit when $n \rightarrow +\infty$ we get that: 
$$|f(x)-f(y)| \leq R \|x-y\|_2$$
and $f$ is Lipschitz continuous with constant $R$ and $f \in \mathcal{M}$. This proves that $\mathcal{M}$ is closed and that $\mathcal{M}$ is therefore compact.
\subsubsection{Proof of Lemma \ref{lemma_continuity}.}

\paragraph{$G_{\rho}^\Omega$ is well-defined on $\mathcal{M}$} For a given $v\in \mathcal{M}$, the fact that $v \in \mathscr{C}(\Omega)$ and that $\Omega$ is non-negligible (because $\rho$ is an absolutely continuous probability measure supported on $\Omega$) ensure the measure $\Gibbs_v^\Omega$ to be well-defined. Using Brenier's theorem, the absolute continuity of $\rho$ garantees the existence of the Brenier potential $\Brenier(\rho, \Gibbs_v^\Omega)$ and the fact that $G_{\rho}^\Omega$ is therefore well-defined on the set $\mathcal{M} \subset \mathscr{C}(\Omega)$.

\paragraph{$G_{\rho}^\Omega(\mathcal{M}) \subset \mathcal{M}$}
For a given $v\in \mathcal{M}$, the Brenier potential $G_{\rho}^\Omega(v)$ verifies $G_{\rho}^\Omega(v)(0) = 0$ by definition. As a convex function which takes real values on $\Omega$, $G_{\rho}^\Omega(v)$ is continuous on $\Omega$ and $G_{\rho}^\Omega(v) \in \mathscr{C}(\Omega)$. Moreover, the gradient of the obtained potential $\nabla G_{\rho}^\Omega(v)$ transports $\rho$ on $\Gibbs_v^\Omega$. The probability measures $\Gibbs_v^\Omega$ and $\rho$ are both compactly supported on $\Omega \subset B(0, R)$ which implies that the support of the optimal transport plan $\gamma = (i_d, \nabla G_{\rho}^\Omega(v))_{\#} \rho$ is included in $\Omega \times \Omega$. $\Omega$ being the support of $\rho$, we have that for each $x \in \Omega$, $\nabla G_{\rho}^\Omega(v)(x) \in \Omega \subset B(0, R)$ and in particular $\|\nabla G_{\rho}^\Omega(v)(x)\|_2 \leq R$. $\Omega$ being convex and $G_{\rho}^\Omega(v)$ being differentiable in the interior of $\Omega$, we can apply the mean value theorem to deduce that: 
$$\forall x, y \in \Omega \; |G_{\rho}^\Omega(v)(x) - G_{\rho}^\Omega(v)(y)| \leq \sup_{z \in \Omega} \|\nabla G_{\rho}^\Omega(v)(z)\|_2 \|x - y\|_2 \leq R \|x - y\|_2$$

$G_{\rho}^\Omega(v)$ is therefore Lipschitz continuous with constant $R$ and it belongs to the set $\mathcal{M}$. 

\paragraph{$G_{\rho}^\Omega$ is continuous on $\mathcal{M}$}
To prove the continuity of $G_{\rho}^\Omega$ on $\mathcal{M}$, one can remark that $G_{\rho}^\Omega = H_{\rho} \circ F^\Omega$, with 
\begin{align*}
    F^\Omega : \;\mathcal{M} & \longrightarrow \mathcal{N} \\
     v &\longrightarrow \Gibbs_v^\Omega
\end{align*}
and 
\begin{align*}
    H_{\rho} : \; \mathcal{N} &\longrightarrow \mathscr{C}(\Omega) \\
     \mu &\longrightarrow \Brenier(\rho, \mu)
\end{align*}
with $\mathcal{N} = \{ \Gibbs_v^\Omega, v \in \mathscr{C}(\Omega)\} \subset \mathscr{C}(\Omega)$.
We show that both function $F^\Omega$ and $H_\rho$ are continuous in $X= (\mathscr{C}(\Omega), \| \; \|_\infty)$.
Let $v \in \mathscr{C}(\Omega)$ and $(v_n)$ a familly of functions of $\mathscr{C}(\Omega)$ that converges uniformly to $v$. We will show that the family $\left(F^\Omega(v_n)\right)$ converges uniformly to $F^\Omega(v)$.
Let $x$ in $\Omega$ and $n \in \mathbb{N}$, we have:
\begin{align*}
    |F^\Omega(v_n)(x) - F^\Omega(v)(x)| &= \Bigg|\frac{e^{-v_n(x)}}{\int_\Omega e^{-v_n(z)}\mathrm{d} z} - \frac{e^{-v(x)}}{\int_\Omega e^{-v(z)}\mathrm{d} z}\Bigg| \\
    &= \Bigg| \frac{\left( e^{-v_n(x)} \int_\Omega e^{-v(z)}\mathrm{d} z \right) - \left(e^{-v(x)} \int_\Omega e^{-v_n(z)}\mathrm{d} z\right)}{\int_\Omega e^{-v_n(z)}\mathrm{d} z \int_\Omega e^{-v(z)}\mathrm{d} z }\Bigg|
\end{align*}
We will use the following lemma:
\begin{lemma} \label{proof:lemma_uniform}
If $(v_n)$ converges uniformly to $v$ in $\mathscr{C}(\Omega)$ with $\Omega$ compact then $e^{-v_n}$ converges uniformly to $e^{-v}$ in $\mathscr{C}(\Omega)$.
\end{lemma}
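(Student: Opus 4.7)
The plan is to reduce the claim to the observation that the map $t \mapsto e^{-t}$ is Lipschitz on any bounded interval, and then exploit compactness of $\Omega$ together with uniform convergence to confine all the $v_n$ and $v$ to a common bounded range.

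First I would note that since $v \in \mathscr{C}(\Omega)$ and $\Omega$ is compact, there exist constants $M_1 \leq M_2$ such that $M_1 \leq v(x) \leq M_2$ for all $x \in \Omega$. By uniform convergence, there exists $N$ such that for all $n \geq N$ and all $x \in \Omega$ one has $|v_n(x) - v(x)| \leq 1$, so $v_n(x) \in [M_1 - 1, M_2 + 1]$ uniformly in $x$ and in $n \geq N$. On the compact interval $I := [M_1 - 1, M_2 + 1]$, the function $t \mapsto e^{-t}$ has derivative bounded in absolute value by $L := e^{-(M_1 - 1)}$, hence is $L$-Lipschitz on $I$ by the mean value theorem.

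Then for every $n \geq N$ and every $x \in \Omega$, since both $v_n(x)$ and $v(x)$ lie in $I$, I would write
\begin{equation*}
|e^{-v_n(x)} - e^{-v(x)}| \leq L \, |v_n(x) - v(x)| \leq L \, \|v_n - v\|_\infty.
\end{equation*}
Taking the supremum over $x \in \Omega$ yields $\|e^{-v_n} - e^{-v}\|_\infty \leq L \, \|v_n - v\|_\infty$, and the right-hand side tends to $0$ by hypothesis, giving uniform convergence of $e^{-v_n}$ to $e^{-v}$.

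There is no serious obstacle here; the only subtlety is making sure that one first establishes a common bounded range for the $v_n$ (for $n$ large) before invoking the Lipschitz property of $\exp(-\cdot)$, since that property fails globally on $\mathbb{R}$. Compactness of $\Omega$ and uniform (as opposed to merely pointwise) convergence are exactly what make this confinement possible.
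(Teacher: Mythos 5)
Your proof is correct and follows essentially the same route as the paper's: both first confine $v$ and (for large $n$) all the $v_n$ to a common bounded interval using compactness of $\Omega$ and uniform convergence, and then exploit the regularity of $t \mapsto e^{-t}$ on that interval. The only cosmetic difference is that you invoke the Lipschitz property of $e^{-t}$ on the interval (giving the explicit bound $\|e^{-v_n}-e^{-v}\|_\infty \leq L\|v_n-v\|_\infty$), whereas the paper appeals to uniform continuity via Heine's theorem; your version is slightly more quantitative but the underlying idea is identical.
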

\begin{proof}[Proof of Lemma~\ref{proof:lemma_uniform}.]
 Let us denote by $f$ the real function $f: x \longrightarrow e^{-x}$. $f$ is continuous on $\Reals$. Because $v \in \mathscr{C}(\Omega)$ and $\Omega$ compact, we have, by Heine's theorem, that $v(\Omega) \subset \Reals$ is compact and therefore bounded. There exists $a$ and $b$ such that $v(\Omega) \subseteq [a,b]$. Moreover because $(v_n)$ converges uniformly to $v$, there exists $N$ such that $n \geq N \implies v_n(\Omega) \subseteq [a-1,b+1]$. $f$ being continuous on $\Reals$, it is uniformly continuous on the interval $[a-1,b+1]$ according to Heine's theorem. We deduce that $(f(v_n)) $ converges uniformly to $f(v)$.\end{proof}

On the other hand, the function $g:t \longrightarrow e^{-v(t)}$ is continuous on the compact $\Omega$ as a composition of continuous functions. By Heine's theorem, the image of $\Omega$ by $g$ is a compact and there exists $c, \ell \in \Reals^{+ *}$ such that $g(\Omega) \subset [c,\ell]$ with $c >0$. Because $ e^{-v_n}$ converges uniformly to $e^{-v}$ in $\mathscr{C}(\Omega)$, there exists $N_0$ such that $n \geq N_0 \implies e^{-v_n}(z) \geq \frac{c}{2}\; \forall z \in \Omega$.
As a consequence, we have for $n \geq N_0$: 

\begin{align*}
    \int_\Omega e^{-v_n(z)}\mathrm{d} z \int_\Omega e^{-v(z)}\mathrm{d} z & \geq \int_\Omega \frac{c}{2} \mathrm{d} z \int_\Omega c \mathrm{d} z = \frac{c^2}{2}|\Omega|^2
\end{align*}
with $|\Omega|$ the lebesgue measure of the compact $\Omega$. $\Omega$ being non-negligible, we have $|\Omega|>0$. Let $C_0 = \frac{2}{c^2 |\Omega|^2}$, we have shown that for $n \geq N_0$ we had: 

$$\Bigg| \frac{1}{\int_\Omega e^{-v_n(z)}\mathrm{d} z \int_\Omega e^{-v(z)}\mathrm{d} z }\Bigg| < C_0$$

Furthermore, one notes that: 

\begin{align*}
   & \Bigg| \left( e^{-v_n(x)} \int_\Omega e^{-v(z)}\mathrm{d} z \right) - \left(e^{-v(x)} \int_\Omega e^{-v_n(z)}\mathrm{d} z\right)\Bigg| \\
   &= \Bigg| \left( e^{-v_n(x)} \int_\Omega e^{-v(z)}\mathrm{d} z \right) - \left( e^{-v(x)} \int_\Omega e^{-v(z)}\mathrm{d} z \right) + \left( e^{-v(x)} \int_\Omega e^{-v(z)}\mathrm{d} z \right) - \left(e^{-v(x)} \int_\Omega e^{-v_n(z)}\mathrm{d} z\right)\Bigg| \\
   &\leq  \Bigg| \left( e^{-v_n(x)} - e^{-v(x)} \right) \int_\Omega e^{-v(z)}\mathrm{d} z \Bigg| + \Bigg| e^{-v(x)} \left( \int_\Omega e^{-v(z)}\mathrm{d} z - \int_\Omega e^{-v_n(z)}\mathrm{d} z\right)\Bigg| \\
   &\leq \Big| e^{-v_n(x)} - e^{-v(x)}\Big| \ell |\Omega|  +  \ell \Bigg| \int_\Omega e^{-v(z)}\mathrm{d} z - \int_\Omega e^{-v_n(z)}\mathrm{d} z \Bigg| \\
   &\leq \big \| e^{-v_n} - e^{-v}\big\|_\infty \ell |\Omega|  +  \ell \int_\Omega \Big|e^{-v(z)} - e^{-v_n(z)} \Big|\mathrm{d} z \\
   &\leq 2\ell|\Omega| \big \| e^{-v_n} - e^{-v}\big\|_\infty 
\end{align*}

Until now, we have shown that for $n \geq N_0$ we had, for $x \in \Omega$: 

$$|F^\Omega(v_n)(x) - F^\Omega(v)(x)| \leq C_0 2\ell|\Omega| \big \| e^{-v_n} - e^{-v}\big\|_\infty $$

Let $\varepsilon > 0$, let us define $\varepsilon_0 = \frac{\varepsilon}{C_0 2\ell|\Omega|} > 0$. Because $(e^{-v_n})$ converges uniformly to $e^{-v}$, there exists $N_1$ such that $n\geq N_1 \implies \| e^{-v_n} - e^{-v}\big\|_\infty < \varepsilon_0$. Let us denote $N = \max(N_0, N_1)$, we have that for $n\geq N$, 
$$|F^\Omega(v_n)(x) - F^\Omega(v)(x)| \leq \varepsilon $$
Because this is true for any $x \in \Omega$, we have $\big\|F^\Omega(v_n) - F^\Omega(v)\big\|_\infty \leq \varepsilon$. 
We just proved that $F^\Omega$ is continuous at $v$ for any $v \in \mathscr{C}(\Omega)$ ie $F^\Omega$ is continuous on $\mathscr{C}(\Omega)$ and in particular it is continuous on $\mathcal{M} \subset \mathscr{C}(\Omega)$.

We will now show that $H_{\rho}$ is continuous on $\mathcal{N}$. Let $\mu \in \mathcal{N}$ and let $(\mu_n)$ a family of densities from $\mathcal{N}$ that converges uniformly to $\mu$. For $n\in \mathbb{N}$, we denote by $\mathcal{P}(\mu_n)$ and $\mathcal{P}(\mu)$ the probability measures associated to the respective densities $\mu_n$ and $\mu$. Because $(\mu_n)$ converges uniformly to $\mu$, we have that $\mathcal{P}(\mu_n)$ converges in distribution to $\mathcal{P}(\mu)$. Then according to \cite{San15a} (Theorem 1.52), the family of Kantorovich potentials from $\rho$ to $\mathcal{P}(\mu_n)$ with value $0$ in $0$ that we denote $(f^\star_n)$ converges uniformly to the Kantorovich potential $f^\star$ from $\rho$ to $\mathcal{P}(\mu)$ whose value is $0$ in $0$. Therefore, the family of Brenier potentials $(H_{\rho}(\mu_n): t \rightarrow  \tfrac12\|t\|^2 - f^\star_n(t))$ converges uniformly to $H_{\rho}(\mu) = \tfrac12\|\cdot\|^2 - f^\star$. We conclude that $H_{\rho}$ is continuous on $\mathcal{N}$ and by composition of continuous functions $G_{\rho}^\Omega$ is continuous on $\mathcal{M}$. 

\subsection{Additional Proof for Proposition \ref{prop_gaussian}}
\label{subsec:appendix_prop_gaussian}
Let us observe that $\rho$ is the moment measure of $u$ if and only if $\nabla u$ is the Monge map from $\Gibbs_u$ to $\rho$. One can then consider quadratic functions $u_{r, \Delta}(x) = \frac{1}{2} (x-r)^T\Delta^{-1}(x-r)$ whose Gibbs distribution is $\Gibbs_u = \mathcal{N}(r, \Delta)$ and use the expression~\citep{gelbrich} of the OT map between two Gaussian distributions $\mathcal{N}(m_A, \Sigma_A)$ and $\mathcal{N}(m_B, \Sigma_B)$: 
$$T_{m_A, \Sigma_A, m_B, \Sigma_B}: x \mapsto m_B + M(x-m_A)$$
where $M=\Sigma_A^{-1/2}(\Sigma_A^{1/2} \Sigma_B \Sigma_A^{1/2})^{1/2}\Sigma_A^{-1/2}$, to solve the equation $$\nabla u_{r, \Delta}(x) = T_{r, \Delta, 0, \Sigma}(x)\;\; \forall x \in \Reals^d$$

One get:
$$\Delta^{-1}(x-r) =  M(x-r) \;\; \text{with}\; M=\Delta^{-1/2}(\Delta^{1/2} \Sigma \Delta^{1/2})^{1/2}\Delta^{-1/2}$$
By identification, we obtain that $\Delta^{-1} = M = \Delta^{-1/2}(\Delta^{1/2} \Sigma \Delta^{1/2})^{1/2}\Delta^{-1/2}$
which leads to $\Delta = \Sigma^{-1}$. It means that the functions in $\{ u_{r, \Sigma^{-1}}, r \in \Reals^d \}$ are fixed-point of $G_{\rho}$ with $\rho = \mathcal{N}(0, \Sigma)$. According to \cite{cordero2015moment}, the moment potential is unique up to translation, which implies that there are no other convex functions whose moment measure is $\rho$. 

\subsection{Proof of Proposition \ref{prop_gaussian_star}}
\label{subsec:appendix_prop_gaussian_conj}
The proof is similar to the additional proof of Proposition~\ref{prop_gaussian} provided Section~\ref{subsec:appendix_prop_gaussian}. Given $\rho = \mathcal{N}(m, \Sigma)$, we are looking for $w$ such that $\nabla w^* \# \Gibbs_w = \rho$ and $\Gibbs_{w} = \mathcal{N}(r, \Delta)$. This means that 
$w(x) = \frac{1}{2} (x-r)^T\Delta^{-1} (x-r)$ and $w^*(x) = \frac{1}{2} x^T\Delta x + x^Tr$. Moreover $\nabla w^*: x \rightarrow \Delta x + r$ must be the OT map between $\Gibbs_{w} = \mathcal{N}(r, \Delta)$ and $\rho = \mathcal{N}(m, \Sigma)$. The OT map between these two gaussians is known in closed form and is given by $x \rightarrow m + M(x-r)$ with $M=\Delta^{-1/2}(\Delta^{1/2} \Sigma \Delta^{1/2})^{1/2}\Delta^{-1/2}$. Solving $\Delta = \Delta^{-1/2}(\Delta^{1/2} \Sigma \Delta^{1/2})^{1/2}\Delta^{-1/2}$ gives $\Delta = \Sigma^{1/3}$ while solving $r = m - M r$ gives $r = (I_d + \Sigma^{1/3})^{-1} m$. We conclude that $\rho$ is the conjugate moment measure of the unique potential $w$ among those whose Gibbs distribution is Gaussian.

\section{Experiments in the 1D case where $\rho$ is known from samples}
\subsection{Proposed algorithm for moment measures}
\label{subsec:algo_1D}
For a probability measure $\rho\in\mathcal{P}(\mathbb{R}^d)$, we define the map $J_\rho : \mathcal{L}(\mathbb{R}^d) \rightarrow \mathcal{L}(\mathbb{R}^d)$ which to a potential $u$ associates the \citeauthor{Brenier1991PolarFA} potential from $\Gibbs_u$ to $\rho$ $$J_\rho(u) := \Brenier(\Gibbs_u, \rho)$$
The fixed point of $J_\rho$ correspond exactly to the moment potentials of $\rho$. This observation motivates the following fixed-point iteration scheme to compute a moment potential of $\rho$:

\begin{equation}
    u_{0} := \tfrac12\| \cdot\|^2;\quad \forall t\geq 1,\quad u_{t+1} := J_\rho(u_t),
    \label{eq:fixed-point}
\end{equation}
In the 1D case, the OT map between density functions $\mu$ and $\nu$ is known in closed form (see \S\ref{section:fixedpoint}) which enables the exact application of the fixed-point algorithm.

\subsection{Details on the 1D experiments}
We conducted two experiments, with $\rho$ defined as a mixture of Gaussian distributions. In the first experiment, $\rho$ is a mixture of four Gaussians: $\mathcal{N}(-0.1, 0.07)$, $\mathcal{N}(0.3, 0.1)$, $\mathcal{N}(0.3, 0.1)$, $\mathcal{N}(0.7, 0.15)$ with mixture weights $\frac{1}{7},\frac{3}{7},\frac{2}{7},\frac{1}{7}$, respectively. In the second experiment, $\rho$ is a mixture of the three Gaussians $\mathcal{N}(-0.8, 0.4)$, $\mathcal{N}(1.5, 0.6)$, $\mathcal{N}(3, 0.5)$, with weights $\frac{1}{2},\frac{1}{3},\frac{1}{6}$. To approximate these mixtures, we used 400,000 samples and estimated their histograms using 100,000 bins. This allowed us to compute the cumulative distribution and quantile functions using \texttt{numpy}'s \texttt{cumsum} and \texttt{quantile} functions, which were essential for estimating the OT map at each iteration of the fixed-point algorithm. The algorithm converged after 300 iterations. 

\subsection{Code for the 1D experiments}
\label{subsec:appendix_code_1D}
\paragraph{Code for estimating the moment potential in 1D}
$$ $$
\begin{lstlisting}[caption={Code for estimating the moment potential in 1D}, label={lst:ot_map}]
@jax.jit
def compute_ot_map(positions_source, freq_source, samples_target):
   cdf = jnp.cumsum(freq_source, axis=0)
   cdf = cdf/cdf[-1]
   quantile_fn = jax.vmap(lambda x: jnp.quantile(samples_target, x))
   inverse_cdf = quantile_fn(cdf)
   return (positions_source, inverse_cdf, freq_source)


def compute_next_measure(positions_source, freq_source, samples_target):
   positions_source, inverse_cdf, freq_source = compute_ot_map(positions_source, freq_source, samples_target)
   u = integrate.cumtrapz(inverse_cdf, positions_source, initial=0)
   weights = jnp.array(scipy.special.softmax(-u)) * jnp.sum(freq_source)
   return (positions_source, weights), (u, inverse_cdf)


# hyperparameters
nb_bins = 100000
bound_min = -11
bound_max = 11
nb_points = 400000
nb_bins_plot = 1000


# generate source and target distributions
rng = jax.random.PRNGKey(0)
rng_source, rng_target, rng = jax.random.split(rng, 3)


mu_target = jnp.array([-0.6, -0.1, 0.3, 0.7]) * 0.8
sigma_target = jnp.array([0.15, 0.07, 0.1, 0.15]) * 0.8
alpha_target = jnp.array((1,3,2,1))
sampler_target = multivariate_gaussians_1D(mu=mu_target, sigma=sigma_target, alpha=alpha_target)
samples_target = sampler_target.generate_samples(rng_target, nb_points)
samples_target = samples_target - jnp.mean(samples_target)


samples_source = jax.random.normal(rng_source, shape=(nb_points,))


# construct bins and histogram
freq_source, edges_source = jnp.histogram(samples_source, nb_bins, range=(bound_min, bound_max))
positions_source = (edges_source[:-1] + edges_source[1:]) / 2.0
freq_target, edges_target = jnp.histogram(samples_target, nb_bins, range=(bound_min, bound_max))
positions_target = (edges_target[:-1] + edges_target[1:]) / 2.0


for i in range(300):
   (positions_source, freq_source), (u, grad_u)  = compute_next_measure(positions_source, freq_source, samples_target)
\end{lstlisting}

\paragraph{Code for estimating the \textit{conjugate} moment potential in 1D: CMFGen}
$$ $$
\begin{lstlisting}[caption={Code for estimating the conjugate moment potential in 1D}, label={lst:conj_ot_map}]
@jax.jit
def compute_ot_map_star(positions_target, freq_target, samples_source):
   cdf = jnp.cumsum(freq_target, axis=0)
   cdf = cdf/cdf[-1]
   quantile_fn = jax.vmap(lambda x: jnp.quantile(samples_source, x))
   inverse_cdf = quantile_fn(cdf)
   return (positions_target, inverse_cdf, freq_target)


def compute_next_measure(positions_target, freq_target, samples_source):
   positions_target, inverse_cdf, freq_target = compute_ot_map_star(positions_target, freq_target, samples_source)
   u = integrate.cumtrapz(inverse_cdf, positions_target, initial=0)
   weights = jnp.array(scipy.special.softmax(-u)) * jnp.sum(freq_target)
   return (positions_target, weights), (u, inverse_cdf)

# hyperparameters
nb_bins = 100000
bound_min = -4
bound_max = 4
nb_points = 400000
nb_bins_plot = 1000


# generate source and target distributions
rng = jax.random.PRNGKey(0)
rng_source, rng_target, rng = jax.random.split(rng, 3)


mu_target = jnp.array([-0.6, -0.1, 0.3, 0.7]) * 0.8
sigma_target = jnp.array([0.15, 0.07, 0.1, 0.15]) * 0.8
alpha_target = jnp.array((1,3,2,1))
sampler_target = multivariate_gaussians_1D(mu=mu_target, sigma=sigma_target, alpha=alpha_target)
samples_target = sampler_target.generate_samples(rng_target, nb_points)
samples_target = samples_target - jnp.mean(samples_target)


samples_source = jax.random.normal(rng_source, shape=(nb_points,))


# construct bins and histogram
freq_source, edges_source = jnp.histogram(samples_source, nb_bins, range=(bound_min, bound_max))
positions_source = (edges_source[:-1] + edges_source[1:]) / 2.0


freq_source_plot, edges_source_plot = jnp.histogram(samples_source, nb_bins)
positions_source_plot = (edges_source_plot[:-1] + edges_source_plot[1:]) / 2.0


freq_target, edges_target = jnp.histogram(samples_target, nb_bins, range=(bound_min, bound_max))
positions_target = (edges_target[:-1] + edges_target[1:]) / 2.0




for i in range(300):
   rng_, rng = jax.random.split(rng, 2)
   (positions_source, freq_source), (u_star, grad_u_star) = compute_next_measure(positions_target, freq_target, samples_source)
   samples_source = jax.random.choice(rng_, positions_source, shape=(nb_points,), p=freq_source)
   samples_source = samples_source - jnp.mean(samples_source)
\end{lstlisting}
\subsection{Other figures}
\setlength{\tabcolsep}{0pt}
\renewcommand{\arraystretch}{0.0}
\begin{figure}[H]
    \centering
    \begin{tabular}{@{}c@{}c@{}c@{}c}
        \includegraphics[scale=0.1167, trim=7 0 0 0]{figures/1Dplots/mixture_gaussiennes_low_scale_star_single_1.pdf} &
         \includegraphics[scale=0.1167, trim=7 0 0 0]{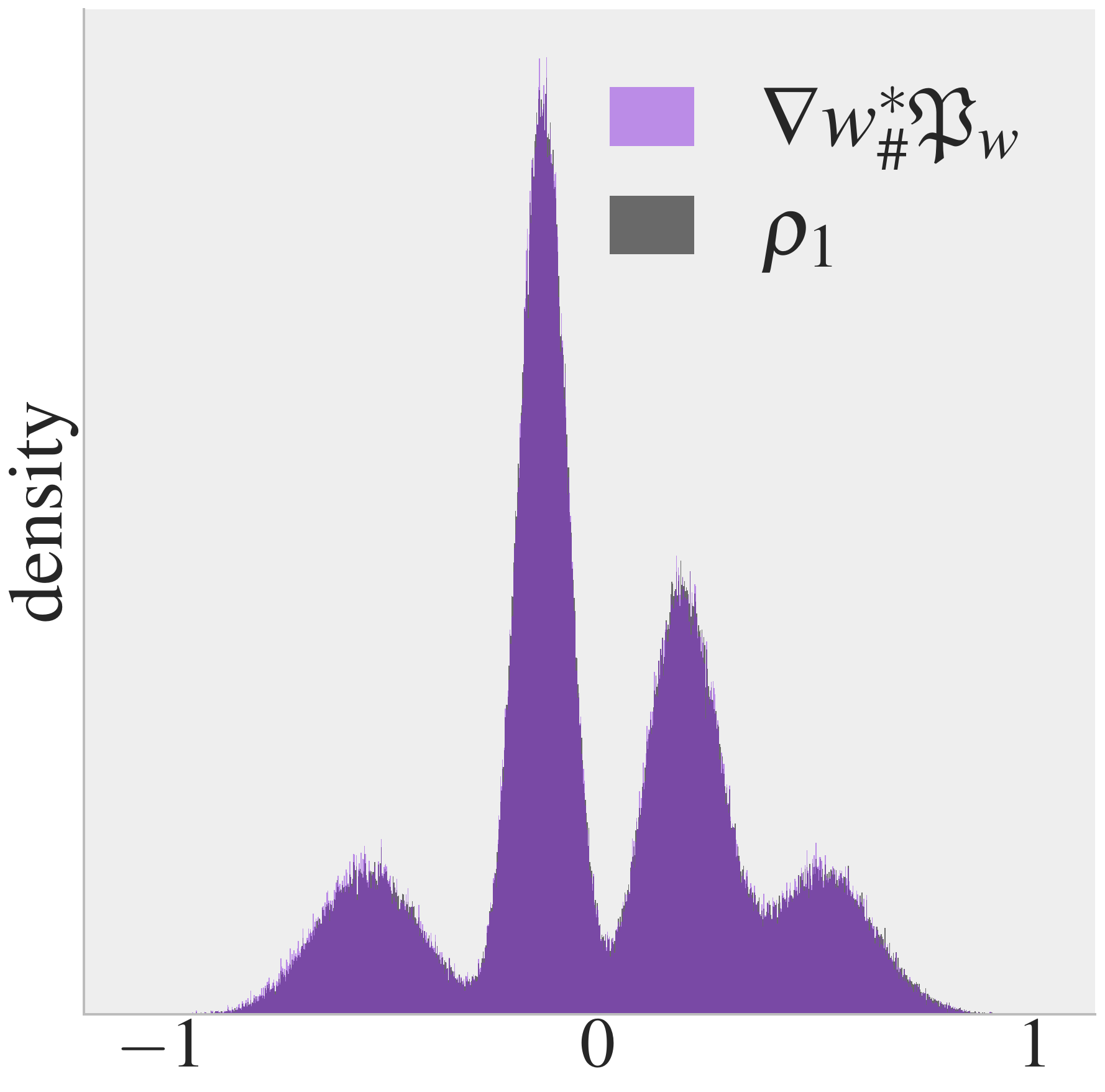} & \includegraphics[scale=0.1167, trim=7 0 0 0]{figures/1Dplots/mixture_gaussiennes_low_scale_single_1.pdf} &
         \includegraphics[scale=0.1167, trim=7 0 0 0]{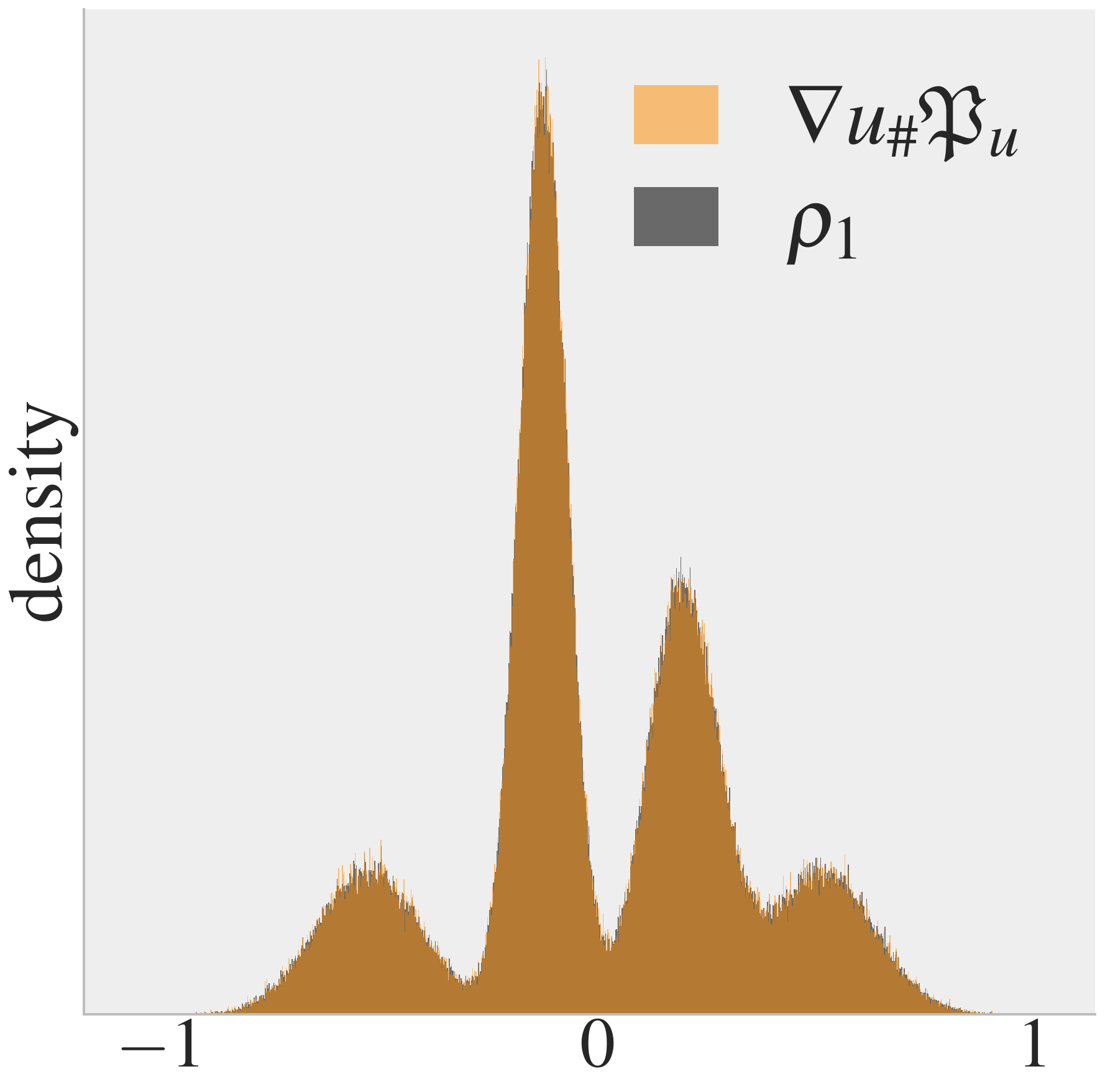}   
         \\
        \includegraphics[scale=0.1167, trim=7 0 0 0]{figures/1Dplots/mixture_gaussiennes_large_scale_star_single_1.pdf} &
         \includegraphics[scale=0.1167, trim=7 0 0 0]{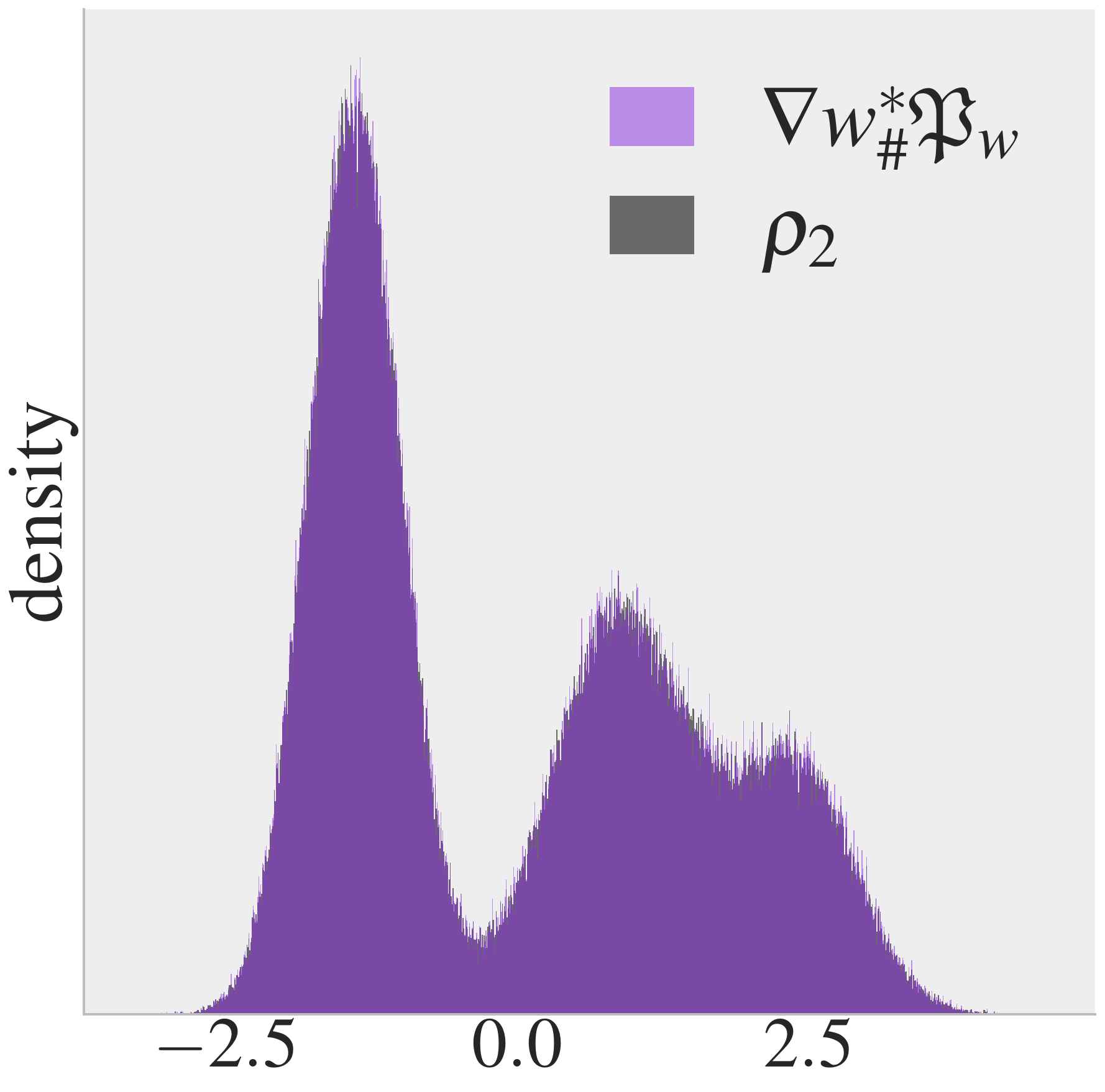} &
         \includegraphics[scale=0.1167, trim=7 0 0 0]{figures/1Dplots/mixture_gaussiennes_large_scale_single_1.pdf} &
         \includegraphics[scale=0.1167, trim=7 0 0 0]{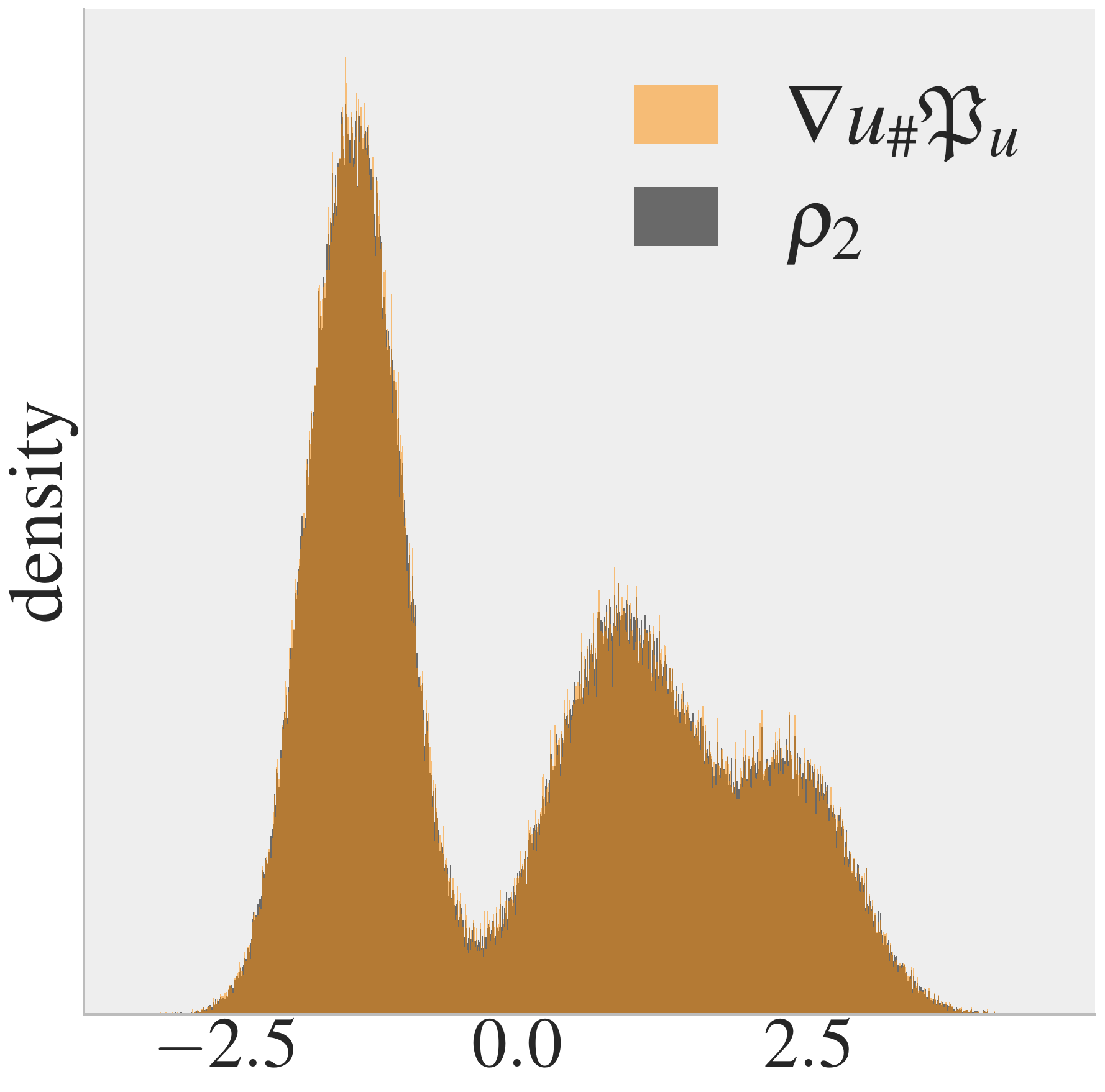} 
    \end{tabular}
    \caption{Comparison between the \GibbsMoment\ $\Gibbs_u$ and the \ConjGibbsMoment\ $\Gibbs_w$ for two mixtures of 1D Gaussian distributions, \textcolor{dimgray}{$\bm \rho_1$} and \textcolor{dimgray}{$\bm \rho_2$}. Pushforward densities $\nabla w^* \sharp \Gibbs_w$ and $\nabla u \sharp \Gibbs_u$  closely match the target distributions, illustrating that the fixed-point algorithms converged and succefully estimated the (conjugate) potentials.}
    \label{fig:one_d_test}
\end{figure}

\section{CMFMA experiments}
\subsection{1D experiments: Figures}

\begin{figure}[H]
    \centering
    \begin{tabular}{@{}c@{}c@{}c@{}c}
         \includegraphics[scale=0.21, trim=47 0 47 0cm, clip]{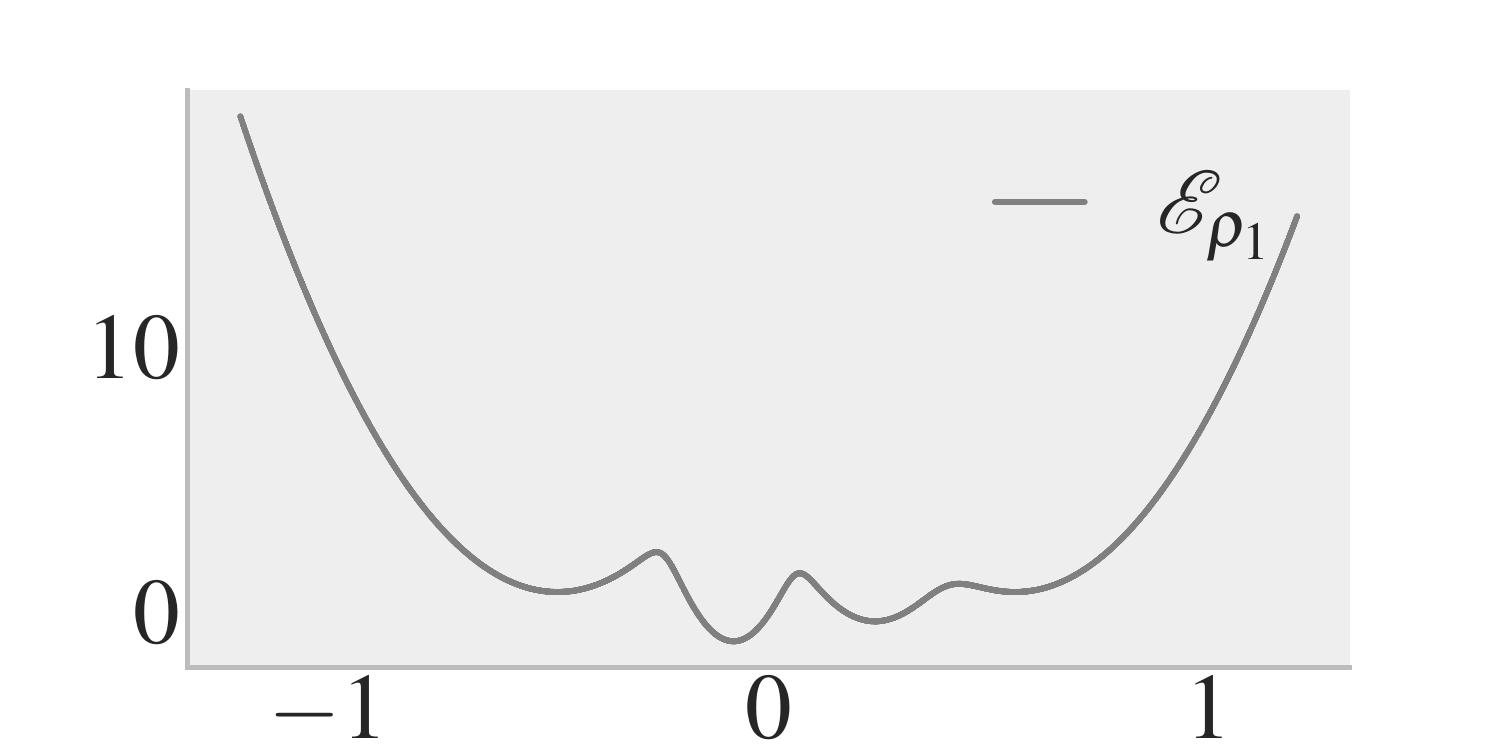} &
         \includegraphics[scale=0.21, trim=47 0 47 0cm, clip]{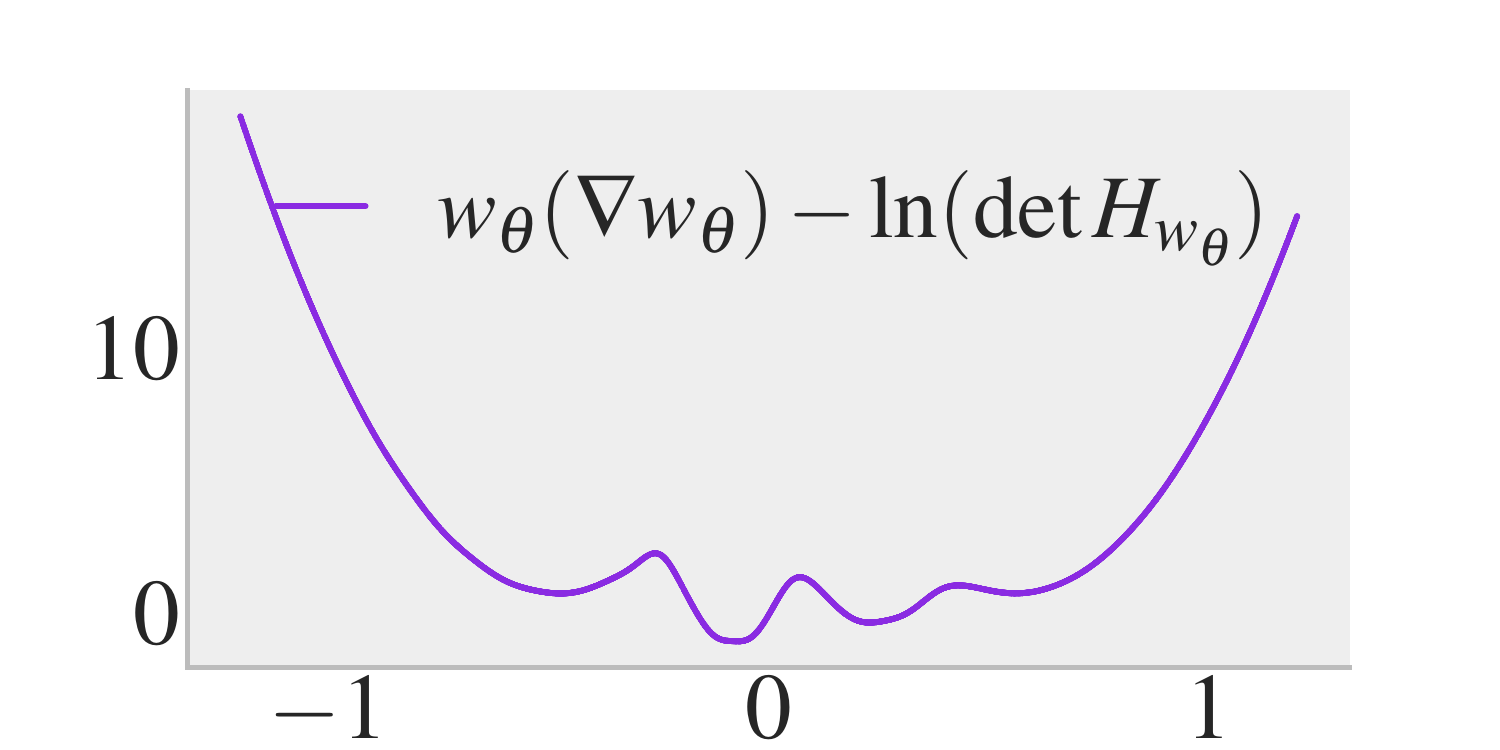} &
         \includegraphics[scale=0.21, trim=47 0 47 0cm, clip]{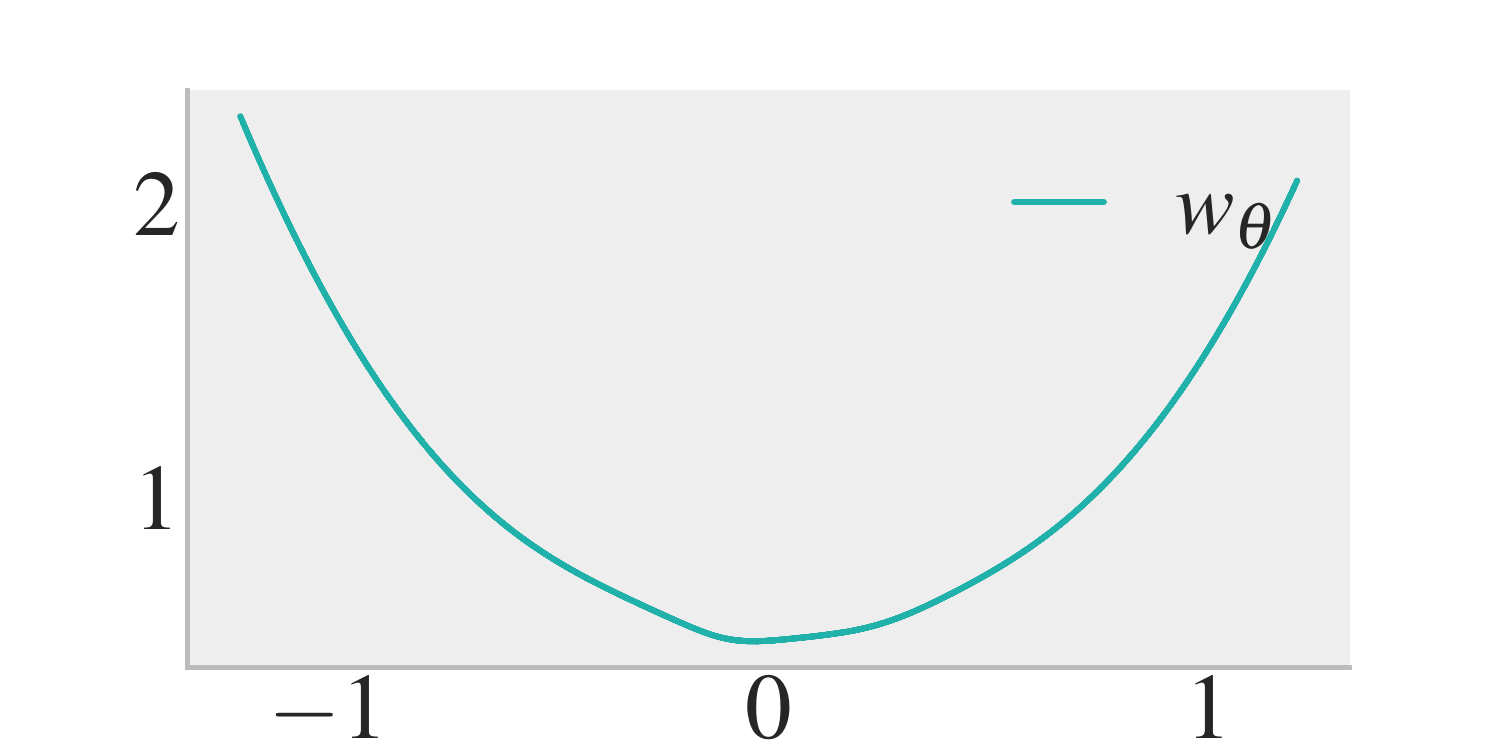}
    \end{tabular}
    \caption{Energy associated to the mixture of gaussians $\rho_1$ (left), learned energy (middle), associated potential (right).}
    \label{fig:sampling_1D_rho_1}
\end{figure}

\begin{figure}[H]
    \centering
    \begin{tabular}{@{}c@{}c@{}c@{}c}
         \includegraphics[scale=0.21, trim=47 0 47 0cm, clip]{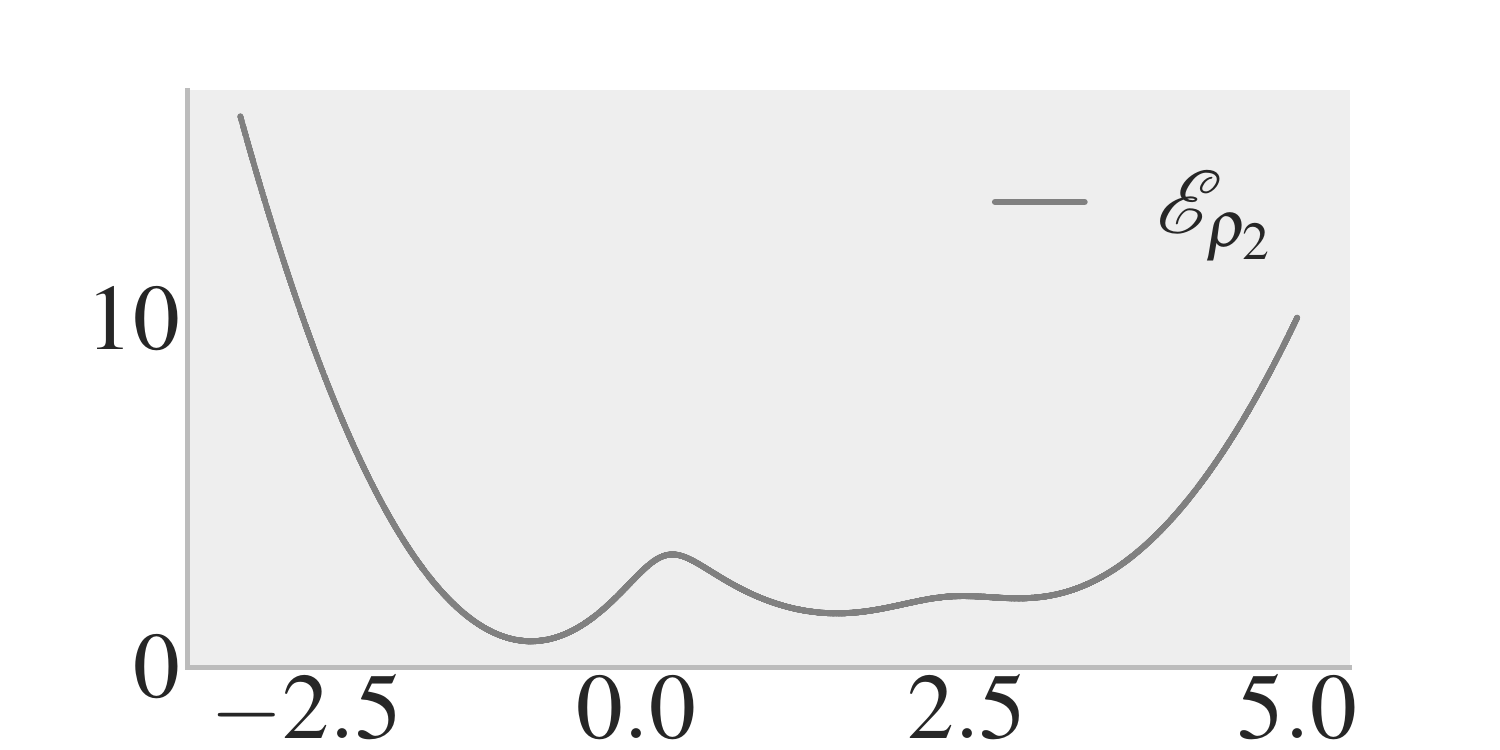} &
         \includegraphics[scale=0.21, trim=47 0 47 0cm, clip]{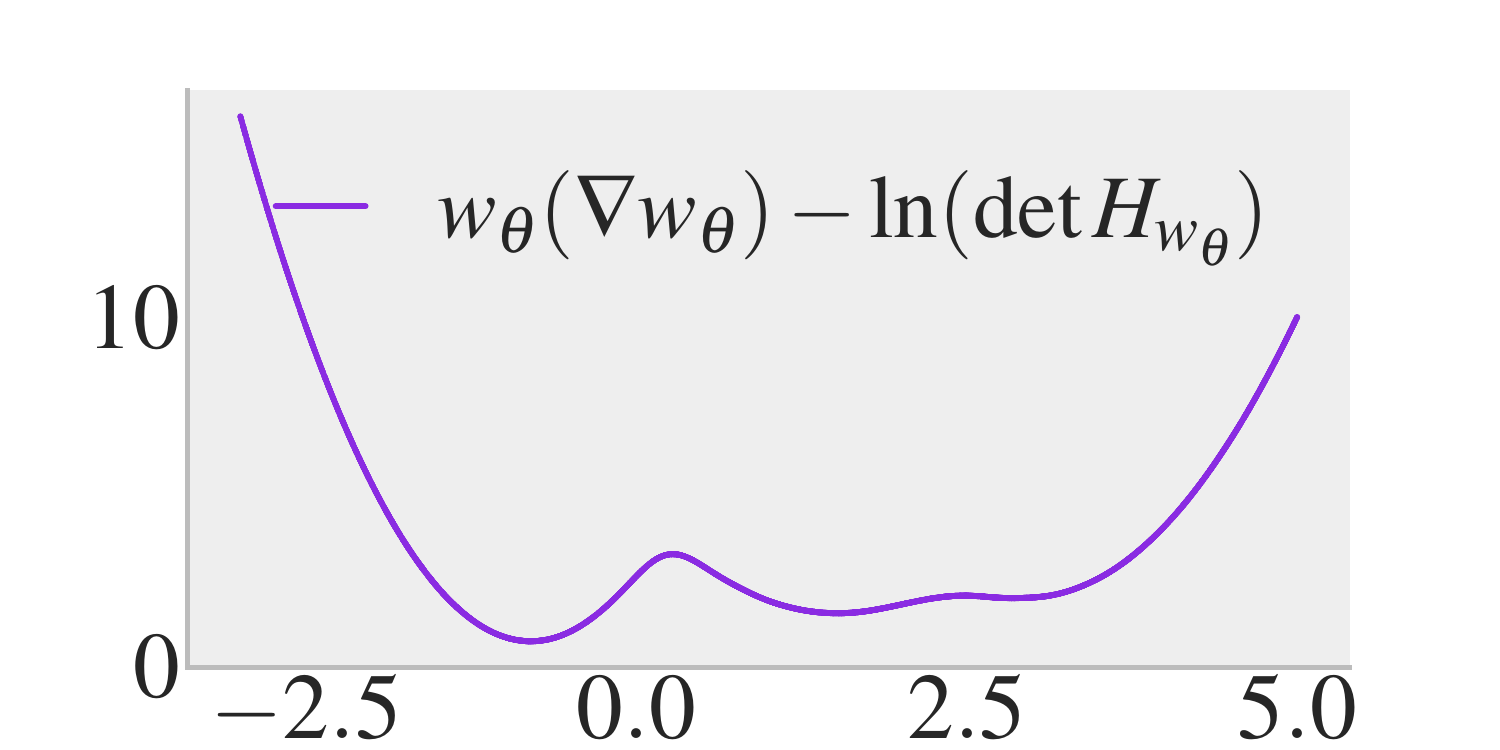} &
         \includegraphics[scale=0.21, trim=47 0 47 0cm, clip]{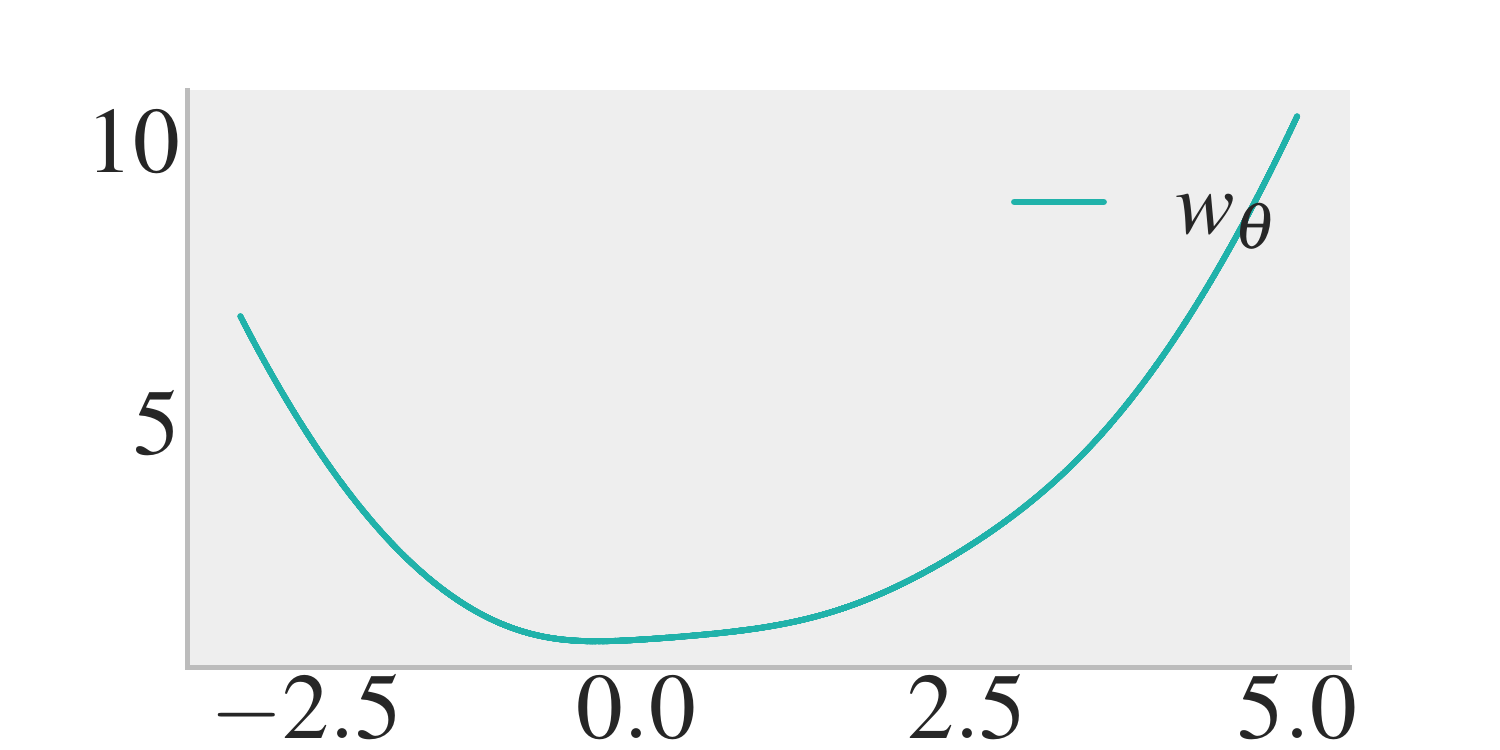}
    \end{tabular}
    \caption{Energy associated to the mixture of gaussians $\rho_2$ (left), learned energy (middle), associated potential (right).}
    \label{fig:sampling_1D_rho_2}
\end{figure}

\begin{figure}[H]
    \centering
    \begin{tabular}{cc}
         \includegraphics[scale=0.14]{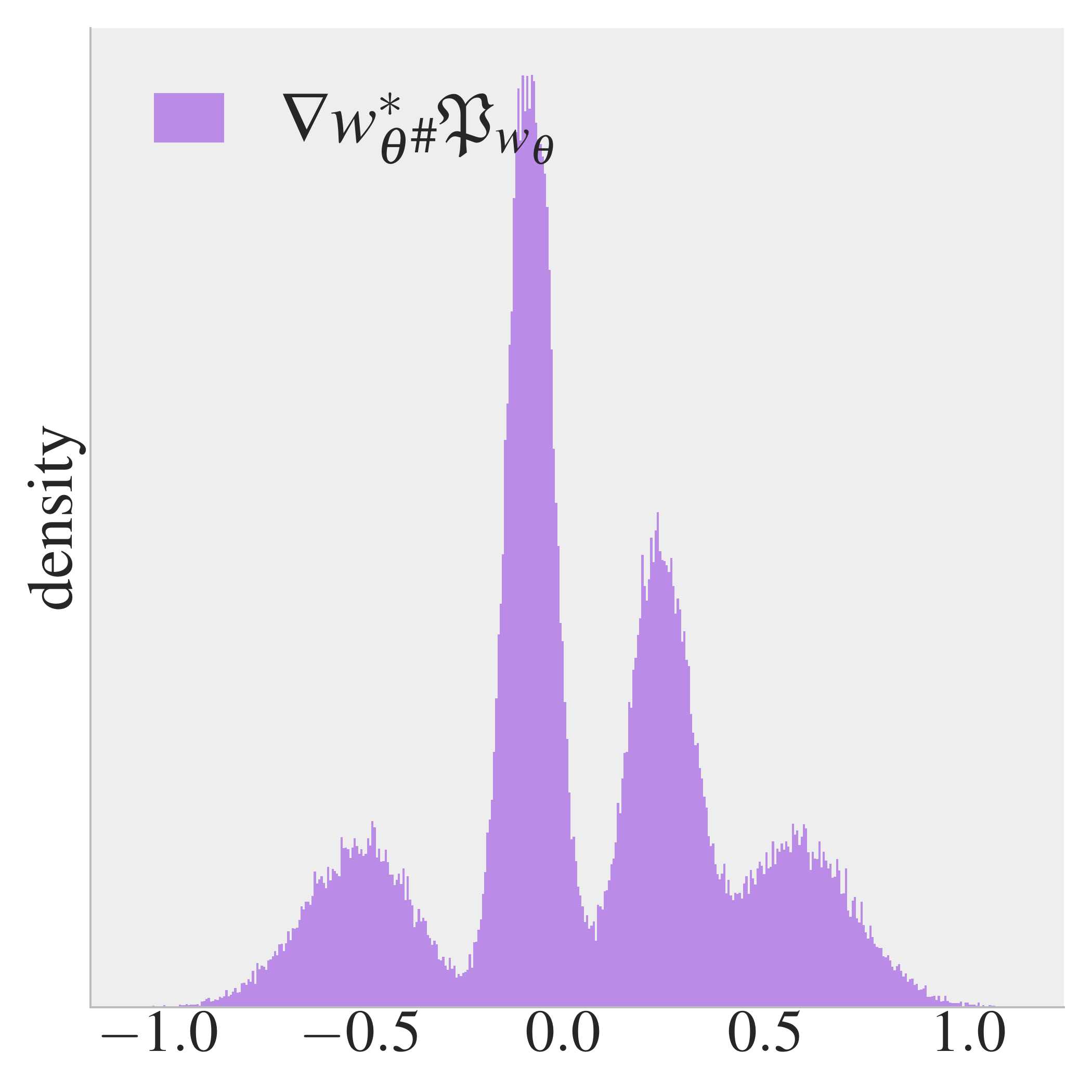}
         \includegraphics[scale=0.14]{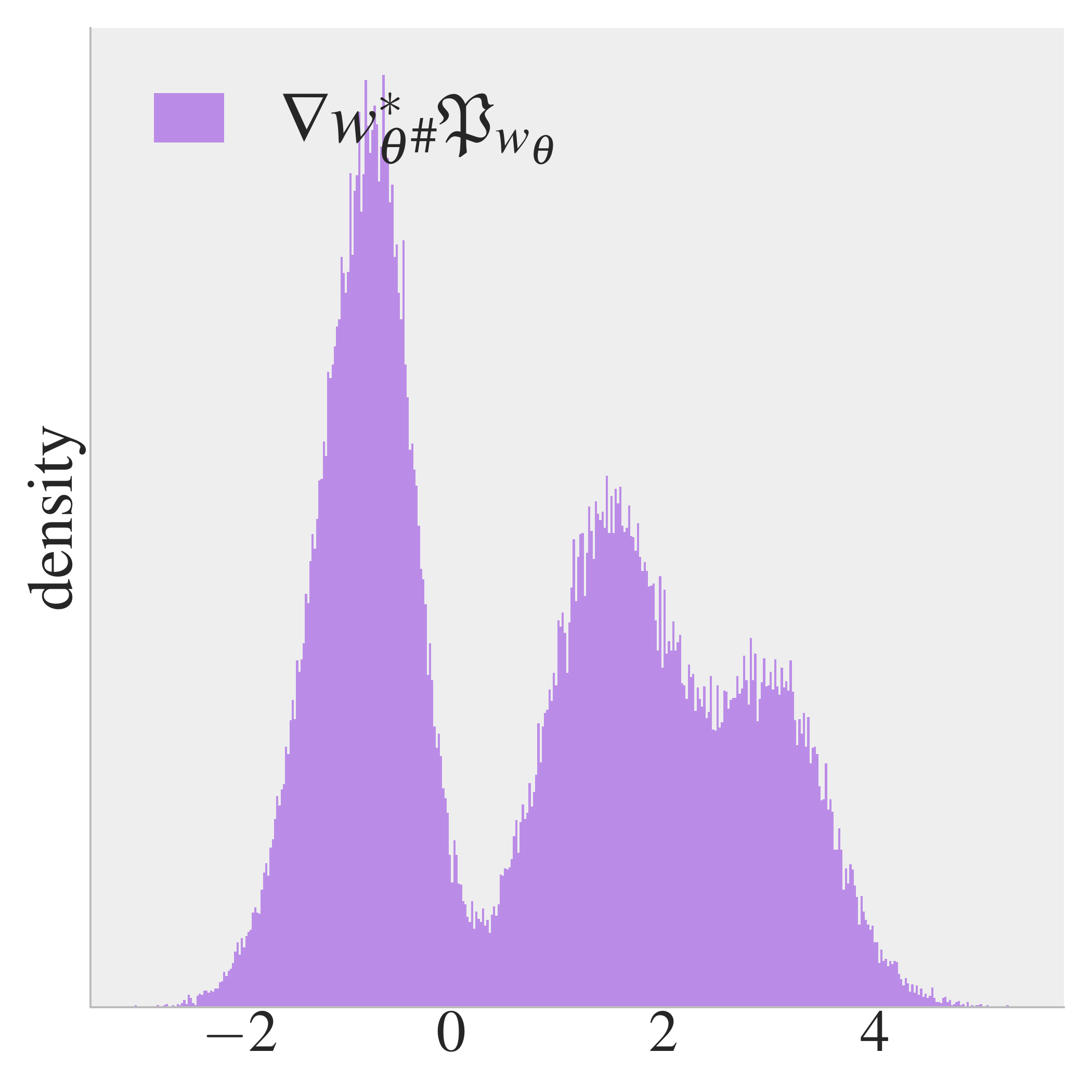}
    \end{tabular}
    \caption{The pushforward densities $\nabla w_\theta^* \sharp \Gibbs_{w_\theta}$ closely match the target distributions $\rho_1$ and $\rho_2$, illustrating that CMFMA succefully estimated the (conjugate) potentials.}
    \label{fig:sampling_1D}
\end{figure}

\subsection{2D experiments}
\label{subsec:appendix_energy_functions}
\begin{figure}[H]
    \centering
    \begin{tabular}{c}
         \includegraphics[scale=0.21, trim=250 60 150 30, clip]{figures/sampling_expes/sampler_12_final_.pdf} \\
         \includegraphics[scale=0.21, trim=250 60 150 30, clip]{figures/sampling_expes/sampler_1_final_.pdf} \\
         \includegraphics[scale=0.21, trim=250 60 150 30, clip]{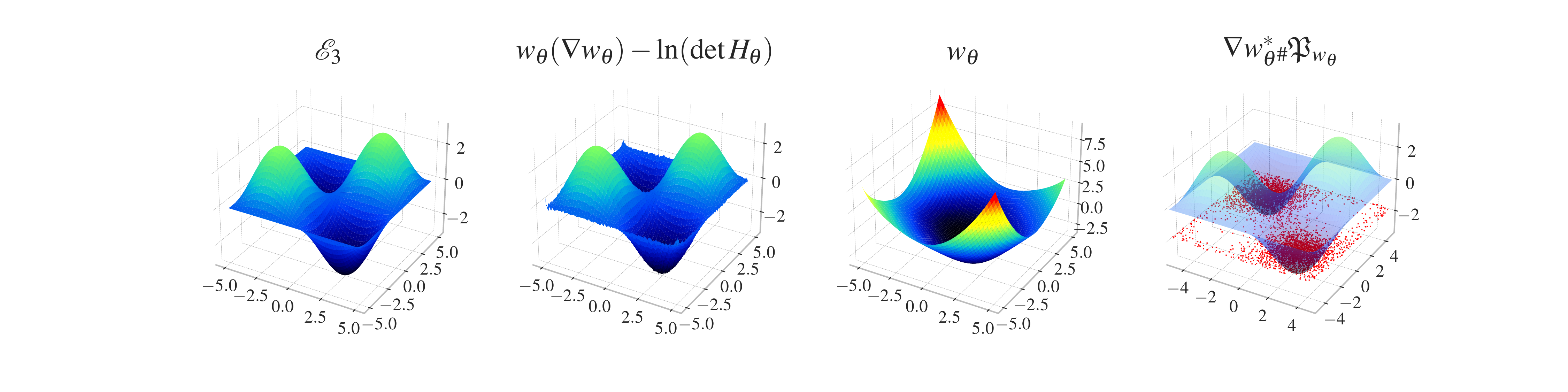}
    \end{tabular}
    \caption{\textbf{Learning the conjugate moment potential from an energy}. $\mathcal{E}_1$, $\mathcal{E}_2$ and $\mathcal{E}_3$ are learned by regression with CMFSamp. The second column shows the learned energy, the third column displays the corresponding conjugate moment potential, and the fourth column presents samples (in red) drawn from $\nabla w_\theta^* \sharp \Gibbs_{w_\theta}$.}
    \label{fig:sampling}
\end{figure}
The expressions of the three energies used for the 2D experiments with CMFMA are the following:
$$
\mathcal{E}_1(x, y) = -6 \cdot \sin\left( \left( \tfrac{3}{10}x \right)^2 + \left( \tfrac{3}{10}y \right)^2 \right)
$$

$$
\mathcal{E}_2(x, y) = \frac{(x^2 + y - 11)^2 + (x + y^2 - 7)^2}{100} - 2
$$

$$
\mathcal{E}_3(x, y) = 3 \cdot \cos\left( \tfrac{2\pi}{10}x - \tfrac{\pi}{2} \right) \cdot \cos\left( \tfrac{2\pi}{10}y - \tfrac{\pi}{2} \right)
$$

\section{CMFGen experiments}
\subsection{2D experiments: comparison with generative ICNNs}
In this experiment, we compare our method, CMFGen, with the standard ICNN-based generative pipeline (see~\citet{amos2023amortizing}) that maps a standard gaussian noise to data. The two approaches are evaluated using the Sinkhorn divergence between generated samples and reference samples from the target distribution, computed over batches of size $N=2048$. We repeat this evaluation across $100$ random seeds applied to both generated and target distributions, and the resulting divergences are summarized by the boxplots in Figure~\ref{fig:box_plots_all}. As a baseline, the leftmost column reports the 2-Sinkhorn divergence between two independent samples drawn from the target distribution. The $\varepsilon$ parameter used to compute the Sinkhorn divergence is adapted to the scale of the data. As in~\citet{pmlr-v235-vesseron24a}, we set $\varepsilon = 0.05 \cdot \mathbb{E}_{x,x' \sim \rho} |x - x'|^2$, where the expectation is estimated from two batches of 2048 samples each, drawn independently from the target distribution $\rho$.

\begin{figure}[H]
    \centering
    \begin{minipage}{0.48\textwidth}
        \centering
        \includegraphics[width=\linewidth]{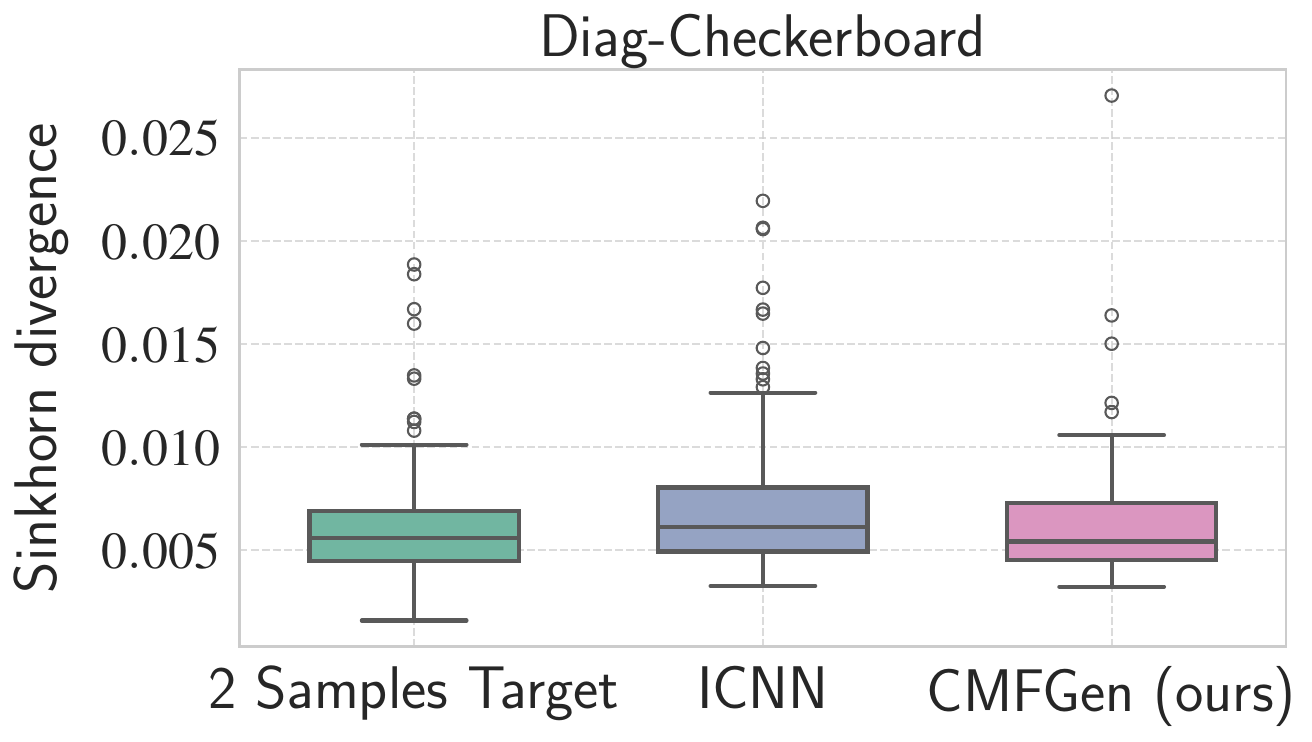}
    \end{minipage}
    \hfill
    \begin{minipage}{0.48\textwidth}
        \centering
        \includegraphics[width=\linewidth]{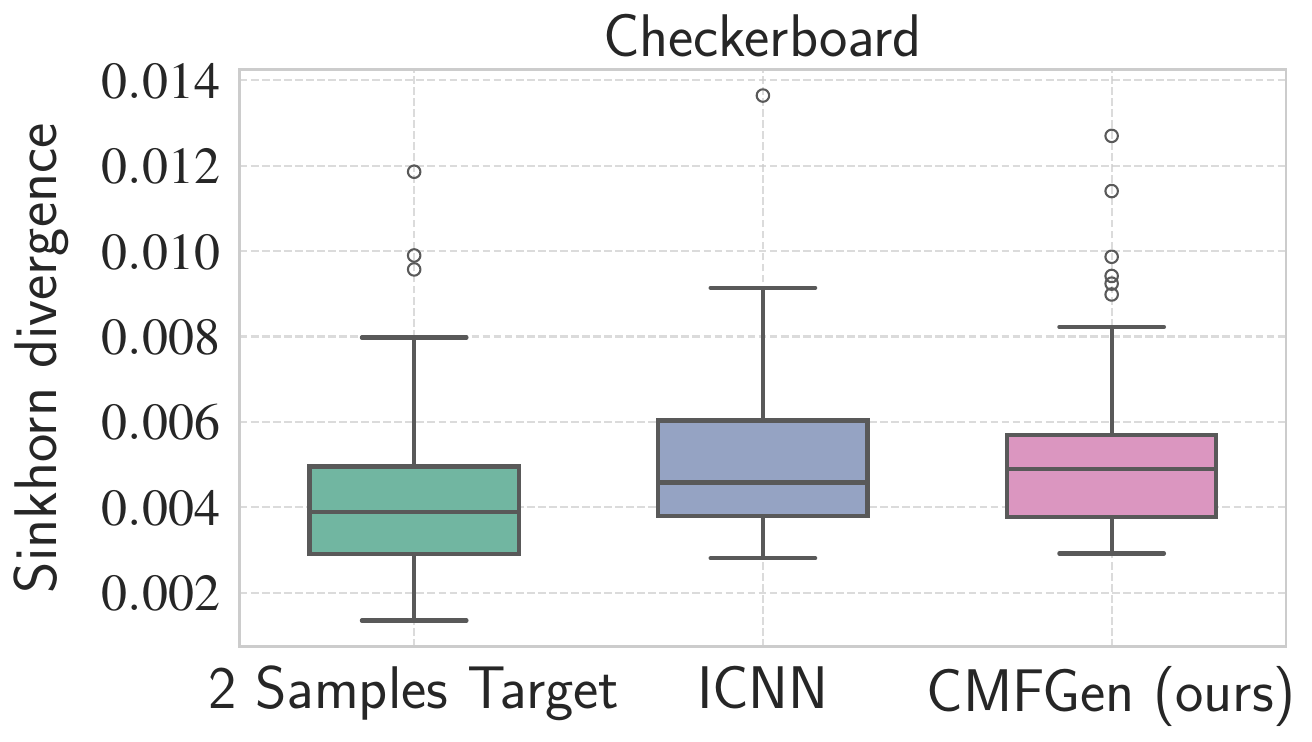}
    \end{minipage}
    
    \vspace{0.5cm}
    
    \begin{minipage}{0.48\textwidth}
        \centering
        \includegraphics[width=\linewidth]{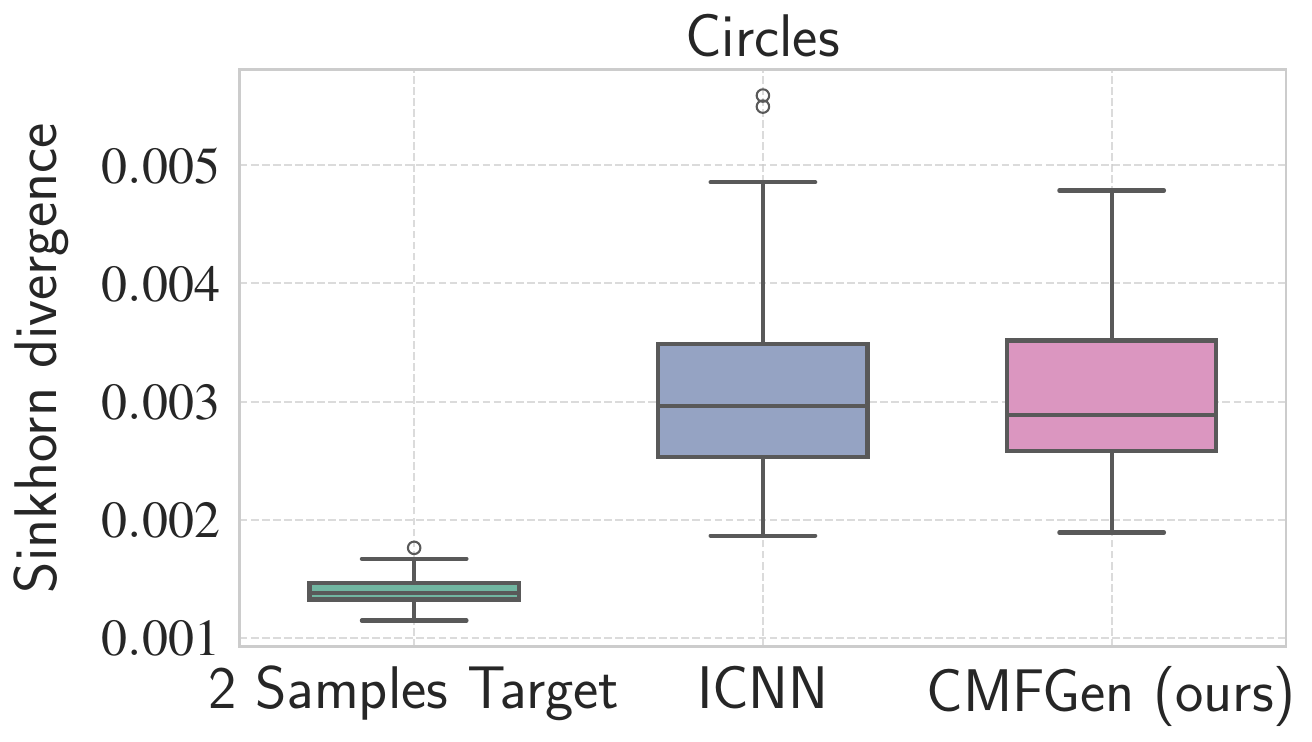}
    \end{minipage}
    \hfill
    \begin{minipage}{0.48\textwidth}
        \centering
        \includegraphics[width=\linewidth]{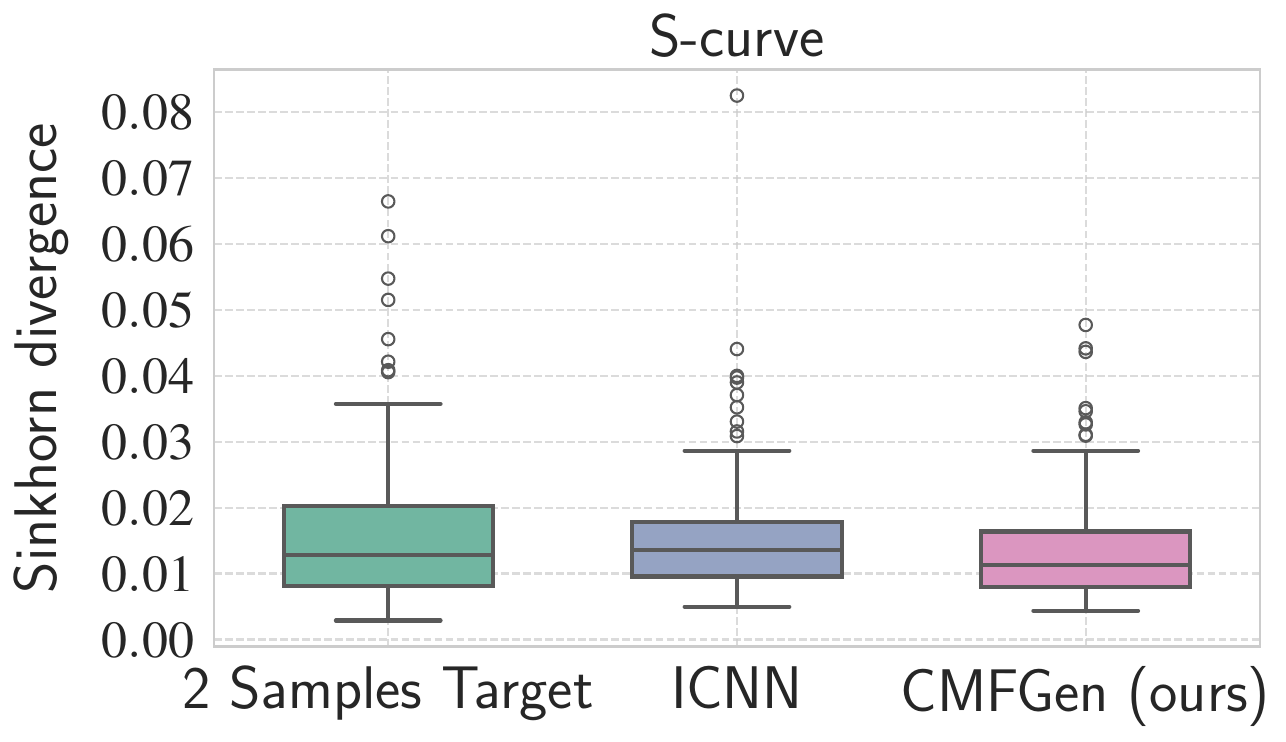}
    \end{minipage}

    \vspace{0.5cm} 
    
    \begin{tabular}{@{}c@{}}
         \includegraphics[scale=0.32]{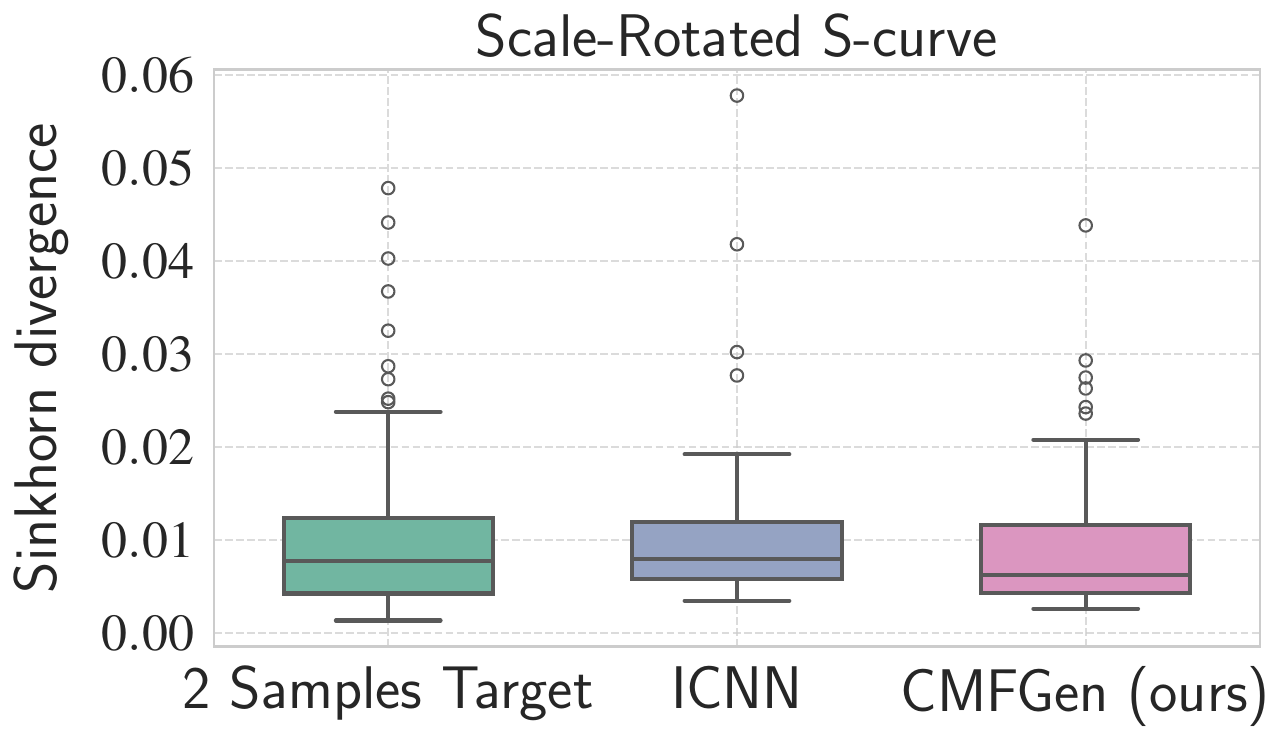} 
    \end{tabular}
    \caption{Comparison with generative ICNN across various datasets: Diagonal Checkerboard, Checkerboard, Circles, S-curve, and Scale-Rotated S-curve. Each boxplot shows the 2-Sinkhorn divergences between generated and target samples, computed over $N=2048$ samples and averaged over $100$ random seeds. For each dataset, the first boxplot shows the divergence between two independent samples from the target, the second for the generative ICNN, and the third for our method, CMFGen.}
    \label{fig:box_plots_all}
\end{figure}

\subsection{Cartoon dataset: comparison with generative ICNNs}
\vspace{-0.1cm}
\begin{figure}[H]
    \centering
    \begin{tabular}{@{}c@{}c}
         \includegraphics[scale=0.35]{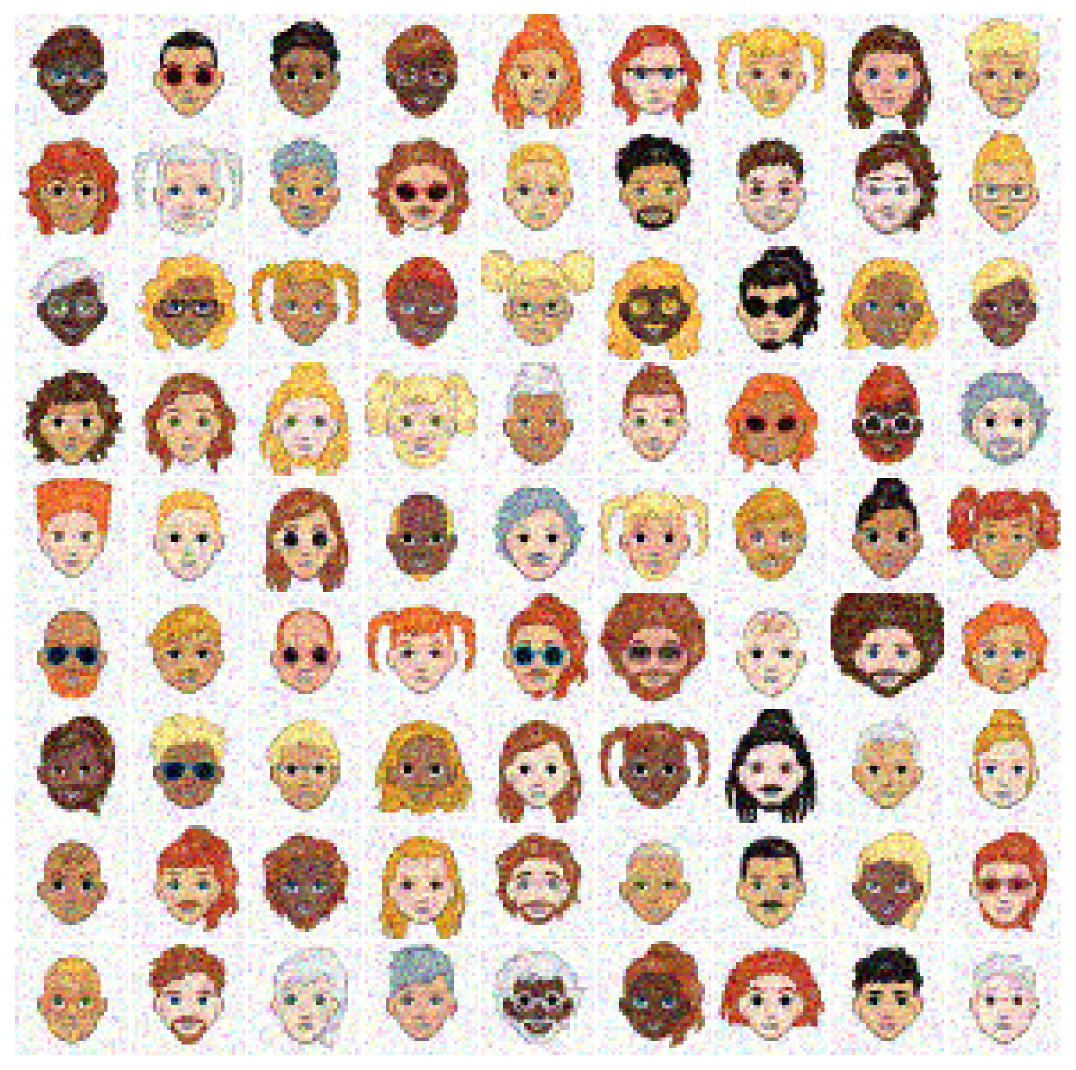} &
         \includegraphics[scale=0.35]{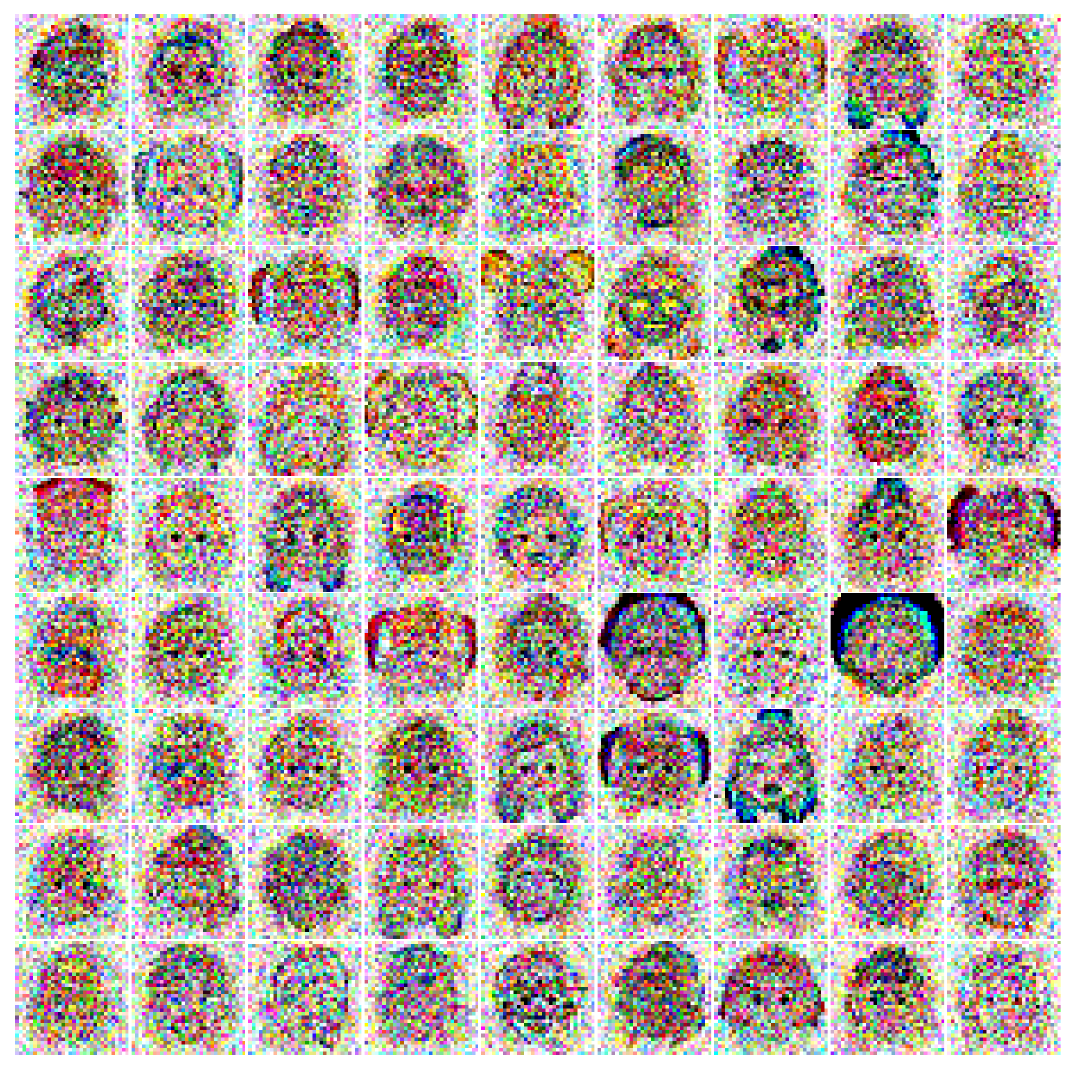} \\
         \includegraphics[scale=0.35]{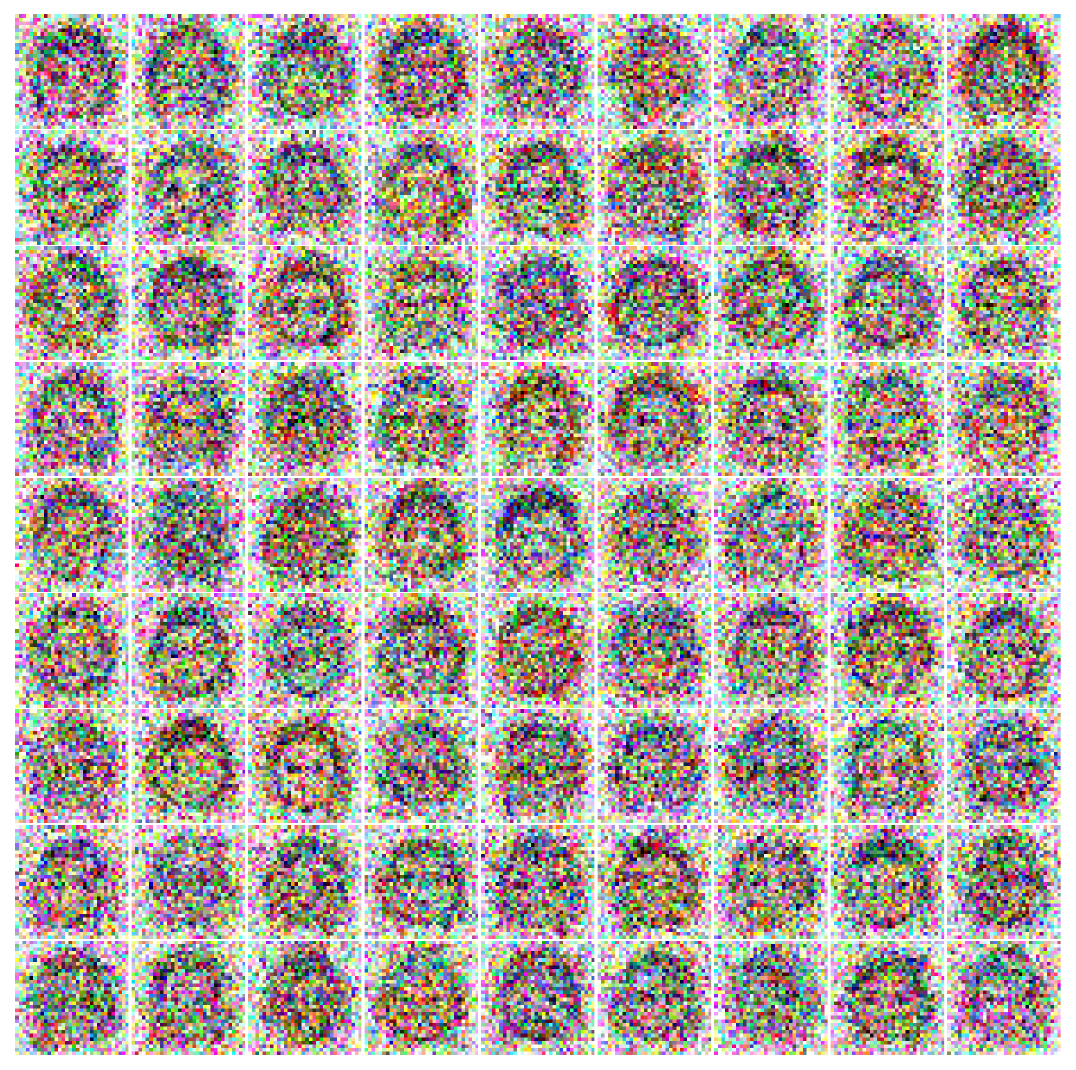} &
         \includegraphics[scale=0.35]{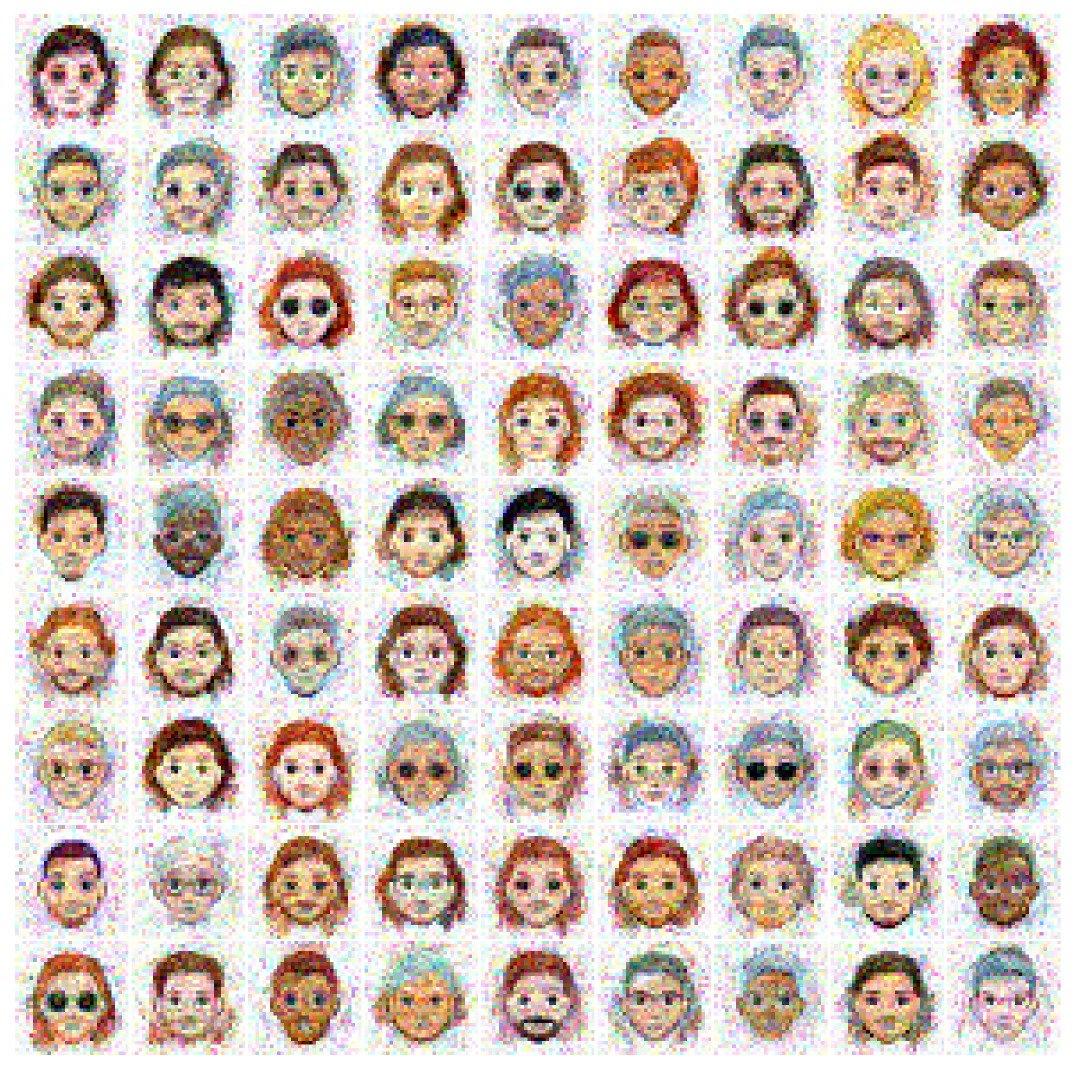} 
    \end{tabular}
    \caption{Additional images for the cartoon experiment using CMFGen. Top left: Images from the cartoon dataset; top right: Transported images using the map $\nabla w$; bottom left: Samples from the Gibbs distribution $\Gibbs_{w_\theta}$; bottom right: Samples from the distribution $\nabla w_{\theta}^* \sharp \Gibbs_{w_\theta}$.}
    \label{fig:fp_cartoon_bis}
\end{figure}
\vspace{-0.3cm}
\begin{figure}[H]
    \centering
    \begin{tabular}{@{}c@{}c}
         \includegraphics[scale=0.35]{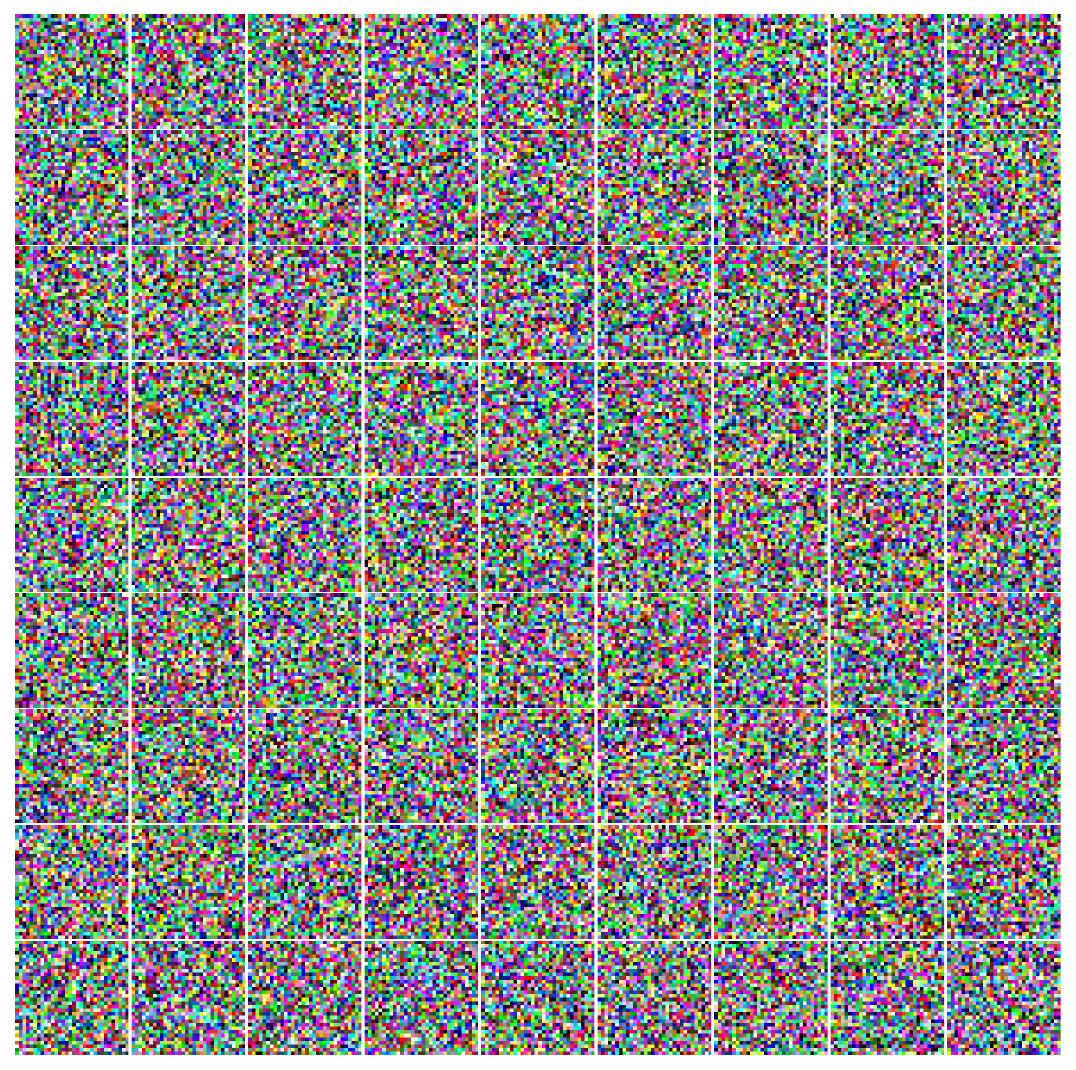} &
         \includegraphics[scale=0.35]{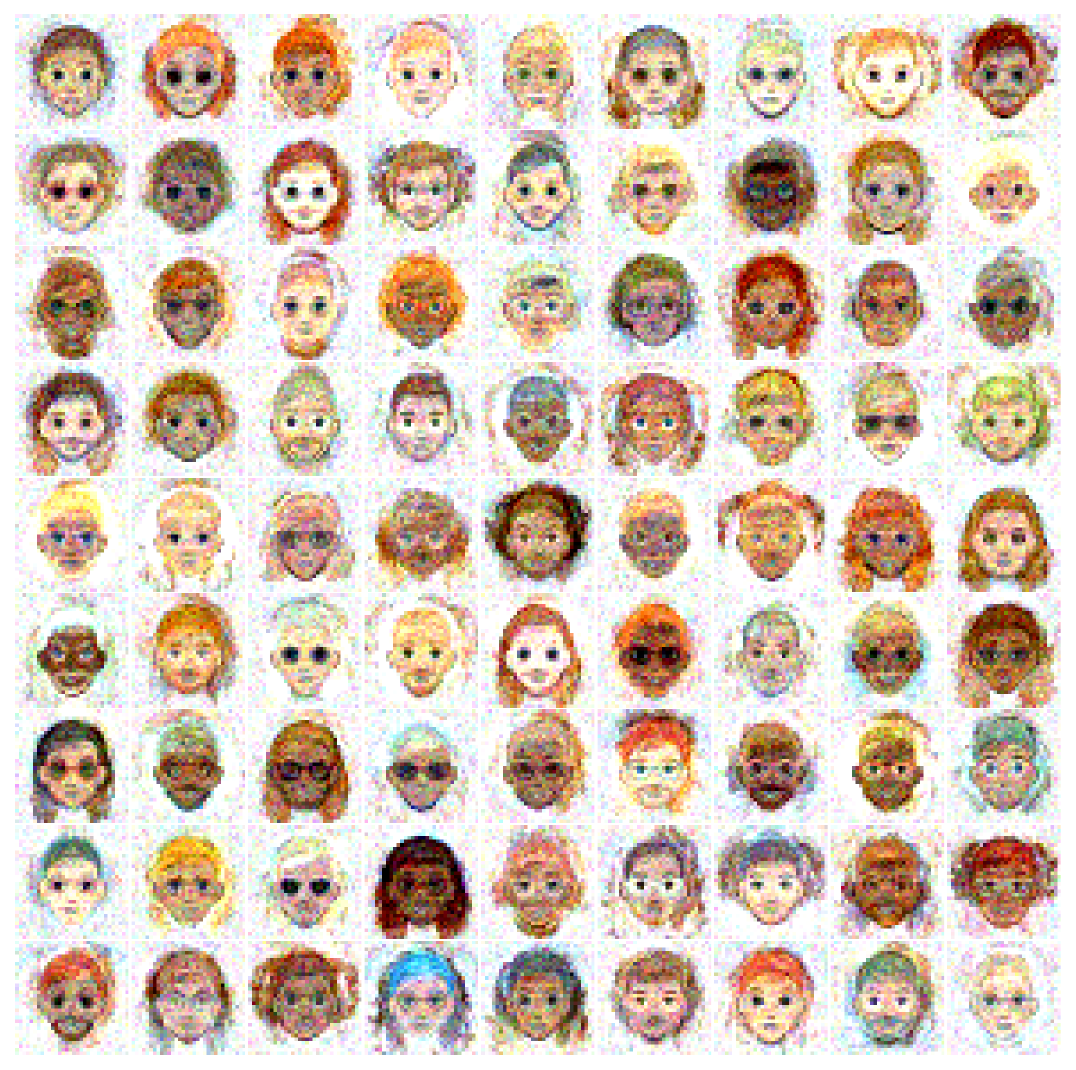} 
    \end{tabular}
    \caption{Images produced with a generative ICNN on the cartoon dataset. Left: Gaussian noise; Right: Generated samples by the ICNN with the same architecture as used by CMFGen, i.e., an ICNN with five hidden layers of size 512 and four quadratic input connections.}
    \label{fig:icnn_cartoon}
    \vspace{-0.3cm}
\end{figure}

\subsection{Image Reconstruction: MNIST and Cartoon dataset}

\begin{figure}[H]
    \centering
    \begin{tabular}{cc}
         \includegraphics[scale=0.40]{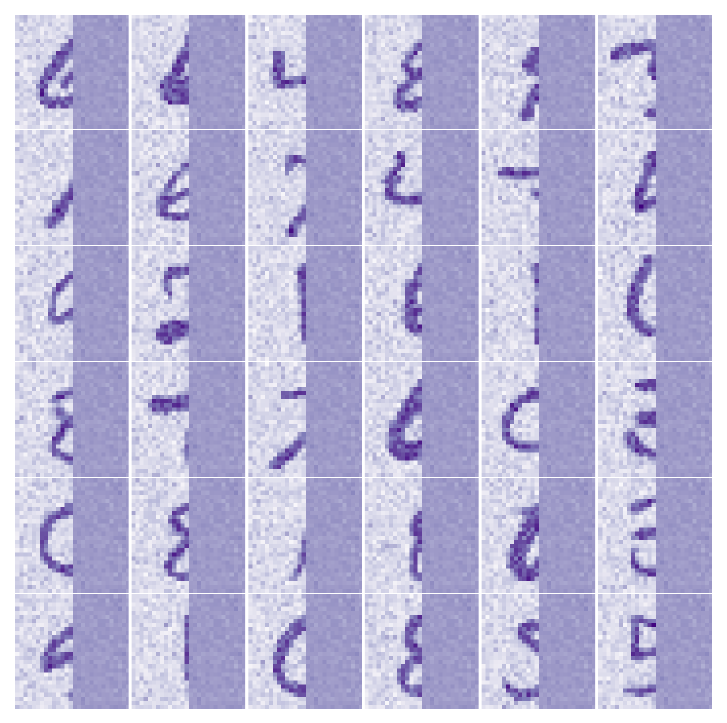} & \hspace{1.3cm} 
         \includegraphics[scale=0.40]{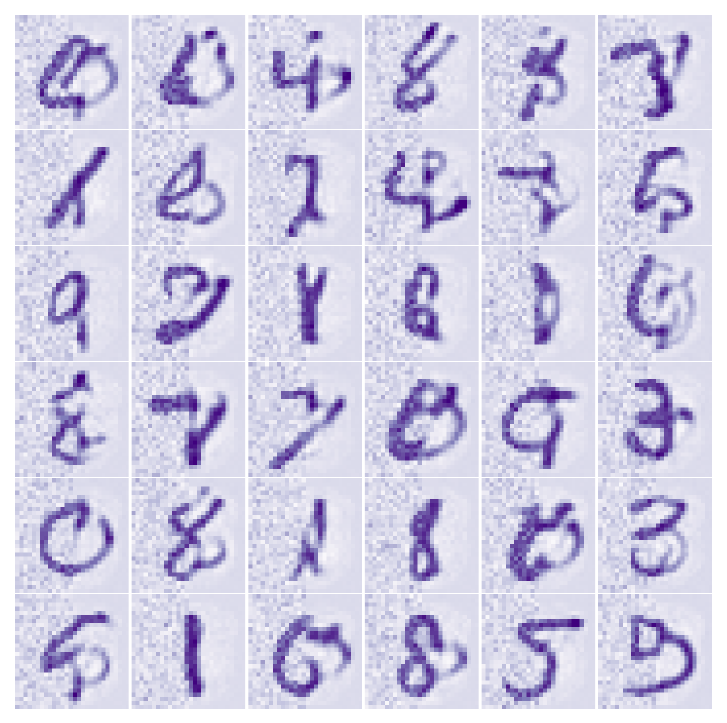}
    \end{tabular}
    \caption{The learned $w_\theta$ trained on the MNIST dataset is used for a post-processing task to recover the masked pixels. Gradient ascent is performed on the masked pixels to maximize the log-probability of the full image. Left: Masked images; Right: Reconstructed images.}
    \label{fig:recon_mnist}
\end{figure}

\begin{figure}[H]
    \centering
    \begin{tabular}{cc}
         \includegraphics[scale=0.45]{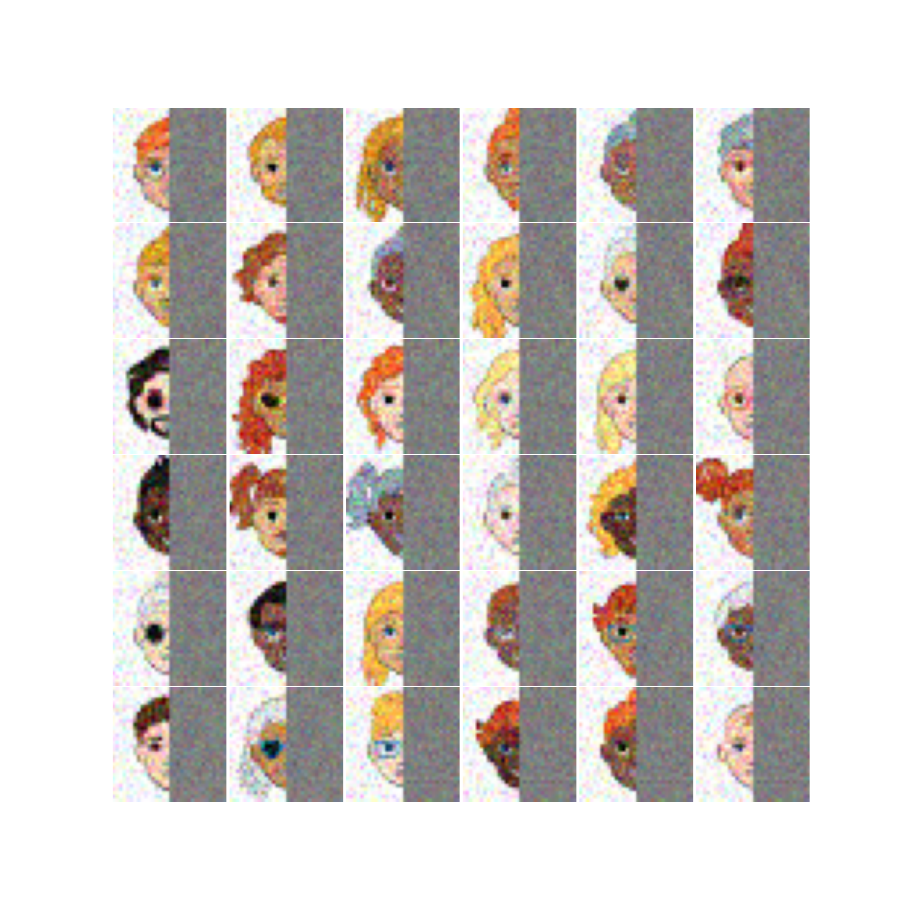} &
         \includegraphics[scale=0.45]{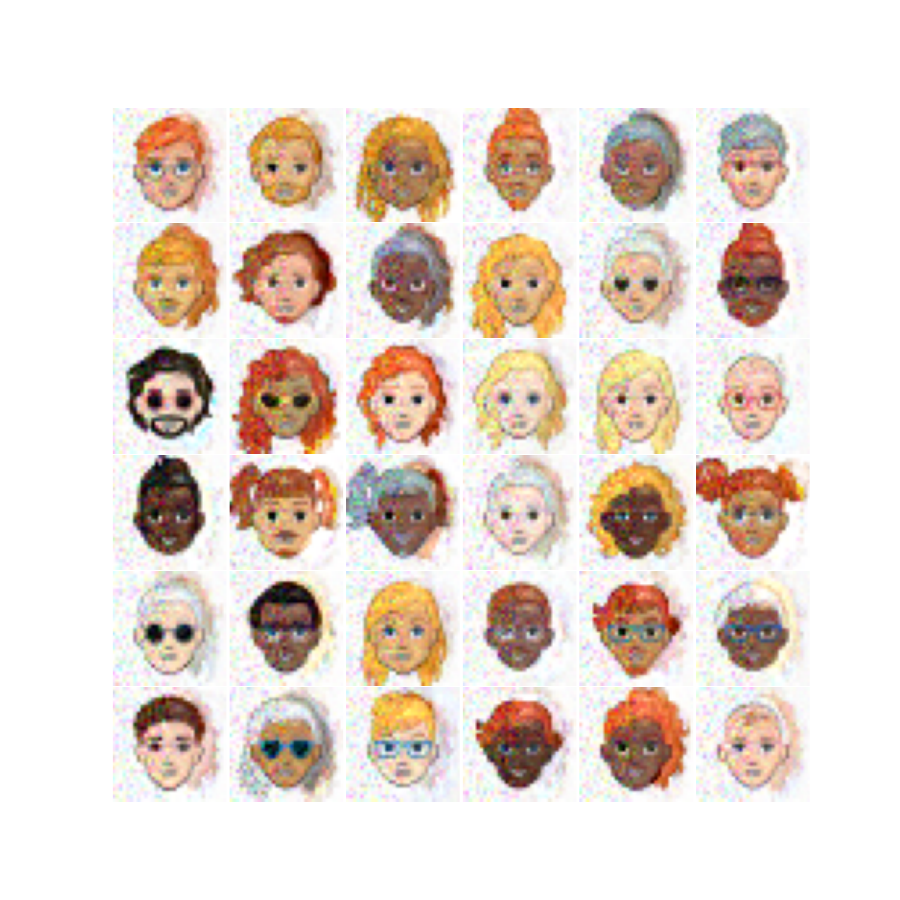}
    \end{tabular}
    \caption{The learned $w_\theta$ trained on the Cartoon dataset is used for a post-processing task to recover the masked pixels. Gradient ascent is performed on the masked pixels to maximize the log-probability of the full image. Left: Masked images; Right: Reconstructed images.}
    \label{fig:recon_cartoon_all}
\end{figure}

\section{Hyperparameters and infrastructure}
\label{subsec:appendix_hyperparams}
All runs have been made on a single GPU V100-32GB. 
\paragraph{Noisy MNIST and Cartoon Datasets}
We convolve the MNIST images with Gaussian noise of standard deviation $0.3$ to ensure the existence of the OT map from the MNIST distribution to the log-concave distribution $\Gibbs_w$, as guaranteed when the starting distribution is absolutely continuous~\citep{San15a}. Similarly, for the Cartoon dataset, we apply Gaussian noise with a standard deviation of $0.2$ to avoid significantly altering the images.

\subsection{Hyperparameters for CMFGen}
Our algorithm, CMFGen, introduces no additional hyperparameters compared to the generative ICNN trained with the solver from~\citet{amos2023amortizing}, except for the number of LMC steps used when sampling from $\Gibbs_{w_\theta}$. For a fair comparison, we adopt the same ICNN architecture for both CMFGen and the generative ICNN baseline shown in Figures~\ref{fig:box_plots_all}, \ref{fig:generate_mnist_vs_icnn}, and~\ref{fig:icnn_cartoon}. We follow the standard ICNN design from~\citet{pmlr-v235-vesseron24a}, which is known to perform well in generative settings. All the 2D experiments in Figure~\ref{fig:2d} were generated using a common set of hyperparameters, detailed in Table~\ref{tab:hparam2D}. Only the number of particles used (i.e., batch size) varies across experiments. For the 2D distributions Circles, S-curve, Scaled-Rotated S-curve and Diag-Checkerboard we used $1024$ particles, while for Checkerboard, a larger number of particles ($4096$) was used to ensure the stability of the algorithm. The hyperparameters used for the MNIST and Cartoon experiments are given in Table~\ref{tab:hparamMNIST}. Moreover, we adopt the approach of~\citet{amos2023amortizing}, where the cost of computing the conjugate at each training step is amortized by initializing the conjugate solver with the predictions of an MLP. This MLP is trained via regression on the outputs of the conjugate solver. The MLP consists of three hidden layers with 128 units each for the 2D experiments and three hidden layers with 256 units each for the MNIST and Cartoon experiment. It is trained using the Adam optimizer with default parameters, and a fixed learning rate of 1e{-4} for the 2D experiments and 5e{-4} for the MNIST experiment. As explained in~\citet{amos2023amortizing}, the predictions of the MLP are then refined with a conjugate solver whose hyperparameters are given Table~\ref{tab:hparamconj} and are kept consistent across all experiments with both CMFGen and the generative ICNN. It is important to note that the number of LMC steps listed in the tables refers to the steps taken to sample from $\Gibbs_{w_\theta}$ starting from uniform noise, as used to generate the figures. During training, however, we reuse the particles sampled from the previous gradient step and apply $200$ LMC steps to these particles to form the new batch. The number of LMC steps during training reflects a trade-off between computational efficiency and performance: using significantly fewer steps (e.g., $50$) results in degraded performance.
\begin{table}[H]
    \centering
    \renewcommand{\arraystretch}{1} 
    \begin{tabular}{>{\centering\arraybackslash}p{0.4\linewidth} | >{\centering\arraybackslash}p{0.55\linewidth}}
        \textbf{Hyperparameter} & \textbf{Value} \\
        \hline
        \multirow{2}{*}{ $w_{\theta}$ architecture} 
        & dense \\ 
        & $2 \shortrightarrow 128 \shortrightarrow 128 \shortrightarrow 128 \shortrightarrow 128 \shortrightarrow 128 \shortrightarrow 1$ \\
        & ELU activation functions \\
        \hline
        \multirow{4}{*}{ $w_{\theta}$ optimizer} 
        & Adam \\
        & step size $= 0.0001$ \\
        & $\beta_1 = 0.5$ \\
        & $\beta_2 = 0.5$ \\
        \hline
        \color{orange}Langevin sampling \color{black}
        & \color{orange} number of steps $= 10000$ (=$200$ during training)\color{black} \\
        \hline
        number of gradient steps & 1,000,000 
    \end{tabular}
    \vspace{0.5cm}
    \caption{Hyperparameters for CMFGen and the generative ICNN in 2D experiments. The only additional hyperparameter used by CMFGen, compared to the generative ICNN, is denoted in orange.}
    \label{tab:hparam2D}
\end{table}

\begin{table}[H]
    \centering
    \renewcommand{\arraystretch}{1} 
    \begin{tabular}{>{\centering\arraybackslash}p{0.4\linewidth} | >{\centering\arraybackslash}p{0.55\linewidth}}
        \textbf{Hyperparameter} & \textbf{Value} \\
        \hline
        \multirow{2}{*}{ $w_{\theta}$ architecture} 
        & dense \\ 
        & $2 \shortrightarrow 512 \shortrightarrow 512 \shortrightarrow 512 \shortrightarrow 512 \shortrightarrow 512\shortrightarrow 1$ \\
        & ELU activation functions \\
        \hline
        \multirow{4}{*}{ $w_{\theta}$ optimizer} 
        & Adam \\
        & step size $= 0.0005$ \\
        & $\beta_1 = 0.5$ \\
        & $\beta_2 = 0.5$ \\
        \hline
        \color{orange}Langevin sampling \color{black} &
        \color{orange}number of steps $= 10000$ (=$200$ during training) \color{black} \\
        \hline
        number of gradient steps & 50,000 \\
        \hline
        batch size & 512 \\
    \end{tabular}
    \vspace{0.5cm}
    \caption{Hyperparameters for CMFGen and the generative ICNN in the MNIST and Cartoon experiments. The only additional hyperparameter used by CMFGen, compared to the generative ICNN, is denoted in orange.}
    \label{tab:hparamMNIST}
\end{table}

\begin{table}[H]
    \centering
    \renewcommand{\arraystretch}{1} 
    \begin{tabular}{>{\centering\arraybackslash}p{0.4\linewidth} | >{\centering\arraybackslash}p{0.55\linewidth}}
        \textbf{Hyperparameter} & \textbf{Value} \\
        \hline
        \multirow{5}{*}{ Conjugate solver $w^*_{\theta}$} 
        & Adam \\
        & step size with cosine decay schedule\\
        & init value = 0.1 \\
        & alpha = 1e-4 \\
        & $\beta_1 = 0.9$ \\
        & $\beta_2 = 0.999$ \\
        & 100 iterations \\
    \end{tabular}
    \vspace{0.5cm}
    \caption{Hyperparameters for the conjugate solver used in CMFGen and for the generative ICNN (the same hyperparameters are used in all experiments).}
    \label{tab:hparamconj}
\end{table}

\subsection{Hyperparameters for CMFMA}
The hyperparameters for our CMFMA algorithm are listed in Tables~\ref{tab:hparam1D_CMFMA} and~\ref{tab:hparam2D_CMFMA}. We follow standard ICNN architectures, except for the 1D experiments, where we observed discontinuities in the learned potential when using ELU activations. This issue arises because the second derivative of ELU is not continuous, and the CMFMA regression loss involves the Hessian of the network $w_\theta$. To address this, we use Softplus activations with a beta coefficient of $10$.

\begin{table}[H]
    \centering
    \renewcommand{\arraystretch}{1} 
    \begin{tabular}{>{\centering\arraybackslash}p{0.4\linewidth} | >{\centering\arraybackslash}p{0.55\linewidth}}
        \textbf{Hyperparameter} & \textbf{Value} \\
        \hline
        \multirow{2}{*}{ $w_{\theta}$ architecture} 
        & dense \\ 
        & $2 \shortrightarrow 128 \shortrightarrow 128 \shortrightarrow 128 \shortrightarrow 128 \shortrightarrow 128 \shortrightarrow 1$ \\
        & Softplus($\beta=10.0$) activation functions \\
        \hline
        \multirow{4}{*}{ $w_{\theta}$ optimizer} 
        & Adam \\
        & step size $= 0.0001$ \\
        & $\beta_1 = 0.5$ \\
        & $\beta_2 = 0.5$ \\
        \hline
        number of gradient steps & 100,000 \\
        \hline
        batch size & 1024 
    \end{tabular}
    \vspace{0.5cm}
    \caption{Hyperparameters for CMFMA in 1D experiments.}
    \label{tab:hparam1D_CMFMA}
\end{table}

\begin{table}[H]
    \centering
    \renewcommand{\arraystretch}{1} 
    \begin{tabular}{>{\centering\arraybackslash}p{0.4\linewidth} | >{\centering\arraybackslash}p{0.55\linewidth}}
        \textbf{Hyperparameter} & \textbf{Value} \\
        \hline
        \multirow{2}{*}{ $w_{\theta}$ architecture} 
        & dense \\ 
        & $2 \shortrightarrow 128 \shortrightarrow 128 \shortrightarrow 128 \shortrightarrow 128 \shortrightarrow 128 \shortrightarrow 1$ \\
        & ELU activation functions \\
        \hline
        \multirow{4}{*}{ $w_{\theta}$ optimizer} 
        & Adam \\
        & step size $= 0.0001$ \\
        & $\beta_1 = 0.5$ \\
        & $\beta_2 = 0.5$ \\
        \hline
        number of gradient steps & 500,000 \\
        \hline
        batch size & 1024 
    \end{tabular}
    \vspace{0.5cm}
    \caption{Hyperparameters for CMFMA in 2D experiments.}
    \label{tab:hparam2D_CMFMA}
\end{table}

\end{document}